\newcommand{\maximum}{\text{max}}
\newcommand{\minimum}{\text{min}}
\newcommand{\Ba}{\B{a}}
\newcommand{\Bb}{\B{b}}
\newcommand{\Be}{\B{e}}
\newcommand{\Bp}{\B{p}}
\newcommand{\Bq}{\B{q}}
\newcommand{\Br}{\B{r}}
\newcommand{\Bt}{\B{t}}
\newcommand{\Bu}{\B{u}}
\newcommand{\Bv}{\B{v}}
\newcommand{\Bw}{\B{w}}
\newcommand{\Bx}{\B{x}}
\newcommand{\By}{\B{y}}
\newcommand{\BA}{\B{A}}
\newcommand{\BB}{\B{B}}
\newcommand{\BF}{\B{F}}
\newcommand{\BW}{\B{W}}
\newcommand{\Blambda}{\bm{\lambda}}
\newcommand{\Btau}{\bm{\tau}}
\newcommand{\BDelta}{\bm{\Delta}}
\newcommand{\BLambda}{\bm{\Lambda}}
\newcommand{\BPi}{\bm{\Pi}}
\newcommand{\BPhi}{\bm{\Phi}}
\newcommand{\BPsi}{\bm{\Psi}}
\newcommand{\myid}{\mathbf{I}} 
\newcommand{\myone}{\mathbf{1}} 
\DeclareMathAlphabet{\mathbit}{OML}{cmr}{bx}{it}
\DeclareMathAlphabet{\mathgoth}{U}{ygoth}{m}{n}
\DeclareMathAlphabet{\mathfrak}{U}{yfrak}{m}{n}
\DeclareMathAlphabet{\mathswab}{U}{yswab}{m}{n}
\newcommand{\B}[1]{\mathbit{#1}}
\newtheorem{myrmk}{Remark}
\DeclareMathOperator*{\argmin}{argmin}
\DeclareMathOperator*{\argmax}{argmax}
\DeclareMathOperator{\Prob}{\mathsf{Pr}}
\DeclareMathOperator{\Exp}{\mathsf{E}}
\newcommand{\diffd}{{\operatorname{d}}}
\DeclareMathOperator{\diag}{diag}
\DeclareMathOperator{\Transpose}{T}
\DeclareMathOperator{\Hermitian}{\dagger}
\newcommand{\Tr}{{\Transpose}}
\newcommand{\He}{{\Hermitian}}
\DeclareMathOperator{\var}{var}
\let\originalleft\left
\let\originalright\right
\renewcommand{\left}{\mathopen{}\mathclose\bgroup\originalleft}
\renewcommand{\right}{\aftergroup\egroup\originalright}
\renewcommand{\mathcal}{\mathscr}
\begin{document}
\title{Locally Differentially-Private Randomized Response\\for Discrete Distribution Learning}

\author{\name Adriano Pastore \email adriano.pastore@cttc.cat \\
       \addr Department of Statistical Inference for Communications and Positioning \\
       Centre Tecnol\`{o}gic de Telecomunicacions de Catalunya (CTTC/CERCA) \\
       Castelldefels, Barcelona, Spain
       \AND
       \name Michael Gastpar \email michael.gastpar@epfl.ch \\
       \addr School for Computer and Communication Sciences \\
       Ecole Polytechnique F\'ed\'erale de Lausanne \\
       Lausanne, Switzerland}

\editor{}

\maketitle

\begin{abstract}%

We consider a setup in which confidential i.i.d.\ samples $X_1,\dotsc,X_n$ from an unknown finite-support distribution~$\Bp$ are passed through $n$ copies of a discrete privatization channel (a.k.a.~\emph{mechanism}) producing outputs $Y_1,\dotsc,Y_n$. The channel law guarantees a local differential privacy of $\epsilon$. Subject to a prescribed privacy level $\epsilon$, the optimal channel should be designed such that an estimate of the source distribution based on the channel outputs $Y_1,\dotsc,Y_n$ converges as fast as possible to the exact value $\Bp$. For this purpose we study the convergence to zero of three distribution distance metrics: $f$-divergence, mean-squared error and total variation. We derive the respective normalized first-order terms of convergence (as $n \to \infty$), which for a given target privacy $\epsilon$ represent a rule-of-thumb factor by which the sample size must be augmented so as to achieve the same estimation accuracy as that of a non-randomizing channel. We formulate the privacy--fidelity trade-off problem as being that of minimizing said first-order term under a privacy constraint~$\epsilon$. We further identify a scalar quantity that captures the essence of this trade-off, and prove bounds and data-processing inequalities on this quantity. For some specific instances of the privacy--fidelity trade-off problem, we derive inner and outer bounds on the optimal trade-off curve.
\end{abstract}

\section{Introduction}

In the statistical analysis of privacy-sensitive data, the key challenge consists in randomizing database queries or post-processing (sanitizing) the dataset so as to render inferences about the {\em data} (values or labels) as difficult as possible while at the same time preserving the usefulness of the data for estimating {\em parameters} of the underlying distribution. The inherent trade-off between the conflicting goals of privacy and utility arises in a broad variety of situations, notably in medical surveys, customer profiling, consumer studies, population census, opinion polls or surveys in social sciences.

Specifically, we will be concerned with the {\em randomized response} (RR) technique. An early inspiration for the basic setup, which is depicted in Figure~\ref{fig:system} further below, dates back to~\citet{Wa65}: a common task in social sciences is to conduct surveys in which some answers might be stigmatizing (e.g., questions on drug use, sexual behavior, etc.). To overcome the respondent's potential reticence to answering faithfully, Warner proposed to perturb the interviewee's answers by having him/her secretly randomize the answers, in such way that not even the interviewer would learn the true answer.

In more recent years, a substantial body of work has developed around the celebrated notion of {\em differential privacy} (DP) introduced by~\citet{Dw06,DwMcNiSm06}. While the original purpose of DP was to provide strong privacy guarantees against a resourceful attacker who can access queries to a central database, \citet{KaLeNiRaSm11} and \citet{DuJoWa12} considered a decentralized privatization model referred to as {\em local privacy}, in which each data point is independently perturbed by a randomizing mechanism. Warner's RR scheme can be inscribed in this local privacy framework.

Notable other proposals in the spirit of this local, non-interactive privatization mechanism include~\citet{AgSr00}, in which the authors propose a procedure to build a decision-tree classifier from perturbed training data that performs close to a classifier built from the original data. \citet{WaZh10} study the exponential rate at which certain estimates of continuous distributions from noisy samples concentrate in a small ball (in mean-square loss or Kolmogorov-Smirnov distance) centered around the true distribution. \citet{DuJoWa12} study minimax learning rates of distribution parameters from privatized samples from the viewpoint of statistics.
In the context of locally private hypothesis testing, \citet{GaRo17} study privatized chi-square tests for goodness of fit and independence testing.
Also worth mentioning as a new research direction is the work by \citet{HuKaChSaRa17}, who introduced the concept of {\em generative adversarial privacy} which is inspired by the recent invention of generative adversarial networks: the confidential dataset (in a \emph{non-local} privacy context) is used to train a generative neural network (against an adversary), which then creates plausible new data samples that emulate the original distribution without disclosing any sample from the training set.

For discrete and finitely supported distributions, the work of~\citet{KaOhVi14} has shown that a finite class of privatization mechanisms called {\em staircase mechanisms} are optimal among all $\epsilon$-private mechanisms for a constrained $f$-divergence maximization problem related to private hypothesis testing. In particular, they show that a simple mechanism, which we call the {\em step mechanism}---defined in Equation~\eqref{W_star} in the present article---which they refer to simply as {\em randomized response} (RR), is optimal for their problem in the low privacy regime.
Interestingly, this mechanism appears in other contexts as well: for instance in the distributed computation problem studied by~\citet{KaOhVi15}, this mechanism turns out to be optimal in a fairly general sense.

An alternative (and complementary) privatization scheme to RR has been more recently introduced under the name of Randomized Aggregatable Privacy-Preserving Ordinal Response (RAPPOR) scheme \citep{ErPiKo14,DuWaJo13}. While the conventional RR scheme assumes that the source variable and the privatized share a same finite alphabet, the privatized representation in a RAPPOR scheme lives on an alphabet whose size grows exponentially in the source alphabet size, thus harshly impacting on storage or bandwidth requirements. It was shown in~\citet{KaBoRa16,KaBoRa16arxiv} that under $\ell_1$ (total variation distance) and $\ell_2$ (mean-squared error) losses alike, the RAPPOR scheme is order-optimal in the high privacy regime ($\epsilon \downarrow 0$) and strictly suboptimal in the low privacy regime ($\epsilon \uparrow \infty$). Conversely, the RR scheme is order-optimal in the low privacy, and sub-optimal in the high-privacy regime.

Recently, \citet{YeBa18} introduced a novel privatization scheme which can be viewed as a generalization of both RR and RAPPOR.\footnote{Ye and Barg acknowledge in their publication that~\citet{WaHuWaNiXuYaLiQi16} independently introduced the same privatization scheme.}
They prove that under $\ell_1$ and $\ell_2$ metrics, these schemes are order-optimal in the medium to high privacy regime ($e^\epsilon \ll K$, where $K$ denotes the cardinality of the discrete source's support). Subsequently in~\citet{YeBa17}, the same authors strengthen this result for the $\ell_2$ metric, by proving asymptotic optimality in all regimes. The result was recently extended to more general metrics (including the $\ell_1$ metric) by the same authors~\citep{YeBa18b}.

In the present work, rather than studying RAPPOR and its generalization by Ye and Barg, we turn our attention to the conventional, more storage efficient RR scheme.\footnote{Note that several key results known in the literature that facilitate further analysis, such as optimality of ``extremal'' mechanisms~\citep[Lemma~IV.3]{KaOhVi14,YeBa18}, cease to hold under the restriction of pure RR that we take here.}
Specifically, we consider the situation where an {\em interviewer} or {\em curator} observes independent and identically distributed (i.i.d.)\ realizations of potentially sensitive data. Instead of publishing the records in the clear, the curator is required to process this source data in such way that inferences on the individual source {\em realizations} are rendered hard, but an accurate estimation of the source {\em distribution} is rendered easy. In addition, the curator is constrained to using a memoryless privatization strategy, sometimes referred to as {\em non-interactive mechanism} \citep{Le12,DuWaJo13,DuJoWa14}.

\begin{figure}[htb]
\begin{center}
\includegraphics{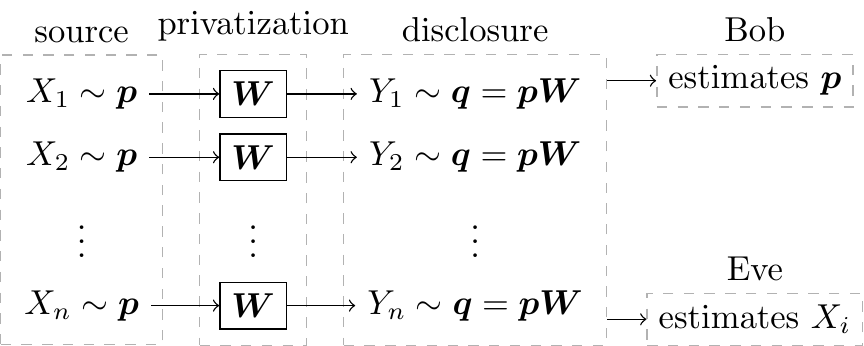}
\end{center}
\caption{Non-interactive mechanism: the curator sees $n$ samples of an i.i.d.\ discrete random source $\{X_i\}$ with distribution $\Bp$ and processes them individually through $n$ copies of a privatization channel $\BW$ prior to publication. From the outputs $\{Y_i\}$, the legitimate observer Bob tries to infer $\Bp$, whereas the adversarial observer Eve tries to infer one (or more) source sample $X_i$.}
\label{fig:system}
\end{figure}

Although non-interactive mechanisms are provably optimal in certain setups \citep{KaOhVi15}, the trade-off under study may in general benefit from more complex, interactive channel structures. However, there might be justified reasons to use a stationary and memoryless privatization channel:
\paragraph{Randomized survey}
In certain situations such as randomized response surveys \citep{Wa65}, the truthful answer to an interviewer's question might be stigmatizing (e.g., drug consumption, sexual behavior, etc.). In such setups, a transparent and non-interactive randomization of answers is necessary as a participatory incentive.
\paragraph{Timeliness}
Strict delay constraints may outrule the possibility of batch processing. Low-latency sensoring or time-critical surveys may be examples of such situations. For instance, a medical survey could be conducted over a timespan of several years, but there might be an urge to publish partial information at much more frequent intervals so as to help gain statistical insights in a timely manner.
\paragraph{Finite horizon}
In applications where the curator has no control over the eventual size $n$ of the data collection because it may be interrupted anytime, a non-interactive mechanism seems a more viable and robust approach.
\paragraph{Privacy fairness}
Applying one and the same privatization channel onto each data sample enforces full uniformity of privacy guarantees across samples in a simple and transparent way.

Our privacy requirement will be based on the notion of $\epsilon$-local differential privacy \citep{SaSa14}, which is inherited from the celebrated concept of differential privacy proposed by Dwork (see comprehensive surveys by \citet{Dw08,Le12,JiLiEl14}) by removing the adjacency relationship between datasets.
The $\epsilon$ parameter in local differential privacy conveys a sense of how uniformly hard it is to make inferences on the source realizations, regardless of the source distribution.

On the other hand, the fidelity will be linked to three alternative loss metrics---a family of Csisz\'ar $f$-divergence metrics (notably including Kullback--Leibler (KL) divergence), mean-squared error (MSE) and total variation (TV)---between the exact source distribution and an estimate thereof from the privatized samples.\footnote{Note that in some publications, the TV and MSE metrics are respectively referred to as $\ell_1$ and $\ell_2$-loss~\citep{YeBa18,YeBa17}} More specifically, the figure of merit will be the {\em speed of convergence} to zero of the expected fidelity loss (as measured by the metric of choice among the three metrics under study). For this purpose, we will derive asymptotic expressions of the expected MSE and TV losses for large sample sizes. As to the $f$-divergence metric, we will generalize\footnote{to account for an additional privatization channel} an asymptotic expansion of the expected KL divergence between the empirical distribution $\Bt(\Bx_n)$ of $n$ i.i.d.\ samples of a random variable $X \in [K]$ gathered in a vector $\Bx_n \sim \Bp^{\otimes n}$, and its exact distribution $\Bp = (p_1,\dotsc,p_K)$, as $n$ tends to infinity \citep{Ab96}:
\begin{equation}   \label{Abe_expansion}
	\Exp\bigl[D\bigl(\Bt(\Bx_n) \big\Vert \Bp\bigr)\bigr]
	= \frac{K-1}{2n} + \left(\sum_{k=1}^K \frac{1}{p_k}-1\right)\frac{1}{12 n^2} + O(n^{-3}).
\end{equation}
The first-order term of this expansion suggests that a low support set cardinality $K$ will be beneficial for the speed of convergence, the second-order term suggests that for a given cardinality, the uniform distribution is most beneficial.

The main contributions of this article can be summarized as follows:
\begin{itemize}
	\item	While previous publications have already studied large-sample size asymptotic expansions of standard loss metrics, we have sought to generalize these derivations in several ways: (i) we force the distribution estimate to be a valid probability distribution and rigorously treat the error term that arises from this projection operation\footnote{\citet{KaBoRa16,KaBoRa16arxiv}, for example, discusses projection operations to some detail and notices by simulation that a projected estimator tends to outperforms its unprojected counterpart, but provides no analytic treatment of the error term. Similarly, \citet{YeBa18} correctly point out that the impact of the projection operation is exponentially small, but omit the detailed analysis.}; (ii) in addition to TV and MSE loss metrics (elsewhere referred to as $\ell_1$ and $\ell_2$ risk) we also cover a large class of $f$-divergence metrics; (iii) we highlight that maximum-likelihood and MMSE distribution estimators acquire a similar form and argue that they yield the same asymptotic loss.
	\item	We identify the non-negative matrix $\BPhi(\BW) = \BW(\BW^{-1} \odot \BW^{-1})$ (where `$\odot$' denotes entrywise multiplication) as well as the sum of its entries $\varphi(\BW) = \sum_{ij} \Phi_{ij}(\BW)$ as representative proxies for a larger class of fidelity metrics. These quantities essentially capture the fidelity metric's dependency on the random mechanism (row-stochastic matrix) $\BW$. We study some of their properties, such as data-processing inequalities.
	\item	We derive a lower bound on $\varphi(\BW)$ which only depends on the privacy level $\epsilon$ and the source's support size $K$. This allows us to formulate new lower bounds on the optimal privacy--fidelity trade-off for a class of fidelity metrics.
\end{itemize}

\section{Notation}

By convention, all vectors are row vectors unless transposed by $(\cdot)^\Tr$.
We occasionally denote the inner product between two vectors $\Ba$ and $\Bb$ as $\langle \Ba,\Bb \rangle = \Ba\Bb^\Tr$.
The product signs `$\odot$' and `$\otimes$' stand for the Hadamard product (entrywise multiplication) and the Kronecker product, respectively.
The exponent notation $\Ba^{\otimes n}$ stands for the $n$-fold Kronecker product $\Ba \otimes \dotso \otimes \Ba$.
The bracket $[K]$ is shorthand for $\{1,2,\dotsc,K\}$.
The {\em type} or {\em empirical distribution} of a sample sequence $\Bx_n = ( X_1,\dotsc,X_n ) \in [K]^n$ shall be denoted as
$\Bt(\Bx_n) = [t_1(\Bx_n),\dotsc,t_K(\Bx_n)]$ where $t_k(\Bx_n) = \frac{1}{n} \sum_{i=1}^n \mathds{1}\{X_i = k\}$ with $\mathds{1}\{\cdot\}$ standing for the indicator function. The all-ones row vector of dimension $N$ is written as $\myone_N$, or simply $\myone$ if its dimension is clear from context.
The probability simplex is denoted as $\mathbb{P}$ (whose dimension is always clear from context).
We denote by $\delta_{k,\ell}$ the Dirac delta function, which equals one if $k=\ell$ and zero otherwise.

\section{Problem description}   \label{sec:problem_description}

A {\em privatization channel} or {\em mechanism} $\BW$ with finite source alphabet $[K]$ and finite output alphabet $[L]$, is a discrete stochastic mapping (or {\em Markov kernel}) described by a matrix of conditional probabilities $\BW$ with $(k,\ell)$-th entry
\begin{equation}
	W_{k,\ell} = \Prob\{Y=\ell|X=k\}.
\end{equation}
The matrix $\BW$ is row-stochastic, meaning that all its entries belong to the unit interval and that each row sums to one, i.e., $\BW \myone_L^\Tr = \myone_K^\Tr$. This article is only concerned with square channels, hence $L=K$ throughout.

The privatization channel acts independently upon each of the $n$ i.i.d.\ source symbols $\Bx_n = (X_1,\dotsc,X_n) \sim \Bp^{\otimes n}$ to generate a sequence of i.i.d.\ {\em privatized} observations $\By_n = (Y_1,\dotsc,Y_n) \sim \Bq^{\otimes n}$. The output distribution $\Bq$ induced by the source distribution $\Bp$ is given by right-multiplication of $\Bp$ with the channel matrix $\BW$. Moreover, we require that $\BW$ be {\em full-rank}. Hence,
\begin{align}   \label{pushforward}
	\Bq &= \Bp \BW
	&
	\Bp &= \Bq \BW^{-1}.
\end{align}
The curator is cognizant of $\BW$, has access to the output sequence $\By_n$ and seeks to generate an estimate of $\Bp$, which we denote as $\hat{\Bp}_n$. Since $\BW$ is square full-rank and the source is i.i.d., the quantity
\begin{equation}   \label{check_p}
	\check{\Bp}_n
	\triangleq \Bt(\By_n) \BW^{-1}
\end{equation}
is a complete and minimally sufficient statistic for $\Bp$. Consequently, any estimator $\hat{\Bp}_n$ can be defined as a function of $\check{\Bp}_n$ (and possibly $\BW$) without loss of optimality nor generality. Additionally, we require the estimator $\hat{\Bp}_n$ to be {\em consistent}, which means that
\begin{equation}   \label{consistency}
	\lim_{n \to \infty} \Prob\{ \lVert \hat{\Bp}_n - \Bp \rVert > \epsilon \}
	= 0
\end{equation}
for any $\epsilon > 0$. Clearly, the speed at which this convergence takes place will depend on how ``noisy'' the mechanism $\BW$ is designed to be. We would wish the convergence to be fast (for the sake of fidelity) while the mechanism should allow as little inference on $X_i$ from $Y_i$ as possible (for the sake of privacy), irrespective of the (unknown) source distribution. We now introduce the metrics for characterizing this privacy--fidelity trade-off.

\subsection{Fidelity}

To measure the accuracy of any given estimator $\hat{\Bp}_n$, we define three expected loss metrics: one based on Csisz\'ar's $f$-divergence, one based on mean-squared error (MSE) and one based on total variation distance (TV), namely\footnote{Besides $(\Bp,\BW)$, the fidelity loss metrics~\eqref{divergence_metric}--\eqref{tv_metric} also depend on the estimator function, but we choose to omit this dependency for notational brevity. The same applies to the asymptotic metrics presented further below, in~\eqref{alpha_DIV}--\eqref{alpha_TV}.}
\begin{subequations}
\begin{IEEEeqnarray}{rCl}
	\mathscr{L}^{(n)}_{f\text{-}\mathrm{DIV}}(\Bp,\BW)
	&\triangleq& \Exp\bigl[ D_f\bigl(\hat{\Bp}_n \big\Vert \Bp\bigr) \bigr]   \label{divergence_metric} \\
	\mathscr{L}^{(n)}_{\mathrm{MSE}}(\Bp,\BW)
	&\triangleq& \Exp\Bigl[\big\lVert \hat{\Bp}_n - \Bp \big\rVert_2^2\Bigr]   \label{mse_metric} \\
	\mathscr{L}^{(n)}_{\mathrm{TV}}(\Bp,\BW)
	&\triangleq& \Exp\Bigl[\big\lVert \hat{\Bp}_n - \Bp \big\rVert_1\Bigr].   \label{tv_metric}
\end{IEEEeqnarray}
\end{subequations}
Here, the $f$-divergence between two same-sized probability vectors $\Bu, \Bv \in \mathbb{P}$ and for a convex function $f$ satisfying $f(1)=0$, is defined as
\begin{equation}
	D_f(\Bp \Vert \Bq)
	= \sum_{k \in [K]} p_k f\left(\frac{q_k}{p_k}\right).
\end{equation}
Note that TV distance is actually an $f$-divergence (for the function $f(x) = |x-1|$) whereas the MSE distance is not. However, we choose to single out TV distance as a separate metric, since the class of $f$-divergences that we shall focus on requires differentiability of the function $f(x)$ at $x=1$, which excludes TV distance.

More specifically than the loss metrics~\eqref{divergence_metric}--\eqref{tv_metric} themselves, we will consider the {\em limiting ratios} between the loss metrics achieved {\em with} the privatizing mechanism $\BW$ against the corresponding value that would be obtained from a clear view on the samples, i.e., {\em without} privatization. These asymptotic normalized loss metrics are defined as
\begin{subequations}   \label{utility_metrics}
\begin{IEEEeqnarray}{rCl}
	\alpha_{f\text{-}\mathrm{DIV}}(\Bp,\BW)
	&\triangleq& \lim_{n\to\infty} \frac{\mathscr{L}^{(n)}_{f\text{-}\mathrm{DIV}}(\Bp,\BW)}{\mathscr{L}^{(n)}_{f\text{-}\mathrm{DIV}}(\Bp,\myid)}   \label{alpha_DIV} \\
	\alpha_{\mathrm{MSE}}(\Bp,\BW)
	&\triangleq& \lim_{n\to\infty} \frac{\mathscr{L}^{(n)}_{\mathrm{MSE}}(\Bp,\BW)}{\mathscr{L}^{(n)}_{\mathrm{MSE}}(\Bp,\myid)}   \label{alpha_MSE} \\
	\alpha_{\mathrm{TV}}(\Bp,\BW)
	&\triangleq& \lim_{n\to\infty} \left( \frac{\mathscr{L}^{(n)}_{\mathrm{TV}}(\Bp,\BW)}{\mathscr{L}^{(n)}_{\mathrm{TV}}(\Bp,\myid)} \right)^2,   \label{alpha_TV}
\end{IEEEeqnarray}
\end{subequations}
where $\myid$ denotes the identity (non-privatizing) channel.\footnote{The identity can be replaced by any permutation matrix, since a permutation amounts to a relabeling of symbols. The estimators $\hat{\Bp}_n$ introduced in the next Subsection are indeed consistent with this permutation invariance, in the sense that they give $\mathscr{L}^{(n)}(\Bp,\BW) = \mathscr{L}^{(n)}(\Bp,\BW\BPi)$ for any permutation matrix $\BPi$.}
Since, as we shall see, the metrics \eqref{divergence_metric}--\eqref{mse_metric} decay as $O(\frac{1}{n})$ when $n$ tends to infinity, whereas~\eqref{tv_metric} decays as $O\bigl(\frac{1}{\sqrt{n}}\bigr)$ (notice the square in~\eqref{alpha_TV} introduced to compensate for this fact), we can view the normalized quantities $\alpha_{f\text{-}\mathrm{DIV}}$, $\alpha_{\mathrm{MSE}}$ and $\alpha_{\mathrm{TV}}$ as rules of thumb for the factor by which the sample size has to be increased if we want the accuracy of the privatized estimation to approximately match that of the non-privatized case.

\subsection{Privacy}

Our definition of privacy is based on the concept of {\em local differential privacy}. A privatization channel $\BW = [W_{k,\ell}]_{k,\ell}$ as defined above is said to be $\epsilon$-locally differentially private (or $\epsilon$-private) if for all index triples $(k,k',\ell) \in [K]^3$, we have
\begin{equation}   \label{local_DP}
	W_{k,\ell} \leq e^\epsilon W_{k',\ell}.
\end{equation}
For a given channel $\BW$, we denote by $\epsilon(\BW)$ the smallest value of $\epsilon$ such that \eqref{local_DP} holds for all $(k,k',\ell)$, i.e.,
\begin{equation}   \label{privacy_metric}
	\epsilon(\BW) = \log\left( \max_{k,k',\ell} \frac{W_{k,\ell}}{W_{k',\ell}} \right).
\end{equation}
Let $\mathcal{W}$ denote the set of all $K \times K$ full-rank row-stochastic matrices and let $\mathcal{W}_\epsilon \subset \mathcal{W}$ denote the set of $\epsilon$-private mechanisms.

\subsection{Two problem formulations for the privacy--fidelity trade-off}   \label{ssec:P-U-trade-off}

Now that we have introduced the fidelity loss and privacy metrics in \eqref{utility_metrics} and \eqref{privacy_metric} respectively, we can formulate the privacy--fidelity trade-off problem.
We propose two different problem formulations, which we will refer to as the {\em feasibility problem} and the {\em minmax problem}, respectively.
In the following, the generic notation $\alpha$ may refer to either $\alpha_{f\text{-}\mathrm{DIV}}$, $\alpha_{\mathrm{MSE}}$ or $\alpha_{\mathrm{TV}}$.

\subsubsection{Feasibility problem}

Assume that the source distribution $\Bp$ is fixed. We seek to characterize the set $\mathcal{F}(\Bp)$ of all $(\epsilon,\alpha)$ pairs which are jointly feasible, i.e.,
\begin{equation}
	\mathcal{F}(\Bp)
	\triangleq \Bigl\{ \bigl(\alpha(\Bp,\BW),\epsilon(\BW)\bigr) \colon \BW \in \mathcal{W} \Bigr\}.
\end{equation}
Specifically, we seek to characterize the optimal $\epsilon$-$\alpha$ trade-off curve
\begin{equation}   \label{feasilibity_alpha_star}
	\alpha^\star(\epsilon;\Bp)
	\triangleq \min_{\BW \in \mathcal{W}_\epsilon} \alpha(\Bp,\BW)
\end{equation}
Note that, as we have stressed previously, the curator (i.e., the designer of $\BW$) has no knowledge of $\Bp$. Hence, the optimal trade-off curve $\alpha^\star(\epsilon;\Bp)$ is achieved in the event that the curator makes the best guess about the optimal $\BW$ for a given $\Bp$, as if aided by a genie who hands over the knowledge of $\Bp$. In general though, no design strategy for $\BW$ can leverage knowledge about $\Bp$, and thus will fall short of achieving $\alpha^\star(\epsilon;\Bp)$.

\subsubsection{Minmax problem}

Assume that the source distribution $\Bp$ can be any among a continuous subset $\mathcal{P}$ of the probability simplex $\mathbb{P}$. The curator seeks to optimize $\BW$ based on this knowledge of the continuous candidate set $\mathcal{P}$. Hence, we define the minmax problem as being that of determining
\begin{equation}   \label{minmax_alpha_star}
	\alpha^\star(\epsilon;\mathcal{P})
	\triangleq \min_{\BW \in \mathcal{W}_\epsilon} \sup_{\Bp \in \mathcal{P}} \alpha(\Bp,\BW).
\end{equation}
Reducing $\mathcal{P}$ to a singleton set $\{\Bp\}$ would make both problem formulations mathematically identical, in the sense that \eqref{minmax_alpha_star} would equal \eqref{feasilibity_alpha_star}. However, one has to bear in mind that the interpretations of both problem formulations are rather different. Indeed, the minmax problem makes sense mostly for a non-singleton set $\mathcal{P}$. Besides, assuming a singleton set would violate the assumption that $\mathcal{P}$ is continuous, which is important for another reason: if $\mathcal{P}$ were discrete, it would be more adequate to consider the problem within a guessing or multiple hypothesis testing framework. In this case, the guessing error would be represented by error probabilities of different types (e.g., false alarm, missed detection, etc.). For finite $\mathcal{P}$, these error probabilities would typically decay exponentially in the sample size $n$, so a natural candidate for the fidelity loss metric would be the error exponent, rather that the quantities $\alpha$ studied in this article.
Besides the feasibility and the minmax problem, one can think of other questions about fundamental limits which might be of independent interest as well, but will not be addressed in this article. Let us mention only one example:

\subsubsection{Best-case feasibility problem}

The best-case trade-off $\inf_{\Bp} \alpha^\star(\epsilon;\Bp)$ delimits the union $\bigcup_{\Bp} \mathcal{F}(\Bp)$ and characterizes the most optimistic performance limit, in the sense that the source distribution is most benevolent, and the curator $\BW$ guesses the best $\BW$. This limiting curve will only depend on the alphabet dimension $K$ (and possibly on the fidelity metric of choice, be it $\alpha_{f\text{-}\mathrm{DIV}}$, $\alpha_{\mathrm{MSE}}$ or $\alpha_{\mathrm{TV}}$).

\section{Distribution estimation}

As we have argued before, any distribution estimator can be expressed as a function of $\check{\Bp}_n$ (and possibly $\BW$) to the $K$-dimensional probability simplex. Henceforth, we shall only consider estimators that are {\em projectors} of $\check{\Bp}_n$ onto the probability simplex\footnote{Note that $\check{\Bp}_n$ [cf.~\eqref{check_p}] is not guaranteed to be a probability vector. Due to $\BW$ being row-stochastic, both $\BW$ and $\BW^{-1}$ have $1$ as an eigenvalue, with associated all-ones eigenvector $\myone^\Tr$. Thus, the rows of $\BW^{-1}$ and hence the entries of $\check{\Bp}_n = \Bt(\By_n)\BW^{-1}$ sum to one. However, some entries of $\check{\Bp}_n$ may lie outside the unit interval. Hence the necessity of a projection operation.}, namely, estimators that can be cast into the form
\begin{equation}   \label{general_estimator}
	\hat{\Bp}_n
	= \mathrm{Proj}_\mathbb{P}(\check{\Bp}_n)
\end{equation}
where $\mathrm{Proj}_\mathbb{P}(\cdot)$ stands for some idempotent function satisfying $\mathrm{Proj}_\mathbb{P}(\check{\Bp})=\check{\Bp}$ for any probability vector $\check{\Bp} \in \mathbb{P}$.

Let us denote the topological interior and closure (in $\mathbb{P}$) of a set of distributions $\mathcal{R} \subset \mathbb{P}$ as $\overline{\mathcal{R}}$ and $\mathcal{R}^\circ$, respectively, and define its boundary as $\partial\mathcal{R} = \overline{\mathcal{R}} \setminus \mathcal{R}^\circ$.
Henceforth, for a distribution $\Br \in \mathbb{P}$ and a set $\mathcal{R} \subset \mathbb{P}$, we adopt Csisz\'ar's notation~\citep{Cs84} for information projection
\begin{equation}   \label{information_projection}
	D(\mathcal{R} \Vert \Br)
	\triangleq \inf_{\Br' \in \mathcal{R}} D(\Br' \Vert \Br)
\end{equation}
where $D(\cdot \Vert \cdot)$ denotes the Kullback--Leibler divergence.
\begin{lemma}   \label{lem:concentration_inequality}
The following inequality holds:\footnote{We omit parentheses in writing $\partial\mathbb{P}\BW$ because regardless of how we set parentheses, $\partial(\mathbb{P}\BW) = (\partial\mathbb{P})\BW$.}
\begin{equation}
	\Prob\bigl\{ \check{\Bp}_n \notin \mathbb{P} \bigr\}
	\leq e^{-n D(\partial\mathbb{P}\BW \Vert \Bp\BW)}.
\end{equation}
\end{lemma}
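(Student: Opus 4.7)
The plan is to transfer the source-side event $\{\check{\Bp}_n \notin \mathbb{P}\}$ into a large-deviations event for the output empirical distribution $\Bt(\By_n)$, and then identify the resulting exponential rate with the Csisz\'ar I-projection onto the boundary of the image polytope $\mathbb{P}\BW$.

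First, because $\BW$ is row-stochastic, $\myone^\Tr$ is a right-eigenvector of $\BW^{-1}$ with eigenvalue one, so the entries of $\check{\Bp}_n = \Bt(\By_n)\BW^{-1}$ always sum to one. Hence $\check{\Bp}_n \notin \mathbb{P}$ iff some coordinate is strictly negative, giving the equivalence
\[
\Prob\bigl\{\check{\Bp}_n \notin \mathbb{P}\bigr\} = \Prob\bigl\{\Bt(\By_n) \in \mathbb{P} \setminus \mathbb{P}\BW\bigr\},
\]
where $\mathbb{P}\BW = \{\Br \in \mathbb{P} : \Br\BW^{-1} \geq 0 \text{ componentwise}\}$ is the (closed, convex) image polytope of the simplex under $\BW$. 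Its complement in $\mathbb{P}$ is the finite union $\bigcup_{k \in [K]} H_k$ with $H_k = \{\Br \in \mathbb{P} : (\Br\BW^{-1})_k < 0\}$. Under the model, the $Y_i$ are i.i.d.\ with law $\Bq = \Bp\BW$, which lies in the relative interior of $\mathbb{P}\BW$ whenever $\Bp$ has full support (boundary cases being handled by a continuity argument).

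Next, I would apply a Sanov/Chernoff-type bound on each halfspace event: the indicator $\{\Bt(\By_n) \in H_k\}$ coincides with $\{\sum_{i=1}^n (\BW^{-1})_{Y_i,k} < 0\}$, a lower-tail event for a bounded i.i.d.\ sum with positive mean $\Exp[(\BW^{-1})_{Y_i,k}] = p_k$. The classical Chernoff estimate then yields exponential decay at rate $\inf_{\Br \in \overline{H_k}} D(\Br \Vert \Bq)$ with no polynomial prefactor. The critical geometric identification is that, by convexity of $D(\cdot \Vert \Bq)$ together with $D(\Bq \Vert \Bq) = 0$, the straight-line segment from $\Bq$ to any $\Br \in \mathbb{P} \setminus \mathbb{P}\BW$ crosses $\partial\mathbb{P}\BW$ at some $\Br'$ satisfying $D(\Br' \Vert \Bq) \leq D(\Br \Vert \Bq)$; hence
\[
\inf_{\Br \in \mathbb{P} \setminus \mathbb{P}\BW} D(\Br \Vert \Bq) = D(\partial\mathbb{P}\BW \Vert \Bq) = D(\partial\mathbb{P}\BW \Vert \Bp\BW),
\]
using the author's I-projection notation from \eqref{information_projection}.

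The main obstacle is eliminating the combinatorial prefactor (the $(n{+}1)^K$ type-counting factor in naive Sanov, or the $K$ from union-bounding the halfspaces $H_k$) so as to land exactly at $e^{-nD(\partial\mathbb{P}\BW \Vert \Bp\BW)}$ as stated. One route is to sharpen the per-halfspace Chernoff estimate using the tilted exponential family through $\Bq$ on the dominant facet of $\mathbb{P}\BW$ (i.e.\ the one achieving the I-projection), on which the Chernoff exponent is tight and no prefactor arises, while the sub-dominant facets contribute strictly faster decay that is absorbed into a negligible correction of the exponent; another is to treat $\mathbb{P}\setminus\mathbb{P}\BW$ as the complement of a convex polytope and invoke a Sanov variant for polytope complements. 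Either way, the geometric content that is used downstream—exponential decay at rate equal to the I-projection of $\Bp\BW$ onto $\partial\mathbb{P}\BW$—is immediate from the two preceding steps and suffices to show that the projection error contribution to the fidelity loss is exponentially negligible compared to the $O(1/n)$ main terms.
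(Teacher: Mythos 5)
Your overall strategy coincides with the paper's: transfer the event $\{\check{\Bp}_n \notin \mathbb{P}\}$ to the output-side event $\{\Bt(\By_n) \in \mathbb{P}\setminus\mathbb{P}\BW\}$, invoke a large-deviations concentration bound, and then reduce the exponent from $D(\mathbb{P}\setminus\mathbb{P}\BW\Vert\Bp\BW)$ to $D(\partial\mathbb{P}\BW\Vert\Bp\BW)$ by observing that the segment from $\Bp\BW$ to any $\Br'$ in $\overline{\mathbb{P}\setminus\mathbb{P}\BW}$ crosses the boundary $\partial\mathbb{P}\BW$ at a point that is no farther in KL divergence. Your geometric reduction is essentially identical to the paper's.

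The genuine gap is precisely the one you flag yourself. The paper does not re-derive the concentration bound: it invokes Csisz\'ar's concentration inequality \citep[Theorem~1]{Cs84} as a black box to obtain $\Prob\{\Bt(\By_n)\in\mathbb{P}\setminus\mathbb{P}\BW\} \leq e^{-nD(\mathbb{P}\setminus\mathbb{P}\BW\Vert\Bp\BW)}$ with no prefactor, and only then applies the boundary reduction. Your per-halfspace Chernoff bound plus union bound yields at best a constant factor of $K$, and neither of your two repair sketches closes that gap as written: tilting on the dominant facet while ``absorbing'' the sub-dominant contributions produces a bound whose exponent merely approaches $D(\partial\mathbb{P}\BW\Vert\Bp\BW)$ as $n\to\infty$, not the clean finite-$n$ inequality; and a ``Sanov variant for polytope complements'' is exactly the nontrivial ingredient that needs to be supplied or cited. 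Note that $\mathbb{P}\setminus\mathbb{P}\BW$ is not convex, so the usual exponential-tilting route that gives prefactor-free Sanov bounds for convex sets does not apply to it directly --- this is why a dedicated result is needed at this step, and why the paper's citation to Csisz\'ar is doing real work rather than being cosmetic.
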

\begin{proof}
See Appendix~\ref{app:proof:concentration_inequality}.
\end{proof}
It is easy to show with Lemma~\ref{lem:concentration_inequality} that any estimator of the form~\eqref{general_estimator} is consistent [cf.~Section~\ref{sec:problem_description}], because the cases $\check{\Bp} \neq \hat{\Bp}$ are at least exponentially rare, whereas fidelity metrics decay linearly in the sample size (as we shall see). Moreover, imposing the form~\eqref{general_estimator} outrules the possibility of trivial ``genie-aided'' estimators such as $\hat{\Bp}_n = \Bp$ by construction.

In addition, the fact that $\check{\Bp}_n$ fails to be a probability vector only in exponentially rare cases, as highlighted by Lemma~\ref{lem:concentration_inequality}, is helpful in that our asymptotic loss metrics~\eqref{alpha_DIV}--\eqref{alpha_TV} do not depend on $\mathrm{Proj}_\mathbb{P}$ (as we shall see). Therefore, all estimators of the form~\eqref{general_estimator} can be regarded as asymptotically equivalent.

Next, we will derive the maximum likelihood (ML) and the minimum mean-squared error (MMSE) distribution estimators, and verify that they are two instances of the general form~\eqref{general_estimator}.

\subsection{ML estimator}

Based on a given output sequence $\tilde{\By}_n$, the ML estimator of the source distribution is defined as the distribution that maximizes the probability of the event $\By_n = \tilde{\By}_n$. By a classic argument, one can express the ML estimator as a KL divergence minimizer:
\begin{IEEEeqnarray*}{rCl}
	\hat{\Bp}_\mathrm{ML}(\tilde{\By}_n)
	&=& \argmax_{\Bp' \in \mathbb{P}} \frac{1}{n} \log\Prob\bigl\{ \By_n = \tilde{\By}_n \bigm| \By_n \sim (\Bp' \BW)^{\otimes n} \bigr\} \\
	&=& \argmax_{\Bp' \in \mathbb{P}} \frac{1}{n} \log\prod_{i=1}^n \Prob\bigl\{ y_i = \tilde{y}_i \bigm| y_i \sim \Bp' \BW \bigr\} \\
	&=& \argmax_{\Bp' \in \mathbb{P}} \frac{1}{n} \sum_{i=1}^n \sum_{k=1}^K \mathds{1}\{\tilde{y}_i = k\} \log\bigl([\Bp'\BW]_k\bigr) \\
	&=& \argmax_{\Bp' \in \mathbb{P}} \sum_{k=1}^K t_k(\tilde{\By}_n) \log\bigl([\Bp'\BW]_k\bigr) \\
	&=& \argmin_{\Bp' \in \mathbb{P}} D\bigl(\Bt(\tilde{\By}_n) \big\Vert \Bp' \BW\bigr).   \IEEEeqnarraynumspace\IEEEyesnumber\label{ML_KL_equivalence}
\end{IEEEeqnarray*}
This optimization problem is convex, because the KL divergence is a convex functional and the probability simplex $\mathbb{P}$ is a convex set.
To see that this is an instance of~\eqref{general_estimator}, it suffices to rewrite $\Bt(\tilde{\By}_n)$ as $\Bt(\tilde{\By}_n)\BW^{-1}\BW$ in~\eqref{ML_KL_equivalence} so that the idempotence property of projection becomes evident.

\subsection{MMSE estimator}

The MMSE estimator is defined as the probability vector that minimizes the Euclidean distance between the output distribution it induces, and the empirical output distribution:
\begin{IEEEeqnarray*}{rCl}
	\hat{\Bp}_\mathrm{MMSE}(\tilde{\By}_n)
	&=& \argmin_{\Bp' \in \mathbb{P}} \, \bigl\lVert \Bt(\tilde{\By}_n) - \Bp'\BW \bigr\rVert_2.   \IEEEeqnarraynumspace\IEEEyesnumber\label{MMSE_estimator}
\end{IEEEeqnarray*}
This estimator is similar to the ML estimator, except for replacing the KL divergence in~\eqref{ML_KL_equivalence} by Euclidean distance. Similarly to the ML estimator, the minimization problem in~\eqref{MMSE_estimator} is convex and manifestly an instance of~\eqref{general_estimator}.

\section{Convergence of estimates}

Recall that $\Bp$ is supported on $[K]$, meaning that all its entries $p_k$ are positive. As a consequence, the convergence in probability~\eqref{consistency} implies the convergence to zero of all fidelity loss metrics (including $f$-divergences) as $n \to \infty$, i.e.,
\begin{equation}
	\lim_{n\to\infty} \mathscr{L}^{(n)}(\Bp,\BW)
	= 0.
\end{equation}
As a representative figure for the speed of this convergence to zero, we shall compute the leading terms in the respective asymptotic expansions (as $n\to\infty$) of the different loss metrics. Prior to providing analytical expressions for these, we need to introduce some quantities of interest. For a positive integer $\rho$, let us define
\begin{equation}   \label{def:nu}
	\nu_{\rho,k}
	\triangleq \Bp \BW\bigl( \underbrace{\BW^{-1} \odot \BW^{-1} \odot \dotso \odot \BW^{-1}}_{\text{$\rho$ factors}} \bigr) \Be_k^\Tr
\end{equation}
where `$\odot$' denotes entrywise multiplication. In particular, the matrix
\begin{equation}   \label{def:Phi}
	\BPhi(\BW) \triangleq \BW(\BW^{-1} \odot \BW^{-1})
\end{equation}
involved in the expression of $\nu_{2,k}$ will play a prominent role in the fidelity loss metrics and will be shown to satisfy data-processing inequalities. Finally, the sum of all entries of $\BPhi(\BW)$, i.e.,
\begin{equation}   \label{def:phi}
	\varphi(\BW)
	\triangleq \myone \BPhi(\BW) \myone^\Tr
	= \sum_{k=1}^K \sum_{\ell=1}^K \Phi_{k,\ell}(\BW)
\end{equation}
will be repeatedly used.

The following theorem gives an asymptotic expansion applicable to a large class of $f$-divergences, including KL divergence:

\begin{theorem}[Expansion of $f$-divergence loss]   \label{thm:asymptotic_expansion}
Assume that
\begin{enumerate}
	\item	$f(1)=0$ and $f(x)$ is four times differentiable at $x=1$,
	\item	$f(0)$ is finite\footnote{The requirement that $f(0)$ be finite ensures that the expected value $\Exp\bigl[ D_f\bigl(\hat{\Bp}_n \big\Vert \Bp\bigr) \bigr]$ is finite, for otherwise there would be a positive probability of $D_f\bigl(\hat{\Bp}_n \big\Vert \Bp\bigr)$ being infinite for any $n$, and thus its expectation would be infinite.}
	\item	$f(x)$ can be expanded as
		\begin{equation*}
			f(x) = \sum_{\rho=1}^4 \frac{f^{(\rho)}(1)}{\rho!}(x-1)^\rho + O\bigl( \left| x-1 \right|^{4+\gamma} \bigr)
		\end{equation*}
		for some $\gamma>0$, where $f^{(\rho)}(x)$ denotes the $\rho$-th derivative of $f(x)$.
\end{enumerate}
Furthermore, assume that $\BW$ is full-rank (invertible). Then the following asymptotic expansion holds:
\begin{equation}   \label{asymptotic_expansion}
	\mathscr{L}^{(n)}_{f\text{-}\mathrm{DIV}}(\Bp,\BW)
	= \frac{A f''(1)}{2n} + \left( \frac{B f^{(3)}(1)}{6} + \frac{C f^{(4)}(1)}{8} \right) \frac{1}{n^2} + O(n^{-3}) 
\end{equation}
with coefficients $A$, $B$ and $C$ given by
\begin{subequations}
\begin{IEEEeqnarray}{rCl}
	A
	&=& -1 + \sum_{k=1}^K \frac{\nu_{2,k}}{\nu_{1,k}} \\
	B
	&=& 2 + \sum_{k=1}^K \left(\frac{\nu_{3,k}}{\nu_{1,k}^2} - 3\frac{\nu_{2,k}}{\nu_{3,k}}\right) \\
	C
	&=& 1 + \sum_{k=1}^K \left(\frac{\nu_{2,k}^2}{\nu_{1,k}^3}-2\frac{\nu_{2,k}}{\nu_{1,k}}\right)   \label{gamma}
\end{IEEEeqnarray}
\end{subequations}
where $\nu_{\rho,k}$ is defined in~\eqref{def:nu}. (Note that $\nu_{1,k}$ is simply $p_k$)
\end{theorem}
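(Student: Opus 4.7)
The plan is to Taylor expand $f$ around $x=1$ to fourth order inside the definition of $D_f(\check{\Bp}_n\Vert\Bp)$, then take expectations using the multinomial moment structure of the empirical output distribution $\Bt(\By_n)$. The coefficients $A$, $B$, $C$ should then drop out as specific moment sums, rewritten via the quantities $\nu_{\rho,k}$ defined in~\eqref{def:nu}.

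First I would dispose of the projector. Since $f(0)$ is finite (hypothesis~2) and $\Bp$ has full support, $D_f(\Br\Vert\Bp)$ is uniformly bounded over $\Br\in\mathbb{P}$; combined with Lemma~\ref{lem:concentration_inequality}, the event $\{\check{\Bp}_n\notin\mathbb{P}\}$ contributes at most an $O(e^{-cn})$ term to $\Exp[D_f(\hat{\Bp}_n\Vert\Bp)]$. More generally, fix a constant $\tau<\min_k p_k$ and restrict to the typical event $T_n=\{|\Delta_k|\le\tau,\,\forall k\}$, where $\Delta_k\triangleq\check{p}_{n,k}-p_k$; a Chernoff-type concentration argument (in the spirit of Lemma~\ref{lem:concentration_inequality}) bounds $\Prob(T_n^c)$ by $e^{-c'n}$. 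On $T_n$ the identities $\hat{\Bp}_n=\check{\Bp}_n$ and $x_k\triangleq\check{p}_{n,k}/p_k\in[1-\tau/p_k,\,1+\tau/p_k]$ hold, so the Taylor hypothesis of the theorem applies pointwise. Thus, up to $O(e^{-c''n})$ error, the problem reduces to analysing $\Exp[D_f(\check{\Bp}_n\Vert\Bp)]$.

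Next I would apply the hypothesised fourth-order expansion of $f$ to each $x_k$, multiply by $p_k$ and sum, obtaining
\begin{equation*}
	D_f(\check{\Bp}_n\Vert\Bp)
	= \sum_{\rho=1}^4 \frac{f^{(\rho)}(1)}{\rho!}\sum_{k=1}^K \frac{\Delta_k^\rho}{p_k^{\rho-1}} + R_n,
\end{equation*}
with $|R_n|\le C'\sum_k|\Delta_k|^{4+\gamma}/p_k^{3+\gamma}$. The $\rho=1$ contribution vanishes identically because $\BW\myone^\Tr=\myone^\Tr$ forces $\BW^{-1}\myone^\Tr=\myone^\Tr$, hence $\sum_k\Delta_k=0$ almost surely. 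To compute $\Exp[\Delta_k^\rho]$ for $\rho=2,3,4$, I expand $\Delta_k=\sum_\ell(t_\ell(\By_n)-q_\ell)V_{\ell,k}$ with $V_{\ell,k}=(\BW^{-1})_{\ell,k}$ and $\Bq=\Bp\BW$, then apply standard multinomial central moments. Using that $\sum_\ell q_\ell V_{\ell,k}^2=\nu_{2,k}$ and analogously $\sum_\ell q_\ell V_{\ell,k}^3=\nu_{3,k}$, the bivariate and higher cumulant terms telescope to give $\Exp[\Delta_k^2]=n^{-1}(\nu_{2,k}-\nu_{1,k}^2)$, $\Exp[\Delta_k^3]=n^{-2}(\nu_{3,k}-3\nu_{1,k}\nu_{2,k}+2\nu_{1,k}^3)+O(n^{-3})$, and (the Gaussian-like leading part) $\Exp[\Delta_k^4]=3n^{-2}(\nu_{2,k}-\nu_{1,k}^2)^2+O(n^{-3})$. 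Dividing each by $p_k^{\rho-1}=\nu_{1,k}^{\rho-1}$, summing over $k$ and using $\sum_k p_k=1$ to absorb constants (yielding the $-1$, $+2$, $+1$ offsets in $A$, $B$, $C$ respectively) collects the stated expressions.

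The main obstacle I anticipate is the bookkeeping of the fourth-order multinomial moment: expanding $\bigl(\sum_\ell(t_\ell-q_\ell)V_{\ell,k}\bigr)^4$ yields a variety of cross terms indexed by set partitions, and one must verify that only the partitions into two pairs survive at order $n^{-2}$ (reproducing the $3(\nu_{2,k}-\nu_{1,k}^2)^2$ factor that ultimately gives $C$), while the pure fourth-order and mixed higher-joint cumulants are pushed to $O(n^{-3})$. The secondary technical check is controlling the remainder $\Exp[R_n]$: by Marcinkiewicz--Zygmund (or direct multinomial moment inequalities) one has $\Exp|\Delta_k|^{4+\gamma}=O(n^{-(2+\gamma/2)})=o(n^{-2})$, and combining this with the $O(e^{-c''n})$ bound from step~1 yields an overall $o(n^{-2})$ remainder, establishing~\eqref{asymptotic_expansion}.
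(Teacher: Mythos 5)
Your proposal is correct and reaches the same conclusion as the paper, but by a genuinely different route on two technical points. Where the paper derives the moments $\Exp[(\check{p}_{n,k}-p_k)^\rho]$ by differentiating the moment-generating function $\mathcal{M}_k(\lambda)$ and invoking Fa\`a di Bruno's formula, you compute them directly from the representation $\Delta_k=\tfrac{1}{n}\sum_{i=1}^n \bigl(\tilde{W}_{Y_i,k}-p_k\bigr)$ as a sum of i.i.d.\ bounded centered terms, tracking which set-partitions of the indices contribute at order $n^{-2}$. Both are correct; the paper's MGF route is more systematic and yields closed forms for all moments at once, while yours is more elementary and makes the vanishing of the $\rho=1$ term and the absorption of the $\sum_k p_k=1$ constants (producing the $-1,+2,+1$ offsets in $A,B,C$) more transparent. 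For the Taylor remainder, the paper controls the tail via a G\"artner--Ellis large-deviation estimate, whereas you split on a typical event $T_n$ and apply Marcinkiewicz--Zygmund-type moment bounds to get $\Exp|\Delta_k|^{4+\gamma}=O(n^{-(2+\gamma/2)})$; this is a simpler argument that yields the same $o(n^{-2})$ conclusion. One thing you leave implicit but should spell out: the hypothesised remainder bound $O(|x-1|^{4+\gamma})$ is local at $x=1$, so you need the constant to be uniform on the compact interval $x\in[1-\tau/\min_k p_k,\,1+\tau/\min_k p_k]$ induced by $T_n$ (which holds here since $f$ is four times differentiable at $1$ and finite on $(0,\infty)$). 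Incidentally, your derivation reveals that the theorem's stated $B$ contains a typo --- it should read $\sum_k\bigl(\nu_{3,k}/\nu_{1,k}^2 - 3\nu_{2,k}/\nu_{1,k}\bigr)$, not $3\nu_{2,k}/\nu_{3,k}$ --- and your direct calculation produces the correct form.
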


\begin{table}[htb]
\centering\footnotesize
{\renewcommand{\arraystretch}{1.4}
\begin{tabular}{|m{2.5cm}|m{2cm}|m{1cm}|m{1cm}|m{1cm}|m{1cm}|}
	\cline{2-6}
	\multicolumn{1}{c|}{}
	& $f(x)$ & $f^{(1)}(1)$ & $f^{(2)}(1)$ & $f^{(3)}(1)$ & $f^{(4)}(1)$ \\
	\hline
	KL divergence & $x\ln(x)$ & $1$ & $1$ & $-1$ & $2$ \\
	\hline
	\multirow{2}{*}{\parbox[c]{0cm}{Hellinger\\distance}} & $(\sqrt{x}-1)^2$ & 0 & $\nicefrac{1}{2}$ & $-\nicefrac{3}{4}$ & $\nicefrac{15}{8}$ \\
	\cline{2-6}
	& $1-\sqrt{x}$ & $-\nicefrac{1}{2}$ & $\nicefrac{1}{4}$ & $-\nicefrac{3}{8}$ & $\nicefrac{15}{16}$ \\
	\hline
	\multirow{2}{*}{\parbox[c]{2cm}{Pearson\\$\chi^2$ divergence}} & $(x-1)^2$ & $0$ & $2$ & $0$ & $0$ \\
	\cline{2-6}
	& $x^2-1$ & $2$ & $2$ & $0$ & $0$ \\
	\hline
	\vspace{1mm}\parbox[]{0cm}{Triangular\\discrimination}\vspace{1mm}  & $\frac{(x-1)^2}{x+1}$ & $0$ & $1$ & $-\nicefrac{3}{2}$ & $3$ \\
	\hline
	TV distance & $|x-1|$ & -- & -- & -- & -- \\
	\hline
\end{tabular}
}
\caption{First four derivatives at $1$ of functions $f(x)$ associated to different $f$-divergences. Some definitions vary across the literature.}
\end{table}

If one assumes that $f$ is only twice differentiable and that $f(x) = f^{(1)}(1)(x-1) + \tfrac{1}{2} f^{(2)}(1)(x-1)^2 + O\bigl(|x-1|^{2+\gamma}\bigr)$, one can also prove a simpler version of Theorem~\ref{thm:asymptotic_expansion}, namely that $\mathscr{L}^{(n)}_{f\text{-}\mathrm{DIV}}(\Bp,\BW) = \frac{A f''(1)}{2n} + O(n^{-2})$.

\begin{theorem}[Expansion of MSE loss]   \label{thm:MSE}
It holds that\footnote{Unlike the differentiable $f$-divergence metrics which can be Taylor-expanded [cf.~\eqref{asymptotic_expansion}] to, in general, an infinity of terms, the MSE metric expansion has only a single $O(n^{-1})$ term. That is, it is exact up to an exponential remainder term which is attributable to the projection operation.}
\begin{equation}
	\mathscr{L}^{(n)}_{\mathrm{MSE}}(\Bp,\BW)
	= \frac{1}{n}\sum_{k=1}^K \bigl( \nu_{2,k} - \nu_{1,k}^2 \bigr) + O\bigl(e^{-n D(\partial\mathbb{P}\BW \Vert \Bp\BW)}\bigr).   \label{asymptotic_MSE_expansion}
\end{equation}
\end{theorem}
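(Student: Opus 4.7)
The plan is to compute the MSE for the \emph{unprojected} statistic $\check{\Bp}_n = \Bt(\By_n)\BW^{-1}$ in closed form, obtain the advertised leading $1/n$ term exactly (with no higher-order Taylor tail), and then dispatch the projection correction using Lemma~\ref{lem:concentration_inequality}.

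First I would exploit linearity. Writing $\BM = \BW^{-1}$ and $\Bq = \Bp\BW$, the identity $\Bp = \Bq\BM$ gives $\check{\Bp}_n - \Bp = (\Bt(\By_n) - \Bq)\BM$, so
\begin{equation*}
    \Exp\bigl[\lVert \check{\Bp}_n - \Bp \rVert_2^2\bigr]
    = \trace\bigl(\BM\BM^\Tr \cov(\Bt(\By_n))\bigr).
\end{equation*}
Since $\By_n \sim \Bq^{\otimes n}$, the empirical-distribution row vector $\Bt(\By_n)$ has the standard multinomial covariance $\cov(\Bt(\By_n)) = \frac{1}{n}(\diag(\Bq) - \Bq^\Tr\Bq)$. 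Expanding the two resulting traces, I would identify
\begin{equation*}
    \trace\bigl(\BM\BM^\Tr \diag(\Bq)\bigr)
    = \sum_{j,k} q_j M_{j,k}^2
    = \sum_{k=1}^K \nu_{2,k}
\end{equation*}
by the very definition~\eqref{def:nu} of $\nu_{2,k} = \Bq(\BM \odot \BM)\Be_k^\Tr$, and
\begin{equation*}
    \trace\bigl(\BM\BM^\Tr \Bq^\Tr \Bq\bigr)
    = \lVert \Bq\BM \rVert_2^2
    = \lVert \Bp \rVert_2^2
    = \sum_{k=1}^K \nu_{1,k}^2,
\end{equation*}
using $\nu_{1,k} = p_k$. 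Combining yields $\Exp\bigl[\lVert \check{\Bp}_n - \Bp \rVert_2^2\bigr] = \frac{1}{n}\sum_k (\nu_{2,k} - \nu_{1,k}^2)$ \emph{exactly}; this is why, as the footnote to the theorem remarks, the MSE expansion has no further $n^{-2}$ term.

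Second I would bridge $\check{\Bp}_n$ to the projected estimator $\hat{\Bp}_n$. Since $\mathrm{Proj}_\mathbb{P}$ is idempotent on $\mathbb{P}$, we have $\hat{\Bp}_n = \check{\Bp}_n$ on the event $\{\check{\Bp}_n \in \mathbb{P}\}$, so
\begin{equation*}
    \Exp\bigl[\lVert \hat{\Bp}_n - \Bp \rVert_2^2\bigr] - \Exp\bigl[\lVert \check{\Bp}_n - \Bp \rVert_2^2\bigr]
    = \Exp\!\left[\bigl(\lVert \hat{\Bp}_n - \Bp \rVert_2^2 - \lVert \check{\Bp}_n - \Bp \rVert_2^2\bigr)\mathds{1}\{\check{\Bp}_n \notin \mathbb{P}\}\right].
\end{equation*}
The integrand is uniformly bounded by a constant $C(\BW)$ that depends only on $\BW$: indeed $\lVert \hat{\Bp}_n - \Bp \rVert_2^2 \leq 2$ because both vectors lie in $\mathbb{P}$, while $\lVert \check{\Bp}_n \rVert_\infty \leq \max_k \sum_j |M_{j,k}|$ because $\Bt(\By_n)$ has nonnegative entries summing to one. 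Invoking Lemma~\ref{lem:concentration_inequality} then bounds the right-hand side by $C(\BW)\, e^{-nD(\partial\mathbb{P}\BW\Vert \Bp\BW)}$, which is exactly the remainder term in~\eqref{asymptotic_MSE_expansion}.

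There is no real obstacle here—this is essentially a bookkeeping argument. The one subtle point is verifying that the exponentially small probability $\Prob\{\check{\Bp}_n \notin \mathbb{P}\}$ is not cancelled by a large-magnitude integrand; this is handled by the uniform bound on the entries of $\check{\Bp}_n$ inherited from $\Bt(\By_n) \in \mathbb{P}$, which is the only place where the structure of the projection operation $\mathrm{Proj}_\mathbb{P}$ plays a role.
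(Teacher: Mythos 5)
Your proof is correct, and it takes a genuinely different (and more self-contained) route from the paper's. For the leading term, the paper reuses the moment-generating-function machinery of Appendix~\ref{app:proof:asymptotic_expansion}, extracting $\check{\mu}_{n,k}^{(2)} = n(\nu_{2,k} - \nu_{1,k}^2)$ from~\eqref{second_moment_evaluation}, and then decomposes $\hat{\Bp}_n - \Bp = (\hat{\Bp}_n - \check{\Bp}_n) + (\check{\Bp}_n - \Bp)$ to expand the squared norm into three terms, bounding the cross term and the squared correction term by the entrywise estimates~\eqref{p_hat_check_gap}--\eqref{p_check_gap}. You instead exploit the linearity of $\Bt(\By_n)\mapsto\check{\Bp}_n$ and write the unprojected MSE as a trace against the standard multinomial covariance $\frac{1}{n}(\diag(\Bq)-\Bq^\Tr\Bq)$; the two traces reduce to $\sum_k\nu_{2,k}$ and $\sum_k\nu_{1,k}^2$ by inspection, which is a cleaner and visibly exact derivation of the $1/n$ coefficient that does not depend on the MGF apparatus. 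Your bridging argument is equivalent to the paper's: both hinge on the idempotence of the projection (so $\hat{\Bp}_n = \check{\Bp}_n$ when $\check{\Bp}_n\in\mathbb{P}$), a uniform bound on the integrand in terms of $\BW^{-1}$, and Lemma~\ref{lem:concentration_inequality} to get the exponential tail; you simply write the error as a single difference of expectations rather than as two residual terms. The only thing I would add for completeness is to make the uniform bound on $\lVert\check{\Bp}_n-\Bp\rVert_2^2$ fully explicit (e.g.\ $\lVert\check{\Bp}_n\rVert_\infty \leq \max_{j,k}|[\BW^{-1}]_{j,k}|$ since $\check{p}_{n,k}$ is a convex combination of the entries of the $k$-th column of $\BW^{-1}$), but your claim that it is a constant depending only on $\BW$ is correct and suffices.
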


\begin{theorem}[Expansion of TV loss]   \label{thm:TV}
It holds that
\begin{equation}
	\mathscr{L}^{(n)}_{\mathrm{TV}}(\Bp,\BW)
	= \sqrt{\frac{2}{\pi n}} \sum_{k=1}^K \sqrt{\nu_{2,k} - \nu_{1,k}^2} + o\left(\frac{1}{\sqrt{n}}\right).   \label{asymptotic_TV_expansion}
\end{equation}
\end{theorem}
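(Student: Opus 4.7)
The plan is to reduce the TV-loss expectation to the expected $\ell_1$-norm of the unprojected estimator $\check{\Bp}_n$ (up to exponentially small corrections), then invoke the multivariate central limit theorem on the empirical output type $\Bt(\By_n)$, transport it through the linear map $\BW^{-1}$, and finally compute the $\ell_1$-norm expectation of the limiting Gaussian componentwise.

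First, I would argue that replacing $\hat{\Bp}_n$ by $\check{\Bp}_n$ only incurs an $o(n^{-1/2})$ error. Split the expectation according to the event $\{\check{\Bp}_n \in \mathbb{P}\}$ and its complement: on the former, $\hat{\Bp}_n = \check{\Bp}_n$ by idempotence of the projection; on the latter, $\|\hat{\Bp}_n - \Bp\|_1 \leq 2$ while by Cauchy--Schwarz $\Exp[\|\check{\Bp}_n - \Bp\|_1 \mathds{1}\{\check{\Bp}_n \notin \mathbb{P}\}] \leq \sqrt{\Exp\|\check{\Bp}_n - \Bp\|_1^2 \cdot \Prob\{\check{\Bp}_n \notin \mathbb{P}\}}$, which by Theorem~\ref{thm:MSE} and Lemma~\ref{lem:concentration_inequality} is of order $O\bigl(n^{-1/2} e^{-\frac{n}{2} D(\partial\mathbb{P}\BW\Vert\Bp\BW)}\bigr) = o(n^{-1/2})$.

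Next, using $\Bp = \Bq\BW^{-1}$ with $\Bq = \Bp\BW$, write
\begin{equation*}
	\sqrt{n}\bigl(\check{\Bp}_n - \Bp\bigr)
	= \sqrt{n}\bigl(\Bt(\By_n) - \Bq\bigr) \BW^{-1}.
\end{equation*}
Since $\By_n$ is i.i.d.\ under $\Bq$, the multivariate CLT gives $\sqrt{n}(\Bt(\By_n) - \Bq) \Rightarrow \mathcal{N}(\mynull, \diag(\Bq) - \Bq^\Tr \Bq)$. Hence $\sqrt{n}(\check{\Bp}_n - \Bp) \Rightarrow \BZ$ where $\BZ$ is zero-mean Gaussian with covariance $\BV = \BW^{-\Tr}(\diag(\Bq) - \Bq^\Tr \Bq)\BW^{-1}$. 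A short computation yields, for each $k$,
\begin{equation*}
	V_{kk}
	= \sum_{\ell} q_\ell [\BW^{-1}]_{\ell,k}^2 - \left(\sum_\ell q_\ell [\BW^{-1}]_{\ell,k}\right)^2
	= \nu_{2,k} - \nu_{1,k}^2,
\end{equation*}
using the definitions~\eqref{def:nu} and the fact that $\Bq\BW^{-1} = \Bp$.

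Then, applying the continuous mapping theorem to the coordinatewise map $\Bz \mapsto |z_k|$ gives $\sqrt{n}\,|(\check{\Bp}_n)_k - p_k| \Rightarrow |Z_k|$ with $Z_k \sim \mathcal{N}(0, V_{kk})$, and hence $\Exp|Z_k| = \sqrt{2 V_{kk}/\pi}$. To upgrade the distributional convergence to convergence of first moments, I would invoke uniform integrability: the second moment $\Exp\bigl[n(\check p_{n,k} - p_k)^2\bigr]$ is uniformly bounded in $n$ (it equals the $(k,k)$-entry of $\BW^{-\Tr}\cov(\Bt(\By_n))\BW^{-1}$ scaled by $n$, which is $V_{kk}$). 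Summing over $k$ and combining with the projection bound from the first paragraph yields
\begin{equation*}
	\sqrt{n}\,\mathscr{L}^{(n)}_{\mathrm{TV}}(\Bp,\BW)
	\xrightarrow[n\to\infty]{} \sqrt{\tfrac{2}{\pi}} \sum_{k=1}^K \sqrt{\nu_{2,k} - \nu_{1,k}^2},
\end{equation*}
which is the claim.

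The main obstacle is the justification of the limit interchange between expectation and convergence in distribution for the sum of absolute values; once uniform integrability (via bounded second moments) is established, the rest is mechanical. A secondary subtlety is handling the projection cleanly, but Lemma~\ref{lem:concentration_inequality} makes its contribution exponentially small compared to the $n^{-1/2}$ scaling of the leading term.
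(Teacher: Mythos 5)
Your proof is correct and reaches the same conclusion, but by a somewhat different route in the central step. Where the paper computes the moment-generating function of the normalized variate $Z_{n,k} = \sqrt{n}(\check p_{n,k}-p_k)/\sqrt{\nu_{2,k}-\nu_{1,k}^2}$ explicitly via a Taylor expansion of the cumulant-generating function and then invokes L\'evy's continuity theorem for MGFs, you invoke the multivariate CLT on $\sqrt{n}(\Bt(\By_n)-\Bq)$ and push it through the linear map $\BW^{-1}$, which is the more standard textbook path. The two routes differ in how they justify the limit interchange $\lim_n \Exp|Z_{n,k}| = \Exp|Z|$: the paper's MGF convergence on a neighbourhood of the origin implicitly buys uniform integrability of all polynomial moments (though the paper does not make this explicit), whereas you argue it directly from the uniformly bounded second moment $\Exp[n(\check p_{n,k}-p_k)^2] = \nu_{2,k}-\nu_{1,k}^2$; your version of this step is in fact the more transparent of the two. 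Your projection-error bound also differs slightly in form (Cauchy--Schwarz against the indicator plus a second-moment bound, versus the paper's pointwise bound $|\hat p_{n,k}-\check p_{n,k}|\le 1+\lVert\tilde\Bw_k\rVert_2$ followed by the concentration lemma); both yield the required $o(n^{-1/2})$. One small clarification worth adding: when you write that $\Exp\lVert\check\Bp_n-\Bp\rVert_1^2 = O(n^{-1})$ ``by Theorem~\ref{thm:MSE}'', that theorem is stated for the \emph{projected} estimator $\hat\Bp_n$; what you actually need is the exact identity $\Exp\lVert\check\Bp_n-\Bp\rVert_2^2 = \tfrac{1}{n}\sum_k(\nu_{2,k}-\nu_{1,k}^2)$, which is the first (unprojected) term in the paper's proof of that theorem, combined with $\lVert\cdot\rVert_1 \le \sqrt{K}\lVert\cdot\rVert_2$.
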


The proofs of Theorems~\ref{thm:asymptotic_expansion}, \ref{thm:MSE} and \ref{thm:TV}  are given in Appendices~\ref{app:proof:asymptotic_expansion}, \ref{app:proof:MSE} and \ref{app:proof:TV}, respectively. By applying them on the definition of the normalized first-order terms \eqref{utility_metrics}, while calling to mind that $\nu_{1,k} = p_k$ and that $\nu_{2,k} = \Bp\BPhi(\BW)\Be_k^\Tr$, we obtain the explicit expressions
\begin{subequations}
\begin{IEEEeqnarray}{rCl}
	\alpha_{f\text{-}\mathrm{DIV}}(\Bp,\BW)
	&=& \frac{\Bp\BPhi(\BW)\Bp^{-\Tr}-1}{K-1}   \IEEEeqnarraynumspace\label{alpha_KL_in_full} \\
	\alpha_{\mathrm{MSE}}(\Bp,\BW)
	&=& \frac{ \Bp \BPhi(\BW) \myone^\Tr - \lVert \Bp \rVert_2^2 }{1-\lVert \Bp \rVert_2^2}   \IEEEeqnarraynumspace\label{alpha_MSE_in_full} \\
	\alpha_{\mathrm{TV}}(\Bp,\BW)
	&=& \left( \frac{ \sqrt{ \Bp \BPhi(\BW) - \Bp \odot \Bp } \, \myone^\Tr }{ \sqrt{ \Bp - \Bp \odot \Bp } \, \myone^\Tr } \right)^2   \IEEEeqnarraynumspace\label{alpha_TV_in_full}
\end{IEEEeqnarray}
\end{subequations}
where in~\eqref{alpha_TV_in_full}, square roots on vectors are applied entrywise.
Notice how $\BPhi(\BW)$ plays a central role, in that it fully captures the dependency in $\BW$ of all three loss metrics. To better appreciate the significance and behavior of these metrics, a few remarks are in order.

\begin{myrmk}
Interestingly, for the uniform source $\Bp = \myone/K$, the $f$-divergence and MSE metrics happen to coincide, regardless of the choice of mechanism $\BW \in \mathcal{W}$:
\begin{IEEEeqnarray}{rCl}
	\alpha_{f\text{-}\mathrm{DIV}}\bigl(\tfrac{\myone}{K},\BW\bigr)
	= \alpha_{\mathrm{MSE}}\bigl(\tfrac{\myone}{K},\BW\bigr)
	&=& \frac{\varphi(\BW)-1}{K-1}.   \IEEEeqnarraynumspace\label{alpha_for_uniform}
\end{IEEEeqnarray}
By contrast, the TV metric is generally smaller, which can be shown by Jensen's inequality:
\begin{equation}
	\alpha_{\mathrm{TV}}\bigl(\tfrac{\myone}{K},\BW\bigr)
	= \left( \frac{\textstyle\sum_k \sqrt{ K \sum_\ell \Phi_{k,\ell}(\BW) - 1}}{K\sqrt{K-1}} \right)^2
	\leq \frac{\varphi(\BW)-1}{K-1}.   \label{Jensen}
\end{equation}
As we shall see in Lemma~\ref{lem:matching_metrics} of Section~\ref{sec:upper_bounds}, this inequality becomes tight when the mechanism is circulant, thus making all three metrics match in such case.
\end{myrmk}
\begin{myrmk}
In the noiseless case, $\BW$ and its inverse $\BW^{-1}$ are permutation matrices ($\BPi$ and $\BPi^\Tr$, respectively), whose entries are equal to $0$ or $1$. Entrywise squaring leaves them unchanged, hence $\BPhi(\BPi) = \BPi(\BPi^\Tr \odot \BPi^\Tr) = \BPi \BPi^\Tr = \myid$ and we see that the metrics~\eqref{alpha_KL_in_full}--\eqref{alpha_TV_in_full} all become equal to one, which is consistent with our expectation based on the definitions~\eqref{alpha_DIV}--\eqref{alpha_TV}. Furthermore we can verify that for $\BW$ a permutation, specializing Theorem~\ref{thm:asymptotic_expansion} to the KL divergence recovers the asymptotic expansion \eqref{Abe_expansion} by evaluation of \eqref{asymptotic_expansion}--\eqref{gamma}.
\end{myrmk}
\begin{myrmk}   \label{rmk3}
As we shall see in the next section [cf.~\eqref{DPI_particularized}], it holds that $[\BPhi(\BW)]_{k,\ell} \geq \delta_{k,\ell}$ (where $\delta_{k,\ell}$ stands for the Dirac delta), thanks to which it becomes manifest that the metrics~\eqref{alpha_KL_in_full}--\eqref{alpha_TV_in_full} are larger or equal to one.
\end{myrmk}
\begin{myrmk}
Concerning the dependency on $\Bp$, we notice from inspecting \eqref{alpha_KL_in_full}--\eqref{alpha_TV_in_full} that $\alpha_{\mathrm{MSE}}(\Bp,\BW)$ and $\alpha_{\mathrm{TV}}(\Bp,\BW)$ become large if $\Bp$ tends to a canonical base vector (i.e., when $\lVert \Bp \rVert_2^2 \to 1$ or equivalently $\sqrt{\Bp - \Bp \odot \Bp}\,\myone^\Tr \to 1$) whereas for the divergence metric $\alpha_{f\text{-}\mathrm{DIV}}(\Bp,\BW)$, it already suffices to have at least one low-probability symbol ($p_k \to 0$ for some $k \in [K]$) for $\alpha_{f\text{-}\mathrm{DIV}}(\Bp,\BW)$ to become large. Loosely speaking, low-probability symbols are more ``penalizing'' for the $f$-divergence loss metric than they are for the MSE or TV loss metrics.
\end{myrmk}

To illustrate Theorems~\ref{thm:asymptotic_expansion}, \ref{thm:MSE} and \ref{thm:TV}, we provide numerical evaluations of $\mathscr{L}_{f\text{-}\mathrm{DIV}}^{(n)}$, $\mathscr{L}_{\mathrm{MSE}}^{(n)}$ and $\mathscr{L}_{\mathrm{TV}}^{(n)}$ for both the ML and MSE estimators~\eqref{ML_KL_equivalence} and \eqref{MMSE_estimator}, respectively, and compare them against the non-privatized performance. For the $\epsilon$-private mechanism, we choose the matrix
\begin{equation}   \label{W_star}
	\BW_{\epsilon,\star}
	= \frac{1}{e^\epsilon+K-1}
	\begin{bmatrix}
		e^\epsilon & 1 & \hdots & 1 \\
		1 & e^\epsilon & \ddots & \vdots \\
		\vdots & \ddots & \ddots & 1 \\
		1 & \hdots & 1 & e^\epsilon
	\end{bmatrix}
\end{equation}
which we shall call the {\em step mechanism}, and for which we will prove optimality results further on.
With this choice of mechanism, the ML and MMSE estimators~\eqref{ML_KL_equivalence} and \eqref{MMSE_estimator} are expressible by waterfilling-type closed forms 
\begin{subequations}
\begin{IEEEeqnarray}{rCl}
	\hat{p}_{\mathrm{ML},k}
	&=& \frac{1}{e^\epsilon-1} \max\bigl\{ 0 , (e^\epsilon+K-1) t_k + \eta_{\mathrm{ML}} \bigr\} \IEEEeqnarraynumspace\label{ML_projection} \\
	\hat{p}_{\mathrm{MMSE},k}
	&=& \frac{1}{e^\epsilon-1} \max\bigl\{ 0 , \eta_{\mathrm{MMSE}} t_k - 1 \bigr\}   \IEEEeqnarraynumspace\label{MMSE_projection}
\end{IEEEeqnarray}
\end{subequations}
where the scalars $\eta_{\mathrm{ML}}$ and $\eta_{\mathrm{MMSE}}$ are chosen such that the sum constraints $\sum_k \hat{p}_{\mathrm{ML},k} = \sum_k \hat{p}_{\mathrm{MMSE},k} = 1$ are met.

Evaluating the quantities $\nu_{\rho,k}$ defined in~\eqref{def:nu} for the step mechanism~\eqref{W_star} yields
\begin{subequations}
\begin{IEEEeqnarray*}{rCl}
	\nu_{1,k}
	&=& p_k   \IEEEeqnarraynumspace\IEEEyesnumber\\
	\nu_{2,k}
	&=& \frac{1}{(e^\epsilon-1)^2} \bigl( (e^\epsilon-1)(e^\epsilon+K-3) p_k + e^\epsilon+K-2 \bigr) \IEEEeqnarraynumspace\IEEEyesnumber\\
	\nu_{3,k}
	&=& \frac{1}{(e^\epsilon-1)^3(e^\epsilon+K-1)} \bigl( (e^\epsilon-1)((e^\epsilon+K-2)^3+1) p_k \\
	&& \qquad {} + (e^\epsilon+K-1)(e^\epsilon+K-2)(e^\epsilon+K-3) \bigr).   \IEEEeqnarraynumspace\IEEEyesnumber
\end{IEEEeqnarray*}
\end{subequations}

Figure~\ref{fig:loss_metrics_vs_sample_size} shows how the fidelity loss metrics decay with $n$. The exact values of $\mathscr{L}_{f\text{-}\mathrm{DIV}}^{(n)}$ (for KL divergence $f(x) = x \ln(x)$), $\mathscr{L}_{\mathrm{MSE}}^{(n)}$ and $\mathscr{L}_{\mathrm{TV}}^{(n)}$ are plotted against the second-order approximation~\eqref{asymptotic_expansion} and the first-order approximations~\eqref{asymptotic_MSE_expansion} and~\eqref{asymptotic_TV_expansion}, respectively. The same plot is exhibited twice, once for a small (Figure~\ref{fig:loss_metrics_vs_sample_size_zoomed_in}) and once for a large (Figure~\ref{fig:loss_metrics_vs_sample_size_zoomed_out}) range of values of $n$. Though it is not the focus of this article, it is worth mentioning that the pseudo-periodic fluctuations visible on Figure~\ref{fig:loss_metrics_vs_sample_size_zoomed_in} are traceable to how empirical distributions are better approximations of the limiting distribution for certain values of $n$ than for others, an effect related to Diophantine approximations. Note that the periodicity and magnitude of fluctuations crucially depend on the alphabet cardinality $K$ and on the values of $p_k, k = 1,\dotsc,K$. 
\begin{figure}[ht]
\begin{center}
\subfloat[For $n$ ranging from $1$ to $200$]{
	\includegraphics{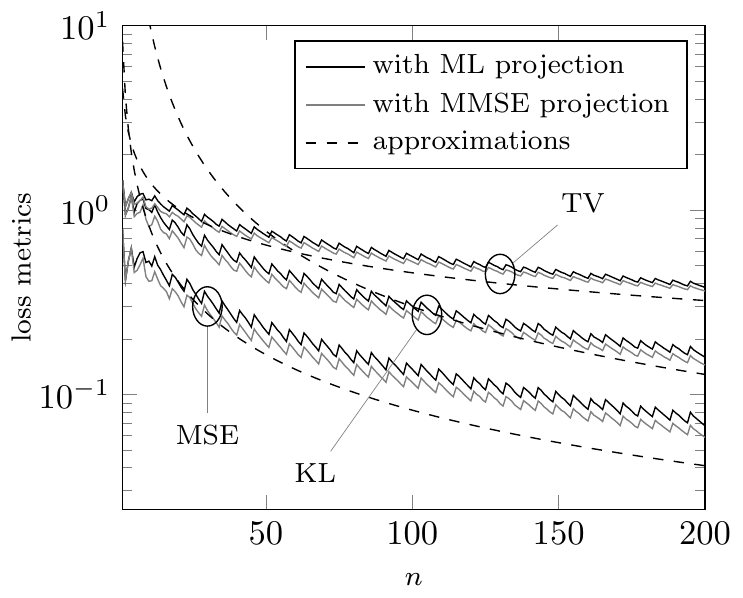}
	\label{fig:loss_metrics_vs_sample_size_zoomed_in}
}
\subfloat[For $n$ ranging from $1$ to $2000$]{
	\includegraphics{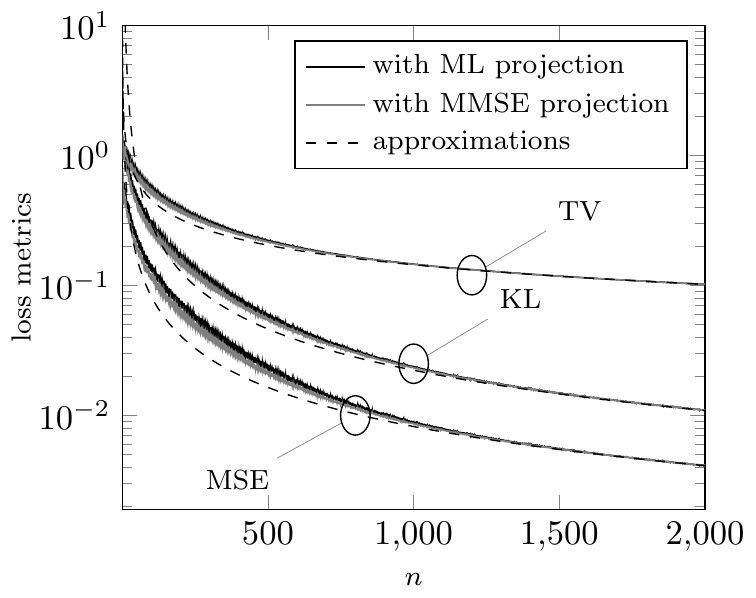}
	\label{fig:loss_metrics_vs_sample_size_zoomed_out}
}
\end{center}
\caption{Unnormalized loss metrics $\mathscr{L}^{(n)}(\Bp,\BW_{\epsilon,\star})$ (for KL, MSE and TV metrics) plotted as functions of the sample size $n$. We have chosen $K = 4$, $\epsilon = 1$, $\Bp = [0.5\ 0.25\ 0.125\ 0.125]$ and the step mechanism $\BW_{\epsilon,\star}$. The dashed curves show the approximations via truncated expansion, obtained from omitting the big-$O$ or little-$o$ remainder terms in~\eqref{asymptotic_expansion}, \eqref{asymptotic_MSE_expansion}, \eqref{asymptotic_TV_expansion}, respectively.   \label{fig:loss_metrics_vs_sample_size}
}
\end{figure}

\begin{figure}[ht]
\begin{center}
\includegraphics{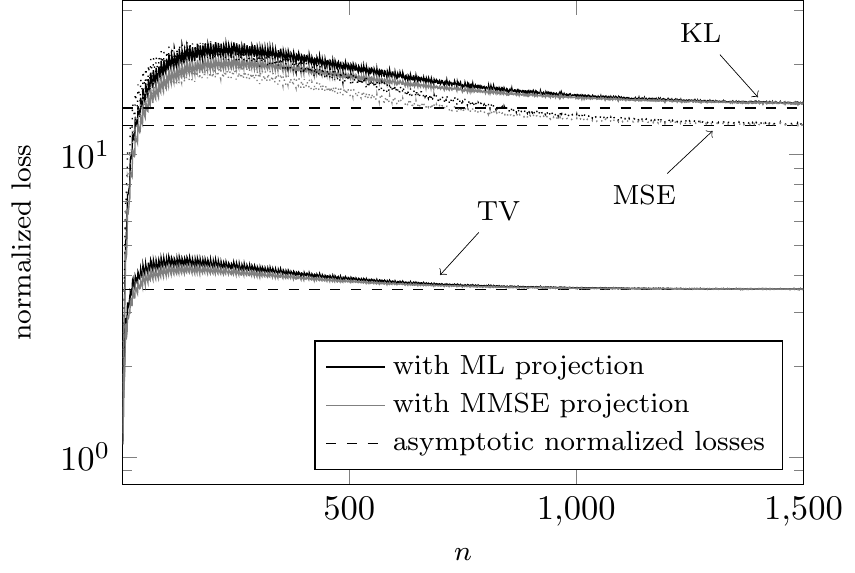}
\caption{Normalized loss metrics $\mathscr{L}^{(n)}(\Bp,\BW_{\epsilon,\star})/\mathscr{L}^{(n)}(\Bp,\myid)$ plotted as functions of the sample size $n$, with same parameters as in Figure~\ref{fig:loss_metrics_vs_sample_size}. Note that curves corresponding to ML projection are black, whereas curves corresponding to MMSE projection are gray. For better visualization in regions where curves overlap, we have chosen dotted curves for the MSE metric. The dashed curves show the limits as $n \to \infty$, which correspond to the quantities~\eqref{alpha_KL_in_full}--\eqref{alpha_TV_in_full}.}
\end{center}
\end{figure}

\section{Data processing theorems}

Intuitively, it is clear that any estimator of the source distribution $\Bp$ based on the privatized observations $\By_n$ will perform worse, in terms of metrics \eqref{divergence_metric}--\eqref{tv_metric}, than the empirical distribution estimator $\Bt(\Bx_n)$ based on a clear view of the source symbol vector $\Bx_n$. This idea is formalized by the data-processing theorems stated below.

\begin{figure}[htb]
\begin{center}
\begin{tikzpicture}
	\matrix[column sep=4mm] {
		\node (X) {$X \sim \Bp$};
		& \node[draw] (W) {$\BW$};
		& \node (Y) {$Y \sim \Bp\BW$};
		& \node[draw] (WW) {$\BW'$};
		& \node (YY) {$Y' \sim \Bp\BW\BW'$}; \\
	};
	\node[draw,dashed,opacity=0.3,fit={(X)},label=above:source] {};
	\node[draw,dashed,opacity=0.3,fit={(Y)},label=above:first disclosure] {};
	\node[draw,dashed,opacity=0.3,fit={(YY)},label=above:second disclosure] {};
	\draw[->] (X)--(W);
	\draw[->] (W)--(Y);
	\draw[->] (Y)--(WW);
	\draw[->] (WW)--(YY);
\end{tikzpicture}
\end{center}
\caption{Data processing theorems.}
\label{fig:DPI}
\end{figure}
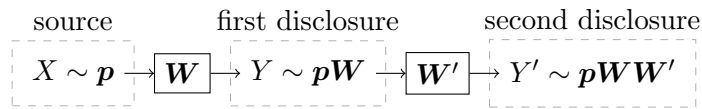

\begin{theorem}[General form]   \label{thm:general_DPI}
Consider $n$ copies of the setup as depicted in Figure~\ref{fig:DPI}. That is, assume that $\By_n \sim (\Bp\BW)^{\otimes n}$ and $\By'_n \sim (\Bp\BW\BW')^{\otimes n}$ are obtained from passing the samples $\Bx_n$ through $n$ copies of the channels $\BW'$ and $\BW\BW'$, respectively. In addition, assume that
\begin{enumerate}
	\item	for $f$-divergence metrics, $f(x)$ is strictly convex\footnote{Note that strict local convexity is satisfied by all $f$-divergences of interest which also satisfy the conditions of Theorem~\ref{thm:asymptotic_expansion}. It is also satisfied by TV distance, in the sense that for $f(x) = |x-1|$, we have $f(\lambda(1-\epsilon) + (1-\lambda)(1+\epsilon')) < \lambda f(1-\epsilon) + (1-\lambda) f(1+\epsilon')$ for all $\epsilon,\epsilon' > 0$.} at $x=1$;
	\item	$\BW'$ is not a permutation.
\end{enumerate}
Then, for any estimator of the form~\eqref{general_estimator}, for any source distribution $\Bp$ supported on $[K]$ and for sufficiently large $n$, we have
\begin{subequations}
\begin{IEEEeqnarray}{rCl}
	\mathscr{L}^{(n)}(\Bp,\BW\BW')
	&>& \mathscr{L}^{(n)}(\Bp,\BW).
\end{IEEEeqnarray}
\end{subequations}
\end{theorem}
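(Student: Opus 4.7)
The plan is to reduce all three data-processing claims to a single matrix-level monotonicity statement on the quantities $\nu_{2,k}(\BW) = [\Bp\BPhi(\BW)]_k$, and then read the conclusion off Theorems~\ref{thm:asymptotic_expansion}, \ref{thm:MSE}, and \ref{thm:TV}. Indeed, the $\BW$-dependence of the leading coefficient of each of the three asymptotic expansions passes entirely through the numbers $\nu_{2,k}(\BW)$. So it suffices to prove the coordinate-wise monotonicity $\nu_{2,k}(\BW\BW') \geq \nu_{2,k}(\BW)$ for every $k\in[K]$, with strict inequality for at least one $k$ whenever $\BW'$ is not a permutation.

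To establish this, I would use a single-sample conditional-covariance decomposition. Let $\check{\Bp} = \Be_Y \BW^{-1}$ be the one-sample unprojected estimator under the first disclosure of Figure~\ref{fig:DPI}, and $\check{\Bp}' = \Be_{Y'}(\BW\BW')^{-1}$ the corresponding one-sample estimator under the second disclosure. A direct check shows $\Exp[\check{\Bp}'\mid Y] = \Be_Y\BW'\BW'^{-1}\BW^{-1} = \check{\Bp}$, hence by the law of total covariance
\begin{equation*}
	\cov(\check{\Bp}') \;=\; \cov(\check{\Bp}) \,+\, \Exp\!\bigl[\cov(\check{\Bp}'\mid Y)\bigr].
\end{equation*}
The additive term is positive semidefinite, and its $(k,k)$ entry is $\nu_{2,k}(\BW\BW')-\nu_{2,k}(\BW)$, since a short computation gives $[\cov(\check{\Bp})]_{kk} = \Exp[\check p_k^2] - p_k^2 = \nu_{2,k}(\BW) - p_k^2$. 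A PSD matrix with vanishing diagonal vanishes identically, so strictness can fail at every coordinate only if $\cov(\check{\Bp}'\mid Y)=\mathbf 0$ almost surely, i.e.\ $\Be_{Y'}\BW'^{-1}$ is $Y$-measurable. Combined with the identity $\Exp[\Be_{Y'}\BW'^{-1}\mid Y] = \Be_Y$, this forces every row of $\BW'$ to concentrate on a single output, collapsing $\BW'$ to a $\{0,1\}$-valued row-stochastic matrix; together with the full-rank assumption this makes $\BW'$ a permutation, contrary to hypothesis.

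The three metric-specific conclusions follow by inspection of the leading terms. For MSE (Theorem~\ref{thm:MSE}), the leading coefficient $\sum_k(\nu_{2,k}-p_k^2)$ strictly grows. For $f$-divergence (Theorem~\ref{thm:asymptotic_expansion}), the leading coefficient is $\tfrac{f''(1)}{2}\bigl(-1+\sum_k \nu_{2,k}/p_k\bigr)$, and the positive weights $1/p_k$ together with $f''(1)>0$ (guaranteed by strict local convexity of $f$ at $1$) preserve the strict monotonicity. For TV (Theorem~\ref{thm:TV}), the leading coefficient is proportional to $\sum_k \sqrt{\nu_{2,k}-p_k^2}$, and strict monotonicity of $\sqrt{\cdot}$ on $[0,\infty)$ transports the pointwise strict increase to the sum.

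Finally, in each case one compares the strictly positive leading-order gap---of order $n^{-1}$ for MSE and $f$-DIV, and of order $n^{-1/2}$ for TV---against the remainder terms provided by the three theorems, which are respectively exponentially small, $O(n^{-2})$, and $o(n^{-1/2})$. The gap dominates the remainder for all sufficiently large $n$, yielding $\mathscr{L}^{(n)}(\Bp,\BW\BW')>\mathscr{L}^{(n)}(\Bp,\BW)$ eventually. The most delicate step, and where I expect the most care to be required, is the characterization of the equality case of the PSD increment: tracking that vanishing of every diagonal entry forces the entire conditional-covariance matrix to vanish, and then that $\Be_{Y'}\BW'^{-1}$ being $Y$-measurable (together with unbiasedness) is equivalent to $\BW'$ being a permutation matrix.
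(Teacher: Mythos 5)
Your proposal is correct in substance but takes a genuinely different route from the paper's. The paper proves Theorem~\ref{thm:general_DPI} directly at finite $n$: treating $\mathscr{D}$ as a jointly convex, continuous loss, it conditions on the intermediate type $\Btau_n = \Bt(\By_n)$ (restricted to the event that both unprojected estimates lie in the simplex), applies \emph{strict} Jensen's inequality to $\Exp[\Btau_n'\mid\Btau_n]$, shows that $\Exp[\Btau_n'\mid\Btau_n]$ is exponentially close to $\Btau_n\BW'$, and controls the projection events by Lemma~\ref{lem:concentration_inequality}. Notably this uses only strict convexity and continuity of $\mathscr{D}$ in its first argument and imposes no differentiability on $f$. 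Your route instead proves the coordinate-wise monotonicity $\nu_{2,k}(\BW\BW')\ge\nu_{2,k}(\BW)$ (strict for some $k$ unless $\BW'$ is a permutation) via a one-sample law-of-total-covariance decomposition, and then reads the conclusion off the leading coefficients of Theorems~\ref{thm:asymptotic_expansion}, \ref{thm:MSE}, \ref{thm:TV}. The covariance argument is clean and the equality-case analysis is sound (including the observation that $Y$-measurability of $\Be_{Y'}\BW'^{-1}$ combined with $\Exp[\Be_{Y'}\BW'^{-1}\mid Y]=\Be_Y$ collapses $\BW'$ to a $\{0,1\}$ row-stochastic matrix); as a welcome side effect, applying it with $\Bp=\Be_i$ yields the entrywise inequality $\BPhi(\BW\BW')\ge\BPhi(\BW)$ of Theorem~\ref{thm:DPI} directly, whereas the paper obtains that result only as a corollary of Theorem~\ref{thm:general_DPI}. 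Two caveats keep your proof strictly less general than the paper's, though: (i) you implicitly invoke the hypotheses of Theorem~\ref{thm:asymptotic_expansion} (four-fold differentiability of $f$, a Taylor remainder bound, finiteness of $f(0)$), which are not assumed in Theorem~\ref{thm:general_DPI}; and (ii) the $f$-divergence leg needs $f''(1)>0$ so that the leading $\tfrac{A f''(1)}{2n}$ term is actually present, while ``strictly convex at $x=1$'' does not formally force $f''(1)>0$ (a function with $f''(1)=0$ but $f^{(4)}(1)>0$ is locally strictly convex yet would render your leading-order comparison vacuous). All $f$-divergences of practical interest have $f''(1)>0$, so this is a technicality, but it is worth noting that the paper's strict-Jensen argument sidesteps it entirely.
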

\begin{proof}
	See Appendix~\ref{app:proof:general_DPI}.
\end{proof}

Unsurprisingly, this data-processing relationship carries over directly to the respective leading terms in the asymptotic expansions of the three loss metrics, as stated in the following corollary.
\begin{corollary}[Data-processing inequality for $\alpha$]   \label{cor:DPI_alpha}
For any source distribution $\Bp$ and channels $\BW$ and $\BW'$, it holds that
\begin{subequations}
\begin{IEEEeqnarray}{rCl}
	\alpha_{f\text{-}\mathrm{DIV}}(\Bp,\BW\BW') &\geq& \alpha_{f\text{-}\mathrm{DIV}}(\Bp,\BW)   \label{alpha_DIV_DPI} \\
	\alpha_{\mathrm{MSE}}(\Bp,\BW\BW') &\geq& \alpha_{\mathrm{MSE}}(\Bp,\BW)   \label{alpha_MSE_DPI} \\
	\alpha_{\mathrm{TV}}(\Bp,\BW\BW') &\geq& \alpha_{\mathrm{TV}}(\Bp,\BW)   \label{alpha_TV_DPI}
\end{IEEEeqnarray}
\end{subequations}
with equality if and only if $\BW'$ is a permutation.
\end{corollary}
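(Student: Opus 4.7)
The plan is to derive the corollary as a direct asymptotic consequence of Theorem~\ref{thm:general_DPI}, combined with the closed-form leading-order expressions~\eqref{alpha_KL_in_full}--\eqref{alpha_TV_in_full} for $\alpha$ in terms of $\BPhi(\BW)$. The argument splits naturally into three stages: the weak inequality, the equality case when $\BW'$ is a permutation, and the strict inequality when $\BW'$ is not.

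For the weak inequality, I would normalize the bound $\mathscr{L}^{(n)}(\Bp,\BW\BW') \geq \mathscr{L}^{(n)}(\Bp,\BW)$ from Theorem~\ref{thm:general_DPI} by $\mathscr{L}^{(n)}(\Bp,\myid)$ (or by its square, to accommodate the extra squaring in~\eqref{alpha_TV}) and pass to the limit $n \to \infty$. Since pointwise limits preserve weak inequalities, this yields $\alpha(\Bp,\BW\BW') \geq \alpha(\Bp,\BW)$ for all three metrics. If $\BW' = \BPi$ is a permutation, then $\BW\BPi$ is merely a column-relabeling of $\BW$, and the permutation-invariance observation recorded in the footnote attached to~\eqref{alpha_TV} gives $\mathscr{L}^{(n)}(\Bp,\BW\BPi) = \mathscr{L}^{(n)}(\Bp,\BW)$ for every $n$, so the corresponding $\alpha$ values coincide.

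The delicate direction is strict inequality when $\BW'$ is not a permutation. Here one cannot simply feed Theorem~\ref{thm:general_DPI} through the limit: its strict inequality is only asserted at finite (large) $n$, and ratio-limits preserve only the weak order. I would work instead at the leading-coefficient level, using~\eqref{alpha_KL_in_full}--\eqref{alpha_TV_in_full}. For the MSE metric, the claim reduces to $\Bp\,\BPhi(\BW\BW')\,\myone^\Tr > \Bp\,\BPhi(\BW)\,\myone^\Tr$, which can be obtained from a Fisher-information data-processing argument on the Markov chain $X \to Y \to Y'$: the Fisher information of $Y' = Y \circ \BW'$ about $\Bp$ is strictly dominated in the L\"owner order by that of $Y$ whenever $\BW'$ is not a permutation, and the leading coefficient of $\mathscr{L}^{(n)}_{\mathrm{MSE}}$ is the trace of the inverse Fisher information restricted to the affine subspace $\sum_k p_k = 1$. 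The $f$-divergence case is handled analogously via the $\chi^2$-type quadratic form governing the second-order Taylor expansion underlying Theorem~\ref{thm:asymptotic_expansion}; the TV case requires an additional Jensen step exploiting strict concavity of $x \mapsto \sqrt{x}$ to transfer the strict entrywise domination of $\BPhi(\BW\BW')$ over $\BPhi(\BW)$ through the square-root in~\eqref{alpha_TV_in_full}.

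The main obstacle is precisely this last step. The difficulty is conceptual rather than computational: the limit defining $\alpha$ flattens strict inequalities, so Theorem~\ref{thm:general_DPI} cannot be used as a black box, and the strict contraction must be pinned down at the $\BPhi$-level rather than at the $\mathscr{L}^{(n)}$-level. Moreover, one must verify that equality in the Fisher-information or Jensen inequalities invoked corresponds \emph{exactly} to $\BW'$ being a permutation, which relies on the full-rank hypothesis on $\BW$ to preclude degenerate cases.
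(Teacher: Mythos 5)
Your treatment of the weak inequality and the ``equality if $\BW'$ is a permutation'' direction is, modulo presentation, the same as the paper's: the paper reaches $\alpha(\Bp,\BW\BW')\geq\alpha(\Bp,\BW)$ by a one-line contrapositive of Theorem~\ref{thm:general_DPI} (if the limit of the ratio were strictly smaller, then $\mathscr{L}^{(n)}(\Bp,\BW\BW')<\mathscr{L}^{(n)}(\Bp,\BW)$ for arbitrarily large $n$, contradicting the theorem), and the $\Leftarrow$ direction of the equality claim is dispatched by the permutation-invariance remark in the footnote to~\eqref{alpha_TV}, exactly as you say. So on these points you and the paper agree.

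Your concern about the ``only if'' direction is the genuinely interesting part, and it is well-placed: taking a limit of ratios really does destroy strict inequalities, so Theorem~\ref{thm:general_DPI} alone cannot deliver strictness of $\alpha(\Bp,\BW\BW')>\alpha(\Bp,\BW)$ when $\BW'$ is not a permutation. What you should realize, though, is that the paper's own two-sentence proof of Corollary~\ref{cor:DPI_alpha} has exactly the same limitation: it establishes only the weak inequality and never touches the equality characterization. The paper's remark following Theorem~\ref{thm:DPI} notes that the corollary and the theorem are ``equivalent'' (one can be recovered from the other by monotonicity of~\eqref{alpha_KL_in_full}--\eqref{alpha_TV_in_full} in the entries of $\BPhi$), which is how one would pass the strict characterization from the $\BPhi$ level to the $\alpha$ level and back; but the proof of Theorem~\ref{thm:DPI} in Appendix~\ref{app:proof:DPI} likewise ends at ``$\Delta\Phi_{j,i}\geq 0$'' without separating the strict case. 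So you have uncovered a gap that is actually present in the paper's argument, not just in a na\"{\i}ve reading of it.

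Your proposed Fisher-information route for the strict direction is a legitimately different approach from anything in the paper, and it is in the right spirit: the leading MSE coefficient $\sum_k(\nu_{2,k}-p_k^2)$ is indeed the asymptotic covariance of the unbiased estimator $\check{\Bp}_n$, and a strict Fisher-information data-processing inequality along $X\to Y\to Y'$ would deliver strictness. The gap in your sketch is that strict L\"owner domination $I_{Y'}(\Bp)\prec I_Y(\Bp)$ for every non-permutation $\BW'$ is itself a nontrivial claim requiring proof: one would have to invoke the equality condition of Fisher DPI (equality iff $Y'$ is sufficient for $\Bp$ given the model), and then show that for a finite-alphabet, invertible, row-stochastic $\BW'$, sufficiency of $Y'$ forces $\BW'$ to have exactly one nonzero entry per column, i.e.\ to be a permutation. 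You assert this but do not argue it, and it is really the crux of the ``only if'' direction. Similarly, the remark that the $f$-divergence and TV cases follow ``analogously'' or ``by a Jensen step'' is a plan rather than a proof. In short: weak inequality and the $\Leftarrow$ direction match the paper; the strict direction is where the real work lies, the paper omits it, and your Fisher-information program is plausible but incomplete at precisely the point you flag as the main obstacle.
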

Recall that $\alpha(\Bp,\myid) = 1$. By setting $\BW$ to the identity in~\eqref{alpha_DIV_DPI}--\eqref{alpha_TV_DPI}, we recover the already known fact that $\alpha(\Bp,\BW') \geq 1$ holds for any mechanism $\BW' \in \mathcal{W}$.
\begin{proof}
Suppose that there exists a source distribution $\Bp$ and a pair of channels $(\BW,\BW')$ such that $\alpha(\Bp,\BW\BW') < \alpha(\Bp,\BW)$ (where $\alpha$ stands for either metric). Then, by Definitions~\eqref{alpha_DIV}--\eqref{alpha_TV}, there exist arbitrarily large sample sizes $n$ such that $\mathscr{L}^{(n)}(\Bp,\BW\BW') < \mathscr{L}^{(n)}(\Bp,\BW)$. This would contradict Theorem~\ref{thm:general_DPI}.
\end{proof}

\begin{theorem}[Data-processing inequality for $\BPhi$]   \label{thm:DPI}
For any square full-rank channels $\BW$ and $\BW'$ of same size, it holds that\footnote{For matrix-valued $\BA$ and $\BB$, an inequality like $\BA \geq \BB$ should be read entrywise. Hence \eqref{Phi_DPI} denotes an array of $K \times K$ simultaneously holding inequalities.}
\begin{equation}   \label{Phi_DPI}
	\BPhi(\BW\BW') \geq \BPhi(\BW).
\end{equation}
with equality\footnote{Equality means that all $K^2$ inequalities are satisfied with equality.} if and only if $\BW'$ is a permutation.
\end{theorem}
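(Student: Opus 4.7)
The plan is to reinterpret the entries of $\BPhi$ as conditional second moments, and then exploit the tower property (conditional Jensen) for the cascade $X \to Y \to Y'$.

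First I would observe, from the definition $\BPhi(\BW) = \BW(\BW^{-1} \odot \BW^{-1})$, that
\begin{equation*}
[\BPhi(\BW)]_{k,\ell} \;=\; \sum_{j=1}^{K} W_{k,j}\bigl((\BW^{-1})_{j,\ell}\bigr)^2 \;=\; \Exp\bigl[\bigl((\BW^{-1})_{Y,\ell}\bigr)^2 \bigm| X = k\bigr],
\end{equation*}
where $X \to Y$ is governed by $\BW$, i.e., $\Prob\{Y = j \mid X = k\} = W_{k,j}$. Expanding $(\BW\BW')^{-1} = (\BW')^{-1}\BW^{-1}$, the same manipulation gives
\begin{equation*}
[\BPhi(\BW\BW')]_{k,\ell} \;=\; \Exp\bigl[U^2 \bigm| X = k\bigr], \qquad U \;\triangleq\; \sum_{m=1}^{K}\bigl((\BW')^{-1}\bigr)_{Y',m}\,(\BW^{-1})_{m,\ell},
\end{equation*}
where $Y' \mid Y = j$ is distributed according to row $j$ of $\BW'$.

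The central step is the conditional mean: since $\sum_{y'} W'_{j,y'}\bigl((\BW')^{-1}\bigr)_{y',m} = \delta_{j,m}$, the inner sum collapses to yield $\Exp[U \mid Y = j] = (\BW^{-1})_{j,\ell}$, i.e., $\Exp[U \mid Y] = V$ with $V \triangleq (\BW^{-1})_{Y,\ell}$. Conditional Jensen (equivalently, the law of total variance) then gives
\begin{equation*}
\Exp[U^2 \mid X = k] \;\geq\; \Exp\bigl[(\Exp[U \mid Y])^2 \bigm| X = k\bigr] \;=\; \Exp[V^2 \mid X = k],
\end{equation*}
which is exactly the desired entrywise inequality $[\BPhi(\BW\BW')]_{k,\ell} \geq [\BPhi(\BW)]_{k,\ell}$ for each $(k,\ell)$.

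The main obstacle is the equality case. Jensen is tight iff $U$ is almost surely a function of $Y$, and imposing this for every $(k,\ell)$ simultaneously requires that for every $(j,y')$ with $W'_{j,y'} > 0$ and every $\ell$, the $y'$-th row of $(\BW')^{-1}\BW^{-1}$ coincides with the $j$-th row of $\BW^{-1}$. Because $\BW$ is full-rank, every column of $\BW$ has a positive entry, so every index $j$ is realized by some $k$; moreover the rows of $\BW^{-1}$ are linearly independent, hence distinct, so $j$ is uniquely determined by $y'$. This forces each column of $\BW'$ to support at most one positive entry; together with $\BW'\myone^\Tr = \myone^\Tr$ and full rank of $\BW'$, a short counting argument then forces $\BW'$ to be a permutation matrix. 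For the converse, if $\BW' = \BPi$ is a permutation, then $\BPi^{-1}$ has $0$--$1$ entries, so $(\BPi^{-1}\BW^{-1}) \odot (\BPi^{-1}\BW^{-1}) = \BPi^{-1}(\BW^{-1} \odot \BW^{-1})$, and therefore $\BPhi(\BW\BPi) = \BW\BPi\BPi^{-1}(\BW^{-1} \odot \BW^{-1}) = \BPhi(\BW)$, closing the equivalence.
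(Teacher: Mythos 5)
Your proof is correct, but it takes a genuinely different route from the paper's, and in fact it fills a gap the paper explicitly declines to close. The paper proves Theorem~\ref{thm:DPI} by first establishing the finite-sample data-processing inequality $\mathscr{L}^{(n)}(\Bp,\BW\BW') > \mathscr{L}^{(n)}(\Bp,\BW)$ (Theorem~\ref{thm:general_DPI}, via Csisz\'ar's concentration inequality and a careful conditional-expectation argument), passes to the asymptotic version $\alpha_{f\text{-}\mathrm{DIV}}(\Bp,\BW\BW') \geq \alpha_{f\text{-}\mathrm{DIV}}(\Bp,\BW)$ (Corollary~\ref{cor:DPI_alpha}), and then ``inverts'' that scalar inequality by evaluating it at a probability vector concentrating near a vertex (with mass $\varepsilon^2$, $1-(K-2)\varepsilon-\varepsilon^2$, and $\varepsilon$ at chosen coordinates), multiplying by $\varepsilon^2$, and letting $\varepsilon \to 0$ to extract each entry $\Delta\Phi_{j,i} \geq 0$. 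Your argument is the ``self-contained proof \dots based on the assumption of row-stochasticity of $\BW$ and $\BW'$ alone'' that the paper mentions immediately after Theorem~\ref{thm:DPI} but says it does not provide: you read $[\BPhi(\BW)]_{k,\ell}$ as a conditional second moment under the channel, observe that $\Exp[U\mid Y]=V$ because $(\BW')^{-1}\BW'=\myid$, and apply conditional Jensen on the Markov chain $X \to Y \to Y'$. What each approach buys: the paper's route is essentially free once Theorem~\ref{thm:general_DPI} is in hand, which the paper needs anyway; your route is shorter, makes the probabilistic meaning of $\BPhi$ transparent, handles the equality case cleanly via the Jensen tightness condition (the paper's proof of Theorem~\ref{thm:DPI} does not explicitly address the iff), and, since the paper observes that Theorem~\ref{thm:DPI} and Corollary~\ref{cor:DPI_alpha} are equivalent, your proof would give the asymptotic DPI for the $\alpha$'s without invoking the heavy machinery of Theorem~\ref{thm:general_DPI} (though it would not recover the finite-$n$ statement of Theorem~\ref{thm:general_DPI} itself).
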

\begin{proof}
See Appendix~\ref{app:proof:DPI}.
\end{proof}
Note that the above-referenced proof of Theorem~\ref{thm:DPI} in Appendix~\ref{app:proof:DPI} hinges on Corollary~\ref{cor:DPI_alpha} (of Theorem~\ref{thm:general_DPI}). On the other hand, this Corollary~\ref{cor:DPI_alpha} could clearly also be recovered as a corollary, in fact, of Theorem~\ref{thm:DPI}, since $\alpha_{f\text{-}\mathrm{DIV}}$, $\alpha_{\mathrm{MSE}}$ and $\alpha_{\mathrm{TV}}$ are all monotone functions of the entries of $\BPhi$ [cf.~\eqref{alpha_KL_in_full}--\eqref{alpha_TV_in_full}]. This means that Theorem~\ref{thm:DPI} and Corollary~\ref{cor:DPI_alpha} (to be precise, any one of the three inequalities in Corollary~\ref{cor:DPI_alpha}) are in fact equivalent statements. A self-contained proof of Theorem~\ref{thm:DPI} (which we do not provide) would consist, by contrast, in proving~\eqref{Phi_DPI} based on the assumption of row-stochasticity of $\BW$ and $\BW'$ alone.

It is instructive to particularize~\eqref{Phi_DPI} by setting $\BW=\myid$ and using the fact that $\BPhi(\myid)=\myid$. We obtain $\BPhi(\BW') \geq \myid$ (which holds for any row-stochastic $\BW'$), or equivalently,
\begin{equation}   \label{DPI_particularized}
	\Phi_{k,\ell}(\BW')
	\geq \begin{cases}
		1 & \text{if $k = \ell$} \\
		0 & \text{if $k \neq \ell$.}
	\end{cases}
\end{equation}
This inequality allows to immediately grasp why the normalized metrics $\alpha_{f\text{-}\mathrm{DIV}}$, $\alpha_{\mathrm{MSE}}$ and $\alpha_{\mathrm{TV}}$, as written out in~\eqref{alpha_KL_in_full}--\eqref{alpha_TV_in_full}, are quantities larger or equal to one (cf.~Remark~\ref{rmk3}).

\section{Upper bounds on privacy--fidelity trade-off}   \label{sec:upper_bounds}

Consider the class of circulant mechanisms, which we shall denote as the set $\mathcal{W}_\circ$, and which contains all invertible matrices of the form
\begin{equation}   \label{circulant_mechanism}
	\BW
	=
	\begin{bmatrix}
		w_1 & \hdots & \hdots & w_K \\
		w_K & w_1 & \hdots & w_{K-1} \\
		\vdots & \ddots & \ddots & \vdots \\
		w_2 & \hdots & \hdots & w_1
	\end{bmatrix}.
\end{equation}
Note that circulant mechanisms are fully described by their first row $\Bw = [w_1, \dotsc, w_K] \in \mathbb{P}$. If $w_k/w_{k'} \leq e^\epsilon$ for all $(k,k')$, then this matrix constitutes an $\epsilon$-private mechanism and thus yields an upper (achievable) bound on the privacy--fidelity trade-off curve. We choose this class of mechanisms for producing upper bounds due to them
\begin{enumerate}
	\item	appearing as a natural choice, given that all columns have the same composition and thus the same maximum ratio $\max_{k,k'} w_k/w_{k'}$, thereby satisfying an intuitive notion (though not presently backed by a theorem) that for an optimal mechanism, the maximum intra-column ratio should be equal on all columns.
	\item	yielding simple expressions for the relevant fidelity loss metrics, with the added benefit of making all three normalized metrics $\alpha_{f\text{-}\mathrm{DIV}}$, $\alpha_{\mathrm{MSE}}$ and $\alpha_{f\mathrm{TV}}$ match exactly (so long as the source is uniform), as we shall see in Lemma~\ref{lem:matching_metrics} stated below.
\end{enumerate}
While a universal characterization of the privacy--fidelity trade-off in the context of local differential privacy seems elusive due to the fact that the trade-off critically depends on the specific choice of fidelity metric ($f$-divergence, MSE, TV, etc.), the following lemma, however, legitimizes $\varphi(\BW)$ as a useful quantity in that it ``universally'' captures the privacy-preserving properties of the mechanism $\BW$, and thus may be subjected to optimization.
\begin{lemma}   \label{lem:matching_metrics}
For a uniform source distribution $\Bp = \myone/K$ and a circulant mechanism $\BW \in \mathcal{W}_\circ$, the three normalized fidelity metrics are equal. That is, for every $\BW \in \mathcal{W}_\circ$,
\begin{equation}
	\alpha_{f\text{-}\mathrm{DIV}}\bigl(\tfrac{\myone}{K},\BW\bigr)
	= \alpha_{\mathrm{MSE}}\bigl(\tfrac{\myone}{K},\BW\bigr)
	= \alpha_{\mathrm{TV}}\bigl(\tfrac{\myone}{K},\BW\bigr)
	= \frac{\varphi(\BW)-1}{K-1}.
\end{equation}
\end{lemma}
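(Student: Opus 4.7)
The plan is to exploit the remark preceding the lemma, which already establishes the identity $\alpha_{f\text{-}\mathrm{DIV}}(\tfrac{\myone}{K},\BW) = \alpha_{\mathrm{MSE}}(\tfrac{\myone}{K},\BW) = \frac{\varphi(\BW)-1}{K-1}$ for \emph{every} mechanism $\BW\in\mathcal{W}$, and which also records the Jensen bound
\[
	\alpha_{\mathrm{TV}}\bigl(\tfrac{\myone}{K},\BW\bigr)
	= \left(\frac{\sum_k \sqrt{K\sum_\ell \Phi_{k,\ell}(\BW)-1}}{K\sqrt{K-1}}\right)^2
	\leq \frac{\varphi(\BW)-1}{K-1},
\]
with equality if and only if the row sums $\sum_\ell \Phi_{k,\ell}(\BW)$ are identical across all rows $k$. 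Thus the entire lemma reduces to showing that, whenever $\BW$ is circulant, the matrix $\BPhi(\BW) = \BW(\BW^{-1}\odot\BW^{-1})$ has constant row sums.

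To verify this, I would invoke the standard closure properties of the algebra of $K\times K$ circulant matrices. First, inversion: if $\BW$ is an invertible circulant matrix, then $\BW^{-1}$ is again circulant (circulant matrices are simultaneously diagonalized by the discrete Fourier transform, so the inverse is circulant with DFT reciprocals as its eigenvalues). Second, Hadamard product: entrywise multiplication preserves the circulant pattern since shifting rows commutes with squaring entries, so $\BW^{-1}\odot\BW^{-1}$ is circulant. Third, ordinary matrix product: the product of two circulant matrices of the same size is circulant. Combining these three facts, $\BPhi(\BW)$ is circulant.

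Since every row of a circulant matrix is a cyclic permutation of its first row, all row sums of $\BPhi(\BW)$ are equal; their common value must be $\varphi(\BW)/K$. Substituting $K\sum_\ell\Phi_{k,\ell}(\BW) = \varphi(\BW)$ into the Jensen expression collapses each square root to $\sqrt{\varphi(\BW)-1}$, and the outer sum over $k$ contributes a factor of $K$, so
\[
	\alpha_{\mathrm{TV}}\bigl(\tfrac{\myone}{K},\BW\bigr)
	= \left(\frac{K\sqrt{\varphi(\BW)-1}}{K\sqrt{K-1}}\right)^2
	= \frac{\varphi(\BW)-1}{K-1},
\]
matching the other two metrics. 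There is no serious obstacle in this proof; the only point requiring any verification is the circulant-preservation under $\BW\mapsto\BPhi(\BW)$, which follows cleanly from the three closure properties above, and I would either cite them as standard or briefly justify them via the DFT diagonalization of circulant matrices.
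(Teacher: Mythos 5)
Your proof is correct and follows essentially the same route as the paper: both start from the Remark (equality of $\alpha_{f\text{-}\mathrm{DIV}}$ and $\alpha_{\mathrm{MSE}}$ plus the Jensen bound on $\alpha_{\mathrm{TV}}$) and both close the gap by showing that for circulant $\BW$ the row/column sums of $\BPhi(\BW)$ are constant. The paper merely asserts this (pointing to the Fourier-diagonalization computation in the proof of Theorem~\ref{thm:optimal_mechanism}), whereas you usefully spell out the intermediate step---that the circulant algebra is closed under inversion, Hadamard product, and matrix product, hence $\BPhi(\BW)$ itself is circulant---but the underlying idea is the same.
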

\begin{proof}
The first equality was already mentioned previously (see~\eqref{alpha_for_uniform}) and holds more generally for~\emph{any} (not necessarily circulant) mechanism $\BW$. The second equality (for the TV metric) can be verified by direct evaluation. For this purpose, it suffices to inspect~\eqref{alpha_TV_in_full} while realizing that for a circulant $\BW$, we have $K\myone\BPhi = \varphi(\BW)\myone$, meaning that $\varphi(\BW)$ becomes the eigenvalue of $\BPhi(\BW)$ (up to a factor $K$) associated to the all-ones eigenvector. Alternatively, we can leverage the fact that all rows and columns of a circulant matrix have the same composition to infer that the Jensen inequality~\eqref{Jensen} becomes tight. The reader is also referred to detailed derivations for circulant mechanisms in the proof of Theorem~\ref{thm:optimal_mechanism}, in Appendix~\ref{app:proof:optimal_mechanism}. 
\end{proof}
The following theorem determines the optimal mechanism within the class of circulant mechanisms, in the sense that it minimizes $\varphi(\BW)$.
\begin{theorem}   \label{thm:optimal_mechanism}
The step mechanism $\BW_{\epsilon,\star}$ as defined in~\eqref{W_star} is optimal (up to row and column permutations) among all circulant $\epsilon$-private mechanisms in terms of minimizing the quantity $\varphi(\BW)$, i.e.,
\begin{equation}   \label{argmin}
	\BW_{\epsilon,\star}
	= \argmin_{\BW \in \mathcal{W}_\epsilon \cap \mathcal{W}_\circ} \varphi(\BW).
\end{equation}
\end{theorem}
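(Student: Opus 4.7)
My plan is to exploit the spectral structure of circulant matrices to recast $\varphi(\BW)$ as a Frobenius norm amenable to discrete Fourier analysis, and then carry out the resulting scalar optimization.

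First I would observe that a circulant row-stochastic matrix is automatically doubly stochastic (its columns are cyclic shifts of its rows and so share the same sums), so $\myone\BW = \myone$. This simplifies $\varphi(\BW) = \myone\BW(\BW^{-1}\odot\BW^{-1})\myone^\Tr = \myone(\BW^{-1}\odot\BW^{-1})\myone^\Tr = \|\BW^{-1}\|_\Frob^2$, reducing the problem to minimizing $\|\BW^{-1}\|_\Frob^2$ over $\mathcal{W}_\epsilon\cap\mathcal{W}_\circ$. Diagonalizing $\BW$ by the unitary DFT matrix $\BF$ so that $\BW = \BF^\He\BLambda\BF$ with $\lambda_j = \sum_k w_k\omega^{(j-1)(k-1)}$ and $\omega = e^{-2\pi\imj/K}$, unitary invariance of $\|\cdot\|_\Frob$ gives $\|\BW^{-1}\|_\Frob^2 = 1 + \sum_{j=2}^K|\lambda_j|^{-2}$, where $\lambda_1 = 1$ by row-stochasticity. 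Direct evaluation for the step mechanism via $\sum_{k=1}^K\omega^{(j-1)(k-1)} = 0$ for $j\geq 2$ yields $\lambda_j = (e^\epsilon-1)/(e^\epsilon+K-1)$ for every such $j$, so its nontrivial DFT eigenvalues coincide in magnitude and $\varphi(\BW_{\epsilon,\star}) = 1 + (K-1)(e^\epsilon+K-1)^2/(e^\epsilon-1)^2$.

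Next I would parametrize a general feasible $\Bw$ as $\Bw = m\myone + (e^\epsilon-1)m\Bu$ with $\Bu\in[0,1]^K$, where $m = \min_k w_k$. Row-stochasticity then forces $m = 1/(K+(e^\epsilon-1)\tau)$ with $\tau = \sum_k u_k$, and monotonicity of the objective under the scaling $\Bu\to\lambda\Bu$ (the derivative $2(e^\epsilon-1)\tau/(K+(e^\epsilon-1)\lambda\tau) - 2/\lambda$ is strictly negative) lets us assume $\max_k u_k = 1$, so that the privacy constraint is tight. Setting $U_j = \sum_k u_k\omega^{(j-1)(k-1)}$ gives $\lambda_j = (e^\epsilon-1)m\cdot U_j$ for $j\geq 2$, and the objective becomes $f(\Bu) = (K+(e^\epsilon-1)\tau)^2(e^\epsilon-1)^{-2}\sum_{j\geq 2}|U_j|^{-2}$. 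The step mechanism corresponds to the vertex $\Bu = \Be_k$, yielding $\tau = 1$, $|U_j| = 1$ for all $j\geq 2$, and the claimed value.

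The hard part will be proving $f(\Bu)\geq f(\Be_1)$ globally. Local optimality at $\Be_1$ follows from a first-order expansion: perturbing $\Be_1$ by $\delta\Be_{k'}$ with $k'\neq 1$ and using $\sum_{j=2}^K\omega^{(j-1)(k'-1)} = -1$ shows that both the prefactor $(K+(e^\epsilon-1)\tau)^2$ and $\sum_{j\geq 2}|U_j|^{-2}$ strictly increase to first order in $\delta$. For the global statement I would combine Parseval's identity $\sum_{j\geq 2}|U_j|^2 = K\|\Bu\|_2^2-\tau^2$ with the AM--HM bound $\sum_{j\geq 2}|U_j|^{-2}\geq (K-1)^2/(K\|\Bu\|_2^2-\tau^2)$. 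The main obstacle is that at other vertices of $[0,1]^K$ the factor $(K+(e^\epsilon-1)\tau)^2$ and the AM--HM denominator can simultaneously grow, so the combined bound is not uniformly tight; closing this gap would most plausibly proceed by a concavity-type reduction of the minimum to vertices $\Bu\in\{0,1\}^K$, followed by the scalar inequality $(K-1)(K+n_1(e^\epsilon-1))^2\geq n_1(K-n_1)(K+e^\epsilon-1)^2$ for $n_1 = |\{k:u_k=1\}|\in\{1,\dots,K-1\}$, with equality only at $n_1 = 1$.
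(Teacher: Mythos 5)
Your approach tracks the paper's Appendix~H closely: both diagonalize the circulant $\BW$ in the DFT basis, write $\varphi(\BW) = 1 + \sum_{k=2}^K |\lambda_k|^{-2}$, combine Parseval/Plancherel with the harmonic--arithmetic mean inequality to obtain a lower bound depending only on $\sum_k w_k^2$ (equivalently $K\lVert\Bu\rVert_2^2 - \tau^2$ in your parametrization), and then argue this bound is minimized by the step mechanism, where HM--AM is tight since all nontrivial eigenvalue magnitudes coincide. You correctly flag that the lower bound need not be uniformly $\geq \varphi(\BW_{\epsilon,\star})$, and propose to close the gap via a reduction to vertices plus the scalar inequality $(K-1)(K+n_1(e^\epsilon-1))^2 \geq n_1(K-n_1)(K+e^\epsilon-1)^2$. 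This inequality is false for $K\geq 4$ and small $\epsilon$: with $K=5$, $n_1=2$, $e^\epsilon = 1.1$ the left side equals $4\cdot(5.2)^2 = 108.16$ while the right side equals $2\cdot 3\cdot(5.1)^2 = 156.06$; more generally, letting $\epsilon\downarrow 0$ reduces the claim to $K-1\geq n_1(K-n_1)$, which fails for $2\leq n_1\leq K-2$ once $K\geq 4$. So the key closing step does not go through.

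It is worth pointing out that the paper's own argument appears susceptible to the same objection. The proof of Lemma~\ref{lem:sum_of_squares} correctly notes that $\zeta$ is quasi-concave with a strictly interior continuous maximizer at $Ke^\epsilon/(e^\epsilon+1)$, but then concludes the integer maximum is attained at $\kappa'\in\{1,K-1\}$ --- a non sequitur for a quasi-concave function, which when restricted to integers peaks near the interior argmax. Unwinding, the assertion that $\Bw_\star$ maximizes $\sum_k w_k^2$ is equivalent to the very scalar inequality you wrote down, and indeed $\zeta(3) \approx 0.200444 > \zeta(4) \approx 0.200308$ for $K=5$, $e^\epsilon=1.1$, so a first row with two large entries beats the step row. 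The conclusion of Theorem~\ref{thm:optimal_mechanism} may still be true (direct evaluation of $\varphi$ for the corresponding $n_1=2$ circulant gives roughly $16230 > 10405 = \varphi(\BW_{\epsilon,\star})$), but the HM--AM bound is far from tight at such vertices because the nontrivial eigenvalue magnitudes are unequal there. Closing the gap will require a sharper lower bound that does not factor through $\sum_k w_k^2$ alone, and neither your argument nor the paper's supplies one.
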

\begin{proof}
See Appendix~\ref{app:proof:optimal_mechanism}.
\end{proof}

\begin{figure}
\begin{center}
\includegraphics{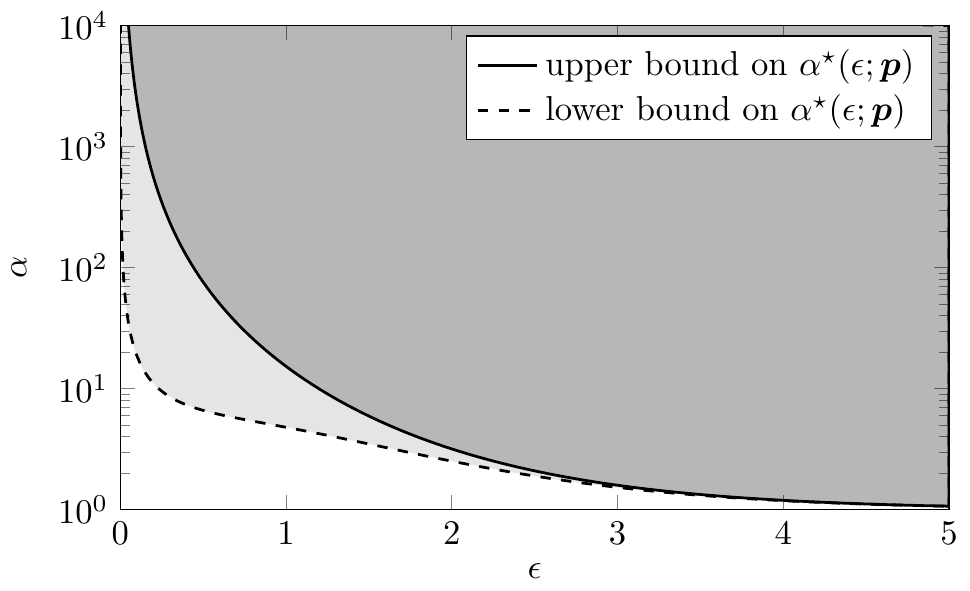}
\end{center}
\caption{$(\epsilon,\alpha)$-trade-off curves for $K=5$ and a uniform source $\Bp = \myone/K$. The feasible set $\mathcal{F}(\myone/K)$ contains the dark shaded region and is contained in the union of both shaded regions. In other terms, the lower boundary of $\mathcal{F}(\myone/K)$, described by $\alpha^\star(\epsilon;\myone/K)$, lies in the light shaded region.}
\label{fig:trade-off_curve}
\end{figure}

By plugging the minimizer $\BW_{\epsilon,\star}$ of the above problem~\eqref{argmin} into the expressions~\eqref{def:Phi}--\eqref{def:phi}, we obtain
\begin{subequations}
\begin{IEEEeqnarray}{rCl}
	\Phi_{k,\ell}(\BW_{\epsilon,\star})
	&=& \frac{1}{(e^\epsilon-1)^2} \Bigl[ \bigl(e^\epsilon(e^\epsilon+K-2)+1-e^\epsilon\bigr) \delta_{k,\ell} + (e^\epsilon+K-2) (1-\delta_{k,\ell}) \Bigr]   \IEEEeqnarraynumspace\IEEEyesnumber\label{Phi_star} \\
	\varphi(\BW_{\epsilon,\star})
	&=& \frac{K}{(e^\epsilon-1)^2} \left[ (e^\epsilon+K-1)(e^\epsilon+K-2) + 1 - e^\epsilon \right].   \IEEEeqnarraynumspace\IEEEyesnumber\label{phi_star}
\end{IEEEeqnarray}
\end{subequations}
The optimality property of the step mechanism elicited by Theorem~\ref{thm:optimal_mechanism} gives additional support to its being widely regarded as a natural choice of RR mechanism in any context. Some publications even use the term ``randomized response'' to refer to the step mechanism itself. Said mechanism can be used to obtain any upper (i.e., ``achievable'') bound on the fundamental privacy--fidelity trade-off curves $\alpha^\star(\epsilon;\Bp)$ (for the feasbility problem) or $\alpha^\star(\epsilon;\mathcal{P})$ (for the minmax problem), simply by inserting~\eqref{Phi_star} or \eqref{phi_star} into~\eqref{feasilibity_alpha_star} or \eqref{minmax_alpha_star}, in combination with loss metric expressions~\eqref{alpha_KL_in_full}--\eqref{alpha_TV_in_full}. For the feasibility problem, with uniform source, for example, using Lemma~\ref{lem:matching_metrics} we obtain, for any of the three fidelity loss metrics,
\begin{equation}   \label{alpha_star_UB}
	\alpha^\star\bigl(\epsilon;\tfrac{\myone}{K}\bigr)
	\leq \frac{\varphi(\BW_{\epsilon,\star})-1}{K-1}
\end{equation}
with $\varphi(\BW_{\epsilon,\star})$ as given in~\eqref{phi_star}.
The latter quantity is plotted as a function of $\epsilon$ in Figure~\ref{fig:trade-off_curve}, along with a corresponding lower bound derived in the next Section.

Furthermore, we believe that $\BW_{\epsilon,\star}$ is the minimizer of $\varphi(\BW)$ not only within the class of circulant mechanisms, but over \emph{all} $\epsilon$-private mechanisms. That is, we conjecture that $\BW_{\epsilon,\star} = \argmin_{\BW \in \mathcal{W}_\epsilon} \varphi(\BW)$. As long as this conjecture remains unproven, we rely on complementing the upper bounds on privacy--fidelity tradeoffs yielded by the step mechanism, with the (generally non-matching) lower bounds developed in the next Section.

\section{Lower bounds on privacy--fidelity trade-off}

The lower bounds presented in this section rely on the following key lemma.
\begin{lemma}   \label{lem:Phi_weighted_sum}
It holds that
\begin{equation}   \label{Phi_sum}
	\min_{\BW \in \mathcal{W}_\epsilon} \varphi(\BW)
	\geq \frac{K}{1-e^{-4\epsilon}} \frac{(e^{\epsilon} + K-1)^2}{e^{2\epsilon} + K-1}.
\end{equation}
\end{lemma}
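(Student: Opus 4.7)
The strategy is to reformulate $\varphi(\BW)$ in matrix-trace form and then combine the $\epsilon$-DP constraint with a chain of Cauchy--Schwarz-type inequalities. The key identity is
\[
\varphi(\BW) = \sum_{\ell=1}^K c_\ell \bigl[(\BW^\Tr \BW)^{-1}\bigr]_{\ell,\ell} = \trace\bigl(\diag(\Bc)\,\BS^{-1}\bigr),
\]
where $\Bc = \myone \BW$ is the row vector of column sums of $\BW$ and $\BS = \BW^\Tr \BW$ is the Gram matrix of the columns. This identity holds because the rows of $\BW^{-1}$ are biorthogonal to the columns of $\BW$, so that the squared norm of the $\ell$-th row of $\BW^{-1}$ equals $[\BS^{-1}]_{\ell,\ell}$.

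The first step is to lower bound each diagonal entry $[\BS^{-1}]_{\ell,\ell}$. A baseline Cauchy--Schwarz applied to $\BS^{1/2}\Be_\ell$ and $\BS^{-1/2}\Be_\ell$ yields $[\BS^{-1}]_{\ell,\ell} \geq 1/S_{\ell,\ell}$. This needs to be sharpened using the ancillary identity $\Bc\,\BS^{-1} = \myone$ (obtained from $\Bc\,\BW^{-1} = \myone$ together with $\BW^{-1}\myone^\Tr = \myone^\Tr$), combined with a Kantorovich- or Polya--Szego-type refinement that exploits the spectral confinement of $\BS$ implied by $\epsilon$-DP.

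The second step is to upper bound $S_{\ell,\ell} = \|\Bw_\ell\|^2$ using $\epsilon$-DP, where $\Bw_\ell$ denotes the $\ell$-th column of $\BW$. Since the normalized column $\Bw_\ell/c_\ell$ is a probability distribution whose entries lie in $\bigl[1/(1+(K-1)e^\epsilon),\,e^\epsilon/(e^\epsilon+K-1)\bigr]$, its squared $\ell^2$-norm is maximized at the extremal configuration with one entry at the upper bound and $K-1$ entries at the lower bound. A direct evaluation gives $S_{\ell,\ell}/c_\ell^2 \leq (e^{2\epsilon}+K-1)/(e^\epsilon+K-1)^2$. Combining this DP bound with the refined lower bound on $[\BS^{-1}]_{\ell,\ell}$ and the AM--HM estimate $\sum_\ell 1/c_\ell \geq K^2/\sum_\ell c_\ell = K$ (using $\sum_\ell c_\ell = K$ by row-stochasticity) yields the claimed inequality.

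The main obstacle is the refinement needed to produce the factor $(1-e^{-4\epsilon})^{-1}$. Stringing together only the elementary Cauchy--Schwarz, the DP column-norm bound, and AM--HM delivers $\varphi(\BW) \geq K(e^\epsilon+K-1)^2/(e^{2\epsilon}+K-1)$, which falls short of the stated bound by precisely that factor. Closing the gap is the technical heart of the argument and seemingly requires exploiting the privacy constraint beyond individual column norms---for instance, by quantifying the joint spectral geometry of the columns of $\BW$ (whose pairwise ratios are pinched into $[e^{-\epsilon}, e^\epsilon]$) so as to yield a bound on $[\BS^{-1}]_{\ell,\ell}$ strictly better than $1/S_{\ell,\ell}$.
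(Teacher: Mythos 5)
Your reformulation of $\varphi(\BW) = \sum_\ell c_\ell [\BS^{-1}]_{\ell,\ell}$, the column-norm bound from the $\epsilon$-DP constraint, and the AM--HM reduction over the column sums are all correct and in fact mirror the structure of the paper's argument (which writes the same quantity as $\sum_m \lVert\Bw_m\rVert_1\lVert\tilde{\Bw}_m\rVert_2^2$, with $\tilde{\Bw}_m$ the $m$-th row of $\BW^{-1}$). You also correctly compute that the elementary Cauchy--Schwarz bound $[\BS^{-1}]_{\ell,\ell} \geq 1/S_{\ell,\ell}$ leaves a factor $(1-e^{-4\epsilon})^{-1}$ on the table. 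The problem is that you then \emph{name} the gap without \emph{closing} it: the Kantorovich / P\'olya--Szeg\H{o} direction you gesture at is not where the argument goes, and no concrete inequality is produced.

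The missing ingredient is an explicitly geometric one, and it lives precisely in the biorthogonality you already noted in passing. For $k' \neq k$ one has $\langle\Bw_{k'},\tilde{\Bw}_k\rangle = 0$. Extend $\{\Bw_{k'},\tilde{\Bw}_k\}$ to an orthogonal set and apply Bessel's inequality to $\Bw_k$:
\begin{equation*}
\cos^2(\Bw_k,\Bw_{k'}) + \cos^2(\Bw_k,\tilde{\Bw}_k) \leq 1.
\end{equation*}
The $\epsilon$-DP constraint pinches the ratio of any two entries of any column into $[e^{-\epsilon},e^\epsilon]$, which by a direct Cauchy--Schwarz computation forces $\cos(\Bw_k,\Bw_{k'}) \geq e^{-2\epsilon}$. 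Since $\langle\Bw_k,\tilde{\Bw}_k\rangle = 1$, the displayed inequality then gives $1/(\lVert\Bw_k\rVert_2^2\lVert\tilde{\Bw}_k\rVert_2^2) \leq 1-e^{-4\epsilon}$, i.e.\ $[\BS^{-1}]_{k,k} = \lVert\tilde{\Bw}_k\rVert_2^2 \geq \bigl((1-e^{-4\epsilon})\lVert\Bw_k\rVert_2^2\bigr)^{-1}$. This is exactly the refinement your argument needs; plugging it into your chain recovers the stated bound. So the scaffold is sound, but the proposal as written does not constitute a proof: the single step that produces the $(1-e^{-4\epsilon})^{-1}$ factor is absent, and the heuristic you offer for finding it points in the wrong direction.
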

\begin{proof}
The proof is deferred to Appendix~\ref{app:proof:Phi_weighted_sum}.
\end{proof}
Note that for $\epsilon$ fixed and $K$ tending to infinity, the lower bound on the right-hand side of~\eqref{Phi_sum} behaves as $O(K^2)$, whereas $\varphi(\BW_{\epsilon,\star})$ as given on the right-hand side of~\eqref{phi_star} behaves as $O(K^3)$. Devising a lower bound sharper than~\eqref{Phi_sum} that behaves as $O(K^3)$ is an open problem.

In the next subsections, we will present lower bounds for the feasibility problem and an instance of the minmax problem. In each case, we will need to address the three metrics~\eqref{alpha_DIV}--\eqref{alpha_TV} separately.

\subsection{Lower bounds for the feasibility problem}

In order to leverage Lemma~\ref{lem:Phi_weighted_sum} for lower-bounding the fundamental trade-off curve $\alpha^\star\left(\epsilon;\Bp\right)$, we need to derive lower bounds on $\alpha^\star\left(\epsilon;\Bp\right)$ that depend on $\BW$ only via $\varphi(\BW)$. For the $f$-divergence and MSE metric, such bounds can be obtained via $p_{\minimum}\myone \leq \Bp \leq p_{\maximum}\myone$ and exploiting the fact that the metrics $\alpha_{f\text{-}\mathrm{DIV}}$ and $\alpha_{\mathrm{MSE}}$ are not smaller than one:
\begin{subequations}
\begin{IEEEeqnarray*}{rCl}
	\alpha_{f\text{-}\mathrm{DIV}}^\star\left(\epsilon;\Bp\right)
	&\geq& \min_{\BW \in \mathcal{W}_\epsilon} \frac{ \max\bigr\{K,\tfrac{p_{\minimum}}{p_{\maximum}}\varphi(\BW)\bigr\}-1}{K-1}   \IEEEeqnarraynumspace\IEEEyesnumber\label{alpha_div_feasibility_bound} \\
	\alpha_{\mathrm{MSE}}^\star\left(\epsilon;\Bp\right)
	&\geq& \min_{\BW \in \mathcal{W}_\epsilon} \frac{ \max\bigl\{1,p_{\minimum}\varphi(\BW)\bigr\} - \lVert \Bp \rVert_2^2 }{1 - \lVert \Bp \rVert_2^2}   \IEEEeqnarraynumspace\IEEEyesnumber\label{alpha_mse_feasibility_bound} \\
	\alpha_{\mathrm{TV}}^\star\left(\epsilon;\Bp\right)
	&\geq& \min_{\BW \in \mathcal{W}_\epsilon} \left( \frac{ \sqrt{p_{k_0}(1-p_{k_0}) + (\varphi(\BW) p_{\minimum} - 1)^+} + \sum_{k \in [K] \setminus \{k_0\}} \sqrt{p_k(1-p_k)} }{ \sum_{k \in [K]} \sqrt{p_k(1-p_k)} } \right)^2   \IEEEeqnarraynumspace\IEEEyesnumber\label{alpha_tv_feasibility_bound}
\end{IEEEeqnarray*}
\end{subequations}
where $k_0 = \argmin_{k \in [K]} \bigl| p_k - \frac{1}{2} \bigr|$ and where $(\cdot)^+$ denotes $\max\{\cdot,0\}$.
While~\eqref{alpha_div_feasibility_bound} and \eqref{alpha_mse_feasibility_bound} are straightforward, deriving~\eqref{alpha_tv_feasibility_bound} can be done by lower-bounding the numerator of~\eqref{alpha_TV_in_full} as follows:
\begin{IEEEeqnarray*}{rCl}
	\sqrt{ \Bp \BPhi(\BW) - \Bp \odot \Bp } \, \myone^\Tr
	&\geq& \sqrt{ \max\{\Bp,p_{\minimum}\myone\BPhi(\BW)\} - \Bp \odot \Bp } \\
	&\stackrel{\text{(a)}}{\geq}&  \min_{\substack{\BPsi \geq \myone \colon \\ \BPsi\myone^\Tr = \varphi(\BW)}} \sqrt{ \max\{\Bp,p_{\minimum}\BPsi\} - \Bp \odot \Bp } \\
	&=&
		\begin{cases}
			\sqrt{ \Bp - \Bp \odot \Bp } & \text{if $\varphi(\BW) \leq \frac{1}{p_{\minimum}}$} \\
			\displaystyle\min_{\substack{\bar{\psi}_k \geq 0 \colon \\ \sum_k \psi_k = \varphi(\BW) - \frac{1}{p_{\minimum}}}} \sum_{k \in [K]} \sqrt{ p_k(1-p_k) + p_{\minimum}\bar{\psi}_k } & \text{if $\varphi(\BW) > \frac{1}{p_{\minimum}}$.}
		\end{cases}
\end{IEEEeqnarray*}
For bounding step $\text{(a)}$, we make use of the fact that $\myone\BPhi(\BW) \geq \myone$, which is a consequence of $\BPhi(\BW) \geq \myid$.
Note that~\eqref{alpha_div_feasibility_bound} and \eqref{alpha_mse_feasibility_bound} have the merit of being tight when $\Bp = \myone/K$, whereas~\eqref{alpha_tv_feasibility_bound} is generally not tight even for the uniform source distribution.

Now, by replacing $\varphi(\BW)$ on the right-hand sides of~\eqref{alpha_div_feasibility_bound}--\eqref{alpha_tv_feasibility_bound} with the right-hand side of~\eqref{Phi_sum}, we eventually obtain lower bounds on the fundamental trade-off curves $\alpha^\star(\epsilon;\Bp)$ that only depend on $K$, $\epsilon$ and $\Bp$, but not on the mechanism $\BW$. Although the so-obtained bounds may be loose---especially when $\tfrac{p_{\minimum}}{p_{\maximum}} \ll 1$ or $p_{\minimum} \ll \tfrac{1}{K}$, that is, when $\Bp$ is highly non-uniform---they nonetheless highlight the relevance of the quantity $\varphi(\BW)$ as a proxy for characterizing the privacy--fidelity trade-off. This corroborates the relevance of lower bounds on $\varphi(\BW)$ such as the one given by Lemma~\ref{lem:Phi_weighted_sum}.

\subsection{Lower bounds for the minmax problem}

Recall that the minmax trade-off curve for a continuous set $\mathcal{P} \subseteq \mathbb{P}$ is given by [cf.~\eqref{minmax_alpha_star}]
\begin{equation}   \label{minmax_alpha_star_bis}
	\alpha^\star(\epsilon;\mathcal{P})
	\triangleq \min_{\BW \in \mathcal{W}_\epsilon} \sup_{\Bp \in \mathcal{P}} \alpha(\Bp,\BW).
\end{equation}
In the context of the minmax problem formulation, we shall focus on sets $\mathcal{P}$ that are symmetric in such way that all symbol probabilities are larger or equal to some $p_{0} > 0$. That is, we set\footnote{Note that $p_{0}$ must be smaller than $1/K$ for $\mathcal{P}$ to be non-empty.}
\begin{equation}   \label{min_probability_source_set}
	\mathcal{P}
	= \Bigl\{ (p_1,\dotsc,p_K) \in [p_{0};1]^K \colon \textstyle\sum_{k=1}^K p_k = 1 \Bigr\}.
\end{equation}
Since this set if closed, we can replace the supremum in~\eqref{minmax_alpha_star_bis} with a maximum.
In the next subsections, we will compute lower bounds on $\alpha^\star(\epsilon;\mathcal{P})$ for each of the three loss metrics.

\subsubsection{$f$-divergence metric}

In~\eqref{minmax_alpha_star_bis}, let us set the $f$-divergence metric $\alpha_{f\text{-}\mathrm{DIV}}$ and start by focusing on the inner supremization over $\Bp \in \mathcal{P}$. We need the following lemma.
\begin{lemma}   \label{lem:bivariate_convexity}
Let $\BPi_{\{i,j\}}$ denote the transposition of $i$ and $j$, that is, the elementary permutation matrix that swaps the position of the $i$-th and $j$-th entries. Then the function
\begin{IEEEeqnarray*}{rCl}
	[0;1] &\to& \mathbb{R}_+ \\
	\lambda &\mapsto& \alpha_{f\text{-}\mathrm{DIV}}(\lambda\Bp + (1-\lambda)\Bp\BPi_{\{i,j\}},\BW)
\end{IEEEeqnarray*}
is convex.
\end{lemma}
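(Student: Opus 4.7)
The plan is to exploit the closed-form expression~\eqref{alpha_KL_in_full}, which asserts that
\[
\alpha_{f\text{-}\mathrm{DIV}}(\Bp,\BW) = \frac{1}{K-1}\left( -1 + \sum_{k,\ell} \Phi_{\ell,k}(\BW)\,\frac{p_\ell}{p_k} \right).
\]
Since the factor $1/(K-1)$ and the constant $-1$ are inessential, it suffices to establish the convexity of $\lambda \mapsto g(\Bp_\lambda)$ where $g(\Bp)=\sum_{k,\ell}\Phi_{\ell,k}\, p_\ell/p_k$ and $\Bp_\lambda \triangleq \lambda\Bp + (1-\lambda)\Bp\BPi_{\{i,j\}}$. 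The crucial structural observation is that the transposition $\BPi_{\{i,j\}}$ affects only the $i$-th and $j$-th entries, so the only coordinates of $\Bp_\lambda$ that vary with $\lambda$ are $u(\lambda)\triangleq (p_\lambda)_i = \lambda p_i + (1-\lambda) p_j$ and $v(\lambda)\triangleq (p_\lambda)_j = \lambda p_j + (1-\lambda) p_i$; both are positive and affine in $\lambda$, and their sum $u+v = p_i+p_j$ is \emph{constant}.

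I would then split the double sum $g(\Bp_\lambda)$ into three groups according to whether each of the indices $k,\ell$ belongs to $\{i,j\}$. Group 1 (both indices outside $\{i,j\}$) contributes terms that are independent of $\lambda$. Group 2 (exactly one index in $\{i,j\}$) splits further: terms with $\ell \in \{i,j\}$, $k \notin \{i,j\}$ take the form $(u(\lambda)\Phi_{i,k} + v(\lambda)\Phi_{j,k})/p_k$, which is affine in $\lambda$; terms with $k \in \{i,j\}$, $\ell \notin \{i,j\}$ take the form $p_\ell \Phi_{\ell,i}/u(\lambda) + p_\ell \Phi_{\ell,j}/v(\lambda)$, which is convex, since the reciprocal of a positive affine function is convex and the coefficients $p_\ell\Phi_{\ell,\cdot}$ are non-negative by $\BPhi \geq \mathbf{0}$ (cf.~\eqref{DPI_particularized}).

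Group 3 (both indices in $\{i,j\}$) contributes the constants $\Phi_{i,i}$ and $\Phi_{j,j}$, together with the cross-term $\Phi_{j,i}\,v(\lambda)/u(\lambda) + \Phi_{i,j}\,u(\lambda)/v(\lambda)$. This is the only step that genuinely uses the invariance $u+v \equiv c$: writing $v/u = c/u - 1$ (with $c > 0$ constant) shows that $v/u$ is convex in $\lambda$ as a composition of the convex function $1/u$ with an affine map; likewise for $u/v$. Non-negativity of $\Phi_{i,j}$ and $\Phi_{j,i}$ again makes the sum convex. Adding the three groups establishes convexity of $\lambda \mapsto g(\Bp_\lambda)$ and hence of the stated map.

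I do not anticipate a genuine obstacle here; the argument is an explicit verification. The only place where a careless approach would fail is the Group~3 cross-term, where one must recognize that the conservation $u(\lambda)+v(\lambda)=p_i+p_j$ (built into the transposition $\BPi_{\{i,j\}}$) is what upgrades $v/u$ from being merely quasi-convex in $(u,v)$ to being convex as a function of the single parameter $\lambda$.
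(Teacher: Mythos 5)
Your proposal is correct and takes essentially the same route as the paper: expand the double sum $\Bp\BPhi(\BW)\Bp^{-\Tr}$ according to whether each index lies in $\{i,j\}$, substitute the $\lambda$-parametrization, and verify term-wise convexity using the non-negativity of $\BPhi(\BW)$. You in fact make explicit a step the paper asserts without justification --- that the cross-terms $\Phi_{i,j}\,u(\lambda)/v(\lambda)$ and $\Phi_{j,i}\,v(\lambda)/u(\lambda)$ are convex in $\lambda$ precisely because $u(\lambda)+v(\lambda)=p_i+p_j$ is constant (the bivariate function $v/u$ is only quasi-convex) --- so your write-up is slightly more careful on this point.
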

\begin{proof}
The proof is deferred to Appendix~\ref{app:proof:bivariate_convexity}.
\end{proof}
Using Lemma~\ref{lem:bivariate_convexity}, we can infer that for any fixed $\BW$, the maximizing source distribution
\begin{equation}
	\Bp^\star(\BW)
	= \argmax_{\Bp \in \mathcal{P}} \alpha_{f\text{-}\mathrm{DIV}}(\Bp,\BW)
\end{equation}
has at most one entry different from $p_{0}$. In fact, if it had two entries $p^\star_i$ and $p^\star_j$ both distinct from (larger than) $p_{0}$, then one could argue with Lemma~\ref{lem:bivariate_convexity} that replacing the $(i,j)$-th entry pair of $\Bp^\star$ either by $(p^\star_j + p^\star_i - p_{0},p_{0})$ or by $(p_{0},p^\star_j + p^\star_i - p_{0})$ would yield a larger value of $\alpha_{f\text{-}\mathrm{DIV}}(\Bp^\star,\BW)$, thus leading to a contradiction.
We conclude that $\Bp^\star$ must have $K-1$ entries equal to $p_{0}$ and one entry equal to $1 - (K-1)p_{0}$. Suppose that this entry is denoted by index $k'$, then
\begin{IEEEeqnarray*}{rCl}
	\max_{\Bp \in \mathcal{P}} \Bp\BPhi\Bp^{-\Tr}
	&=& \Phi_{k',k'}(\BW) + \sum_{k \neq k'} \sum_{\ell \neq k'} \Phi_{k,\ell}(\BW) + \frac{p_0}{1 - (K-1)p_0} \sum_{k \neq k'} \Phi_{k,k'}(\BW) \\
	\IEEEeqnarraymulticol{3}{r}{
		{} + \frac{1 - (K-1)p_0}{p_0} \sum_{\ell \neq k'} \Phi_{k',\ell}(\BW)
	} \\
	&\geq& \max\left\{ K , \frac{p_0}{1 - (K-1)p_0} \varphi(\BW) \right\}
\end{IEEEeqnarray*}
where the inequality follows from $\frac{p_0}{1 - (K-1)p_0} \leq 1$ and the fact that $\Bp\BPhi\Bp^{-\Tr} \geq K$ due to $\BPhi \geq \myid$.
Consequently, with Lemma~\ref{lem:Phi_weighted_sum}, a lower bound on $\alpha_{f\text{-}\mathrm{DIV}}^\star(\epsilon;\mathcal{P})$ is given by
\begin{equation}
	\alpha_{f\text{-}\mathrm{DIV}}^\star(\epsilon;\mathcal{P})
	\geq \frac{1}{K-1} \left( \max\left\{ K , \frac{p_0}{1 - (K-1)p_0} \frac{K}{1-e^{-4\epsilon}} \frac{(e^{\epsilon} + K-1)^2}{e^{2\epsilon} + K-1} \right\} - 1 \right).
\end{equation}

\subsubsection{MSE metric}

The MSE metric can be expressed as [cf.~\eqref{alpha_MSE_in_full}]
\begin{equation}
	\alpha_{\mathrm{MSE}}(\Bp,\BW)
	= \tilde{\alpha}(\Bp \BPhi(\BW) \myone^\Tr,\lVert \Bp \rVert_2^2)
\end{equation}
where $\tilde{\alpha}(u,v) \triangleq \frac{u-v}{1-v}$. On the domain $(u,v) \in [1,+\infty) \times [0;1)$, the function $\tilde{\alpha}$ is marginally non-decreasing in $u$ (for fixed $v$) and in $v$ (for fixed $u$). Therefore, one obtains an upper bound on $\max_{\Bp \in \mathcal{P}} \alpha_{\mathrm{MSE}}(\Bp,\BW)$ by separately maximizing both arguments, namely
\begin{equation}   \label{relaxed_worst_case}
	\max_{\Bp \in \mathcal{P}} \alpha_{\mathrm{MSE}}(\Bp,\BW)
	\leq \tilde{\alpha}\left( \max_{\Bp \in \mathcal{P}} \Bp \BPhi(\BW) \myone^\Tr, \max_{\Bp \in \mathcal{P}} \lVert \Bp \rVert_2^2 \right).
\end{equation}
For any fixed $\BW$, the maximizing $\Bp$ in the first argument of the function $\tilde{\alpha}(\cdot,\cdot)$ on the right-hand side of~\eqref{relaxed_worst_case} is
\begin{IEEEeqnarray*}{rCl}
	\Bp^\star(\BW)
	&=& \argmax_{\Bp \in \mathcal{P}} \Bp \BPhi(\BW) \myone^\Tr \\
	&=& p_{0} \myone + (1 - K p_{0}) \Be_{k^\star(\BW)}   \IEEEyesnumber\IEEEeqnarraynumspace\label{worst_case_p}
\end{IEEEeqnarray*}
where $\Be_k = (0,\dotsc,0,1,0,\dotsc,0)$ denotes the $k$-th canonical basis vector (with the $k$-th entry equal to one), and the index $k^\star(\BW)$ corresponds to the column of $\BPhi(\BW)$ with maximum column-sum:
\begin{equation}
	k^\star(\BW)
	= \argmax_{k \in [K]} \Be_k\BPhi(\BW)\myone^\Tr
	= \argmax_{k \in [K]} \sum_{k' \in [K]} \Phi_{k,k'}(\BW).
\end{equation}
As to the second argument of $\tilde{\alpha}(\cdot,\cdot)$ in~\eqref{relaxed_worst_case}, its maximizer is any vector of the form~\eqref{worst_case_p} as well, yet with arbitrary $k^\star$. Hence, $\Bp^\star(\BW)$ as defined in \eqref{worst_case_p} is a common maximizer for both arguments of $\tilde{\alpha}(\cdot,\cdot)$, and consequently, the upper bound~\eqref{relaxed_worst_case} is tight, i.e.,
\begin{equation}
	\max_{\Bp \in \mathcal{P}} \alpha_{\mathrm{MSE}}(\Bp,\BW)
	= \tilde{\alpha}\left( \Bp^\star(\BW) \BPhi(\BW) \myone^\Tr, \lVert \Bp^\star(\BW) \rVert_2^2 \right).
\end{equation}
Here, the two arguments of the function $\tilde{\alpha}(\cdot,\cdot)$ can be evaluated in closed form, respectively, as
\begin{IEEEeqnarray*}{rCl}
	\lVert \Bp^\star(\BW) \rVert_2^2
	&=& (K-1) p_{0}^2 + (1-(K-1)p_{0})^2 \\
	&=& 1 - p_{0} (K-1) (2 - K p_{0}).   \IEEEeqnarraynumspace\IEEEyesnumber\label{minimum_squared_norm}
\end{IEEEeqnarray*}
and
\begin{equation}
	\Bp^\star(\BW) \BPhi(\BW) \myone^\Tr
	= p_{0} \varphi(\BW) + (1 - K p_{0}) \sum_{\ell \in [K]} \Phi_{k^\star(\BW),\ell}(\BW).
\end{equation}
The latter expression still depends on $\BW$, so we further lower-bound it as follows:
\begin{IEEEeqnarray*}{rCl}
	\Bp^\star(\BW) \BPhi(\BW) \myone^\Tr
	&\geq& \min_{\BW \in \mathcal{W}_\epsilon} \Bp^\star(\BW) \BPhi(\BW) \myone^\Tr \\
	&=& \min_{\BW \in \mathcal{W}_\epsilon} \Bigl\{ p_{0} \varphi(\BW) + (1 - K p_{0}) \sum_{\ell \in [K]} \Phi_{k^\star(\BW),\ell}(\BW) \Bigr\} \\
	&\stackrel{\text{(a)}}{\geq}& p_{0} \min_{\BW \in \mathcal{W}_\epsilon} \varphi(\BW) + (1 - K p_{0}) \min_{\BW \in \mathcal{W}_\epsilon} \sum_{\ell \in [K]} \Phi_{k^\star(\BW),\ell}(\BW) \\
	&\stackrel{\text{(b)}}{\geq}& \frac{1}{K} \min_{\BW \in \mathcal{W}_\epsilon} \varphi(\BW) \\
	&\stackrel{\text{(c)}}{\geq}& \frac{1}{1-e^{-4\epsilon}} \frac{(e^{\epsilon} + K-1)^2}{e^{2\epsilon} + K-1}.   \IEEEeqnarraynumspace\IEEEyesnumber\label{weighted_rowsum_LB}
\end{IEEEeqnarray*}
Here, inequality $\text{(a)}$ results from splitting the minimum of a sum into the sum of two minima; step $\text{(b)}$ follows from lower-bounding the maximum column-sum of $\BPhi(\BW)$ by the average column-sum; step $\text{(c)}$ is the application of Lemma~\ref{lem:Phi_weighted_sum}. Combining~\eqref{relaxed_worst_case}, \eqref{worst_case_p} and~\eqref{weighted_rowsum_LB}, we obtain
\begin{equation}
	\alpha_{\mathrm{MSE}}^\star(\epsilon;\mathcal{P})
	\geq \frac{\frac{1}{1-e^{-4\epsilon}} \frac{(e^{\epsilon} + K-1)^2}{e^{2\epsilon} + K-1} - 1 + p_{0} (K-1) (2 - K p_{0})}{p_{0} (K-1) (2 - K p_{0})}.
\end{equation}

\subsubsection{TV metric}

Let us first focus on the inner supremization over $\Bp$. We can lower-bound the supremum as
\begin{equation}
	\sup_{\Bp \in \mathcal{P}} \alpha_{\mathrm{TV}}(\Bp,\BW)
	\geq \left( \frac{ \min_{\Bp \in \mathcal{P}} \sqrt{ \Bp \BPhi(\BW) - \Bp \odot \Bp } \, \myone^\Tr }{ \max_{\Bp \in \mathcal{P}} \sqrt{ \Bp - \Bp \odot \Bp } \, \myone^\Tr } \right)^2.
\end{equation}
The maximization in the denominator is a convex problem and can be solved by a symmetry argument, the maximum being equal to $\sqrt{K-1}$, which is achieved by a uniform distribution vector $\Bp=\myone/K$. As to the numerator, it can be bounded as
\begin{IEEEeqnarray*}{rCl}
	\min_{\Bp \in \mathcal{P}} \sqrt{ \Bp \BPhi(\BW) - \Bp \odot \Bp } \, \myone^\Tr
	&=& \min_{\Bp \in \mathcal{P}} \sum_{k=1}^K \sqrt{ \sum_{\ell=1}^K p_\ell \Phi_{\ell,k}(\BW) - p_k^2 } \\
	&\stackrel{\text{(a)}}{\geq}& \min_{\Bp \in \mathcal{P}} \sum_{k=1}^K \sqrt{ \left( p_{0} \sum_{\ell=1}^K \Phi_{\ell,k}(\BW) - p_k^2 \right)^+ } \\
	&\stackrel{\text{(b)}}{\geq}& \min_{\Bp \in \mathcal{P}} \sqrt{ \left( p_{0} \varphi(\BW) - \lVert \Bp \rVert_2^2 \right)^+ } \\
	&\stackrel{\text{(c)}}{=}& \sqrt{ \left( p_{0} \varphi(\BW) - 1 + p_{0}(K-1)(2-K p_{0}) \right)^+ }.   \IEEEeqnarraynumspace\IEEEyesnumber
\end{IEEEeqnarray*}
Here, the inequality $\text{(a)}$ results from $p_\ell \geq p_0$, whereas $\text{(b)}$ follows from $\sqrt{a^+}+\sqrt{b^+} \geq \sqrt{a^++b^+} \geq \sqrt{(a+b)^+}$. In step $\text{(c)}$ we have reused~\eqref{minimum_squared_norm}: as we have already argued in the previous Subsection, $p_{0} \myone + (1 - K p_{0}) \Be_k$ (with arbitrary $k$) is a maximizer over $\mathcal{P}$ of the Euclidean norm $\lVert \Bp \rVert_2$. All in all, we obtain
\begin{equation}
	\sup_{\Bp \in \mathcal{P}} \alpha_{\mathrm{TV}}(\Bp,\BW)
	\geq \frac{ \left( p_{0} \varphi(\BW) - 1 + p_{0}(K-1)(2-K p_{0}) \right)^+ }{K-1}.
\end{equation}
Hence
\begin{equation}
	\alpha_{\mathrm{TV}}^\star(\epsilon;\mathcal{P})
	\geq \frac{1}{K-1} \left( \frac{K p_0}{1-e^{-4\epsilon}} \frac{(e^{\epsilon} + K-1)^2}{e^{2\epsilon} + K-1} - 1 + p_{0}(K-1)(2-K p_{0}) \right)^+.
\end{equation}

\section{Conclusion}

We have proposed a framework for the study of privacy--fidelity trade-off problems in the context of randomized response mechanisms, in which the privatization channel is supposed to facilitate the estimation of the unknown source distribution while obfuscating the source realizations. A privacy metric based on the concept of local differential privacy, and fidelity loss metrics based on $f$-divergence, MSE and TV distance have been proposed as figures of merit. We have identified the quantities $\BPhi(\BW)$ and $\varphi(\BW)$, which capture the essence of the dependency of fidelity loss metrics on the random mechanism $\BW$, and studied some of its properties, including data-processing inequalities. Finally, we have derived inner and outer bounds to some specific instances of the fundamental privacy--fidelity trade-off curve, all of which depend on the random mechanism via $\BPhi(\BW)$ or $\varphi(\BW)$.

For a better understanding of the fundamental privacy--fidelity trade-off problems, it would be desirable to tighten the gap between inner and outer bounds much further. There is some indication that the lower bounds are loose, so that the step mechanism $\BW_{\epsilon,\star}$ stands as an optimality candidate among all $\epsilon$-private mechanisms (in terms of minimizing $\varphi(\BW)$). A proof or counterexample to this claim is left as an open problem. Other interesting research directions include, for instance, broad channels $(L \geq K)$ (which encompass the RAPPOR mechanism and its generalization by \citet{YeBa18}), sources and/or privatization channels with memory, interactive mechanisms and batch processing, extensions to other types of statistical tests or queries (beyond distribution estimation), and to fidelity loss metrics based on tail probabilities rather than expectations.

\acks{
This work has been supported by the Catalan Government under grant 2017 SGR 1479 and by the Spanish Ministery of Economy and Competitiveness through project TEC2014-59255-C3-1-R (ELISA) and in part by the European ERC Starting Grant 259530-ComCom.}

\appendix

\section{Proof of Lemma~\ref{lem:concentration_inequality}}   \label{app:proof:concentration_inequality}

By Csisz\'ar's concentration inequality \citep[Theorem~1]{Cs84},
\begin{IEEEeqnarray*}{rCl}
	\Prob\bigl\{ \Bt(\By_n) \BW^{-1} \notin \mathbb{P} \bigr\}
	&=& \Prob\bigl\{ \Bt(\By_n) \notin \mathbb{P}\BW \bigr\} \\
	&=& \Prob\bigl\{ \Bt(\By_n) \in \mathbb{P}\setminus\mathbb{P}\BW \bigr\} \\
	&\leq& e^{-n D(\mathbb{P}\setminus\mathbb{P}\BW \Vert \Bp\BW)}.
\end{IEEEeqnarray*}
Since $\Bp\BW$ has only positive entries, we can express $D(\mathbb{P}\setminus\mathbb{P}\BW \Vert \Bp\BW)$ as a minimum over the topological closure of $\mathbb{P}\setminus\mathbb{P}\BW$ (instead of an infimum):
\begin{equation}
	D(\mathbb{P}\setminus\mathbb{P}\BW \Vert \Bp\BW)
	= \min_{\Br' \in \overline{\mathbb{P}\setminus\mathbb{P}\BW}} D(\Br' \Vert \Bp\BW).
\end{equation}
Furthermore, since $\Bp\BW$ belongs to the compact set $\mathbb{P}\BW$, the minimizing distribution is located on its boundary $\partial\mathbb{P}\BW$. In fact, for any distribution $\Br' \in \overline{\mathbb{P}\setminus\mathbb{P}\BW}$, there exists a distribution $\Br''$ on the intersection between the boundary $\partial\mathbb{P}\BW$ and the segment connecting $\Br'$ and $\Bp\BW$, such that $D(\Br'' \Vert \Bp\BW) \leq D(\Br' \Vert \Bp\BW)$. Hence,
\begin{IEEEeqnarray*}{rCl}
	D(\mathbb{P}\setminus\mathbb{P}\BW \Vert \Bp\BW)
	&=& D(\partial\mathbb{P}\BW \Vert \Bp\BW) \\
	&=& \min_{\Br \in \partial\mathbb{P}} D(\Br\BW \Vert \Bp\BW).   \IEEEeqnarraynumspace\IEEEyesnumber\label{information_projection_3}
\end{IEEEeqnarray*}
In addition, observing that
\begin{equation}
	\partial\mathbb{P}
	= \bigl\{ \Bp' \in \mathbb{P} \colon p_i' = 0 \ \text{for some $i$} \bigr\}
\end{equation}
it becomes evident from the last line of~\eqref{information_projection_3} that $D(\partial\mathbb{P}\BW \Vert \Bp\BW) > 0$, since $\Br\BW \neq \Bp\BW$ for all $\Br$ in the compact set $\partial\mathbb{P}$.

\section{Proof of Theorem~\ref{thm:asymptotic_expansion}}   \label{app:proof:asymptotic_expansion}

Let us define the random variables [cf.~\eqref{general_estimator}]
\begin{subequations}
\begin{IEEEeqnarray}{rCcCl}
	\hat{\Bq}_n
	&=& [\hat{q}_{n,1}, \dotsc, \hat{q}_{n,K}]
	&=& \Bt(\By_n)   \IEEEeqnarraynumspace\\
	\check{\Bp}_n
	&=& [\check{p}_{n,1}, \dotsc, \check{p}_{n,K}]
	&=& \Bt(\By_n) \BW^{-1}   \IEEEeqnarraynumspace\\
	\hat{\Bp}_n
	&=& [\hat{p}_{n,1}, \dotsc, \hat{p}_{n,K}]
	&=& \mathrm{Proj}_\mathbb{P}(\Bt(\By_n) \BW^{-1}).   \IEEEeqnarraynumspace
\end{IEEEeqnarray}
\end{subequations}
The histogram $n \hat{\Bq}_n$ is a $K$-variate random variable which follows the multinomial distribution with support set
\begin{equation*}
	\mathcal{S}_n
	\triangleq \bigl\{ (n_1,\dotsc,n_K) \in \mathbb{N}^K \colon n_1 + \dotso + n_K = n \bigr\}
\end{equation*}
(where $\mathbb{N}$ are the non-negative integers) and with a conditional probability mass function given by
\begin{equation}
	\Prob\bigl\{ n \hat{\Bq}_n = (n_1,\dotsc,n_K) \bigr\}
	=
	\begin{cases}
		\frac{n!}{n_1! \dotso n_K!} q_1^{n_1} \dotso q_K^{n_K} & \text{for $(n_1,\dotsc,n_K) \in \mathcal{S}_n$} \\
		0 & \text{for $(n_1,\dotsc,n_K) \notin \mathcal{S}_n$.}
	\end{cases}
\end{equation}
By assumption,
\begin{equation}   \label{Taylor_expansion}
	f(x) = \sum_{\rho=1}^4 \frac{f^{(\rho)}(1)}{\rho!}(x-1)^\rho + \varrho\bigl( \left| x-1 \right|^{4+\gamma} \bigr)
\end{equation}
where the remainder function $\varrho(x)$ satisfies that $\limsup_{x \to 0} |\varrho(x)|/x$ is finite.
By combining the definition of $f$-divergence~\eqref{alpha_DIV} with the above Taylor expansion, we obtain
\begin{equation}
	\Exp\bigl[ D_f\bigl(\hat{\Bp}_n \big\Vert \Bp\bigr) \bigr]
	= \sum_{k=1}^K p_k \Biggl( \sum_{\rho=1}^4 \frac{f^{(\rho)}(1)}{\rho!} \frac{\hat{\mu}_{n,k}^{(\rho)}}{n^\rho p_k^\rho} + R_{n,k} \Biggr)
\end{equation}
where
\begin{subequations}
\begin{IEEEeqnarray}{rCl}
	\hat{\mu}_{n,k}^{(\rho)}
	&\triangleq& n^\rho \Exp\left[ \bigl( \hat{p}_{n,k} - p_k \bigr)^\rho \right] \\
	R_{n,k}
	&\triangleq& \Exp\left[ \varrho\Bigl( \Bigl| \tfrac{\hat{p}_{n,k}}{p_k} - 1 \Bigr|^{4+\gamma} \Bigr) \right].
\end{IEEEeqnarray}
\end{subequations}
Let $\tilde{\Bw}_k = [\tilde{W}_{1,k},\dotsc,\tilde{W}_{K,k}]$ denote the $k$-th column of $\BW^{-1}$ (transposed into a row vector) and let us define the $\rho$-th central moment of $n\langle \tilde{\Bw}_k,\hat{\Bq}_n \rangle$ as
\begin{IEEEeqnarray*}{rCl}
	\check{\mu}_{n,k}^{(\rho)}
	&\triangleq& n^\rho \Exp\left[ \bigl\langle \tilde{\Bw}_k,\hat{\Bq}_n - \Bq \bigr\rangle^\rho \right] \\
	&=& n^\rho \Exp\left[ \bigl( \check{p}_{n,k} - p_k \bigr)^\rho \right].   \IEEEyesnumber\label{def:mu_check}
\end{IEEEeqnarray*}

\begin{lemma}   \label{lem:exponential_decay}
The difference between $\hat{\mu}_{n,k}^{(\rho)}$ and $\check{\mu}_{n,k}^{(\rho)}$ decays at least exponentially in $n$, in that we can upper-bound its absolute value as follows:
\begin{equation}
	\left| \hat{\mu}_{n,k}^{(\rho)} - \check{\mu}_{n,k}^{(\rho)} \right| \leq C n^\rho e^{-n D(\partial\mathbb{P}\BW \Vert \Bp\BW)}
\end{equation}
for some positive constant $C>0$.\footnote{For an explicit bound on $C$, see~\eqref{C_UB}.}
\end{lemma}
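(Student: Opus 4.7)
The plan is to exploit the fact that the projection $\mathrm{Proj}_\mathbb{P}$ only acts nontrivially on the rare event $\{\check{\Bp}_n \notin \mathbb{P}\}$, whose probability is controlled by Lemma~\ref{lem:concentration_inequality}. On the complementary event, $\hat{\Bp}_n = \check{\Bp}_n$ (by idempotence of the projection on $\mathbb{P}$), so the integrands $(\hat{p}_{n,k}-p_k)^\rho$ and $(\check{p}_{n,k}-p_k)^\rho$ agree pointwise, and contribute nothing to the difference $\hat{\mu}_{n,k}^{(\rho)} - \check{\mu}_{n,k}^{(\rho)}$.

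Concretely, I would write
\begin{equation*}
    \hat{\mu}_{n,k}^{(\rho)} - \check{\mu}_{n,k}^{(\rho)}
    = n^\rho \Exp\bigl[ \bigl( (\hat{p}_{n,k} - p_k)^\rho - (\check{p}_{n,k} - p_k)^\rho \bigr) \mathds{1}\{\check{\Bp}_n \notin \mathbb{P}\} \bigr]
\end{equation*}
and then bound the integrand using deterministic uniform bounds on $|\hat{p}_{n,k} - p_k|$ and $|\check{p}_{n,k} - p_k|$. For the projected estimator, $\hat{p}_{n,k} \in [0,1]$, so $|\hat{p}_{n,k} - p_k| \leq 1$. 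For the unprojected one, $\check{p}_{n,k} = \langle \tilde{\Bw}_k, \hat{\Bq}_n \rangle$ with $\hat{\Bq}_n$ lying in the probability simplex, so $|\check{p}_{n,k}| \leq \lVert \tilde{\Bw}_k \rVert_\infty$ and hence $|\check{p}_{n,k} - p_k| \leq M_k$ for some constant $M_k$ depending only on $\BW$ and $\Bp$. Setting $M = \max_k \max\{1, M_k\}$, the integrand is bounded in absolute value by $2 M^\rho$, yielding
\begin{equation*}
    \bigl| \hat{\mu}_{n,k}^{(\rho)} - \check{\mu}_{n,k}^{(\rho)} \bigr|
    \leq 2 M^\rho n^\rho \, \Prob\bigl\{ \check{\Bp}_n \notin \mathbb{P} \bigr\}.
\end{equation*}

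Applying Lemma~\ref{lem:concentration_inequality} to the probability on the right-hand side gives the claim with $C = 2M^\rho$. There is essentially no obstacle here: the only care needed is the explicit verification that $|\check{p}_{n,k}|$ admits a deterministic bound independent of $n$ (so that the factor in front of the exponentially small probability grows only polynomially, namely as $n^\rho$), and the observation that the two integrands coincide on $\{\check{\Bp}_n \in \mathbb{P}\}$ by the defining property $\mathrm{Proj}_\mathbb{P}(\check{\Bp})=\check{\Bp}$ for $\check{\Bp} \in \mathbb{P}$. The exponential rate $D(\partial\mathbb{P}\BW \Vert \Bp\BW)$ is strictly positive by the argument at the end of Appendix~\ref{app:proof:concentration_inequality}, so the bound indeed decays exponentially in $n$.
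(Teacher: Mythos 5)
Your proposal is correct and follows essentially the same strategy as the paper: both proofs rest on the observation that $\hat{\Bp}_n=\check{\Bp}_n$ off the rare event $\{\check{\Bp}_n \notin \mathbb{P}\}$, combine a deterministic uniform bound on $|\hat{p}_{n,k}-p_k|$ and $|\check{p}_{n,k}-p_k|$ with Lemma~\ref{lem:concentration_inequality}, and are careful that the constant in front of the exponential is independent of $n$. The only cosmetic difference is that you bound the difference $(\hat{p}_{n,k}-p_k)^\rho-(\check{p}_{n,k}-p_k)^\rho$ directly by $2M^\rho$ on the bad event, whereas the paper first expands it binomially and bounds each term (via Cauchy--Schwarz and $\|\tilde{\Bw}_k\|_2$) before isolating the factor that vanishes off the projection event; your version is slightly more economical but yields a constant of the same quality.
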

\begin{proof}
See Appendix~\ref{app:proof:exponential_decay}.
\end{proof}

Lemma~\ref{lem:exponential_decay} is a consequence of $\hat{\Bp}_n$ and $\check{\Bp}_n$ being equal except in exponentially rare cases. This allows us to exchange $\hat{\mu}_{n,k}^{(\rho)}$ for the easier-to-analyze $\check{\mu}_{n,k}^{(\rho)}$ in the study of asymptotic expansions. Next, we will turn our attention to evaluating the $\rho$-th moment $\check{\mu}_{n,k}^{(\rho)}$.

The multivariate moment-generating function of the multinomially distributed $n(\hat{\Bq}_n-\Bq)$ being
\begin{equation}
	\mathcal{M}(\Blambda)
	= \Exp\left[e^{n\langle\Blambda,\hat{\Bq}_n-\Bq\rangle}\right]
	= \left( \sum_{\ell=1}^K q_\ell e^{\lambda_\ell} \right)^n e^{-n\langle\Blambda,\Bq\rangle}   \label{M_ell}
\end{equation}
with $\Blambda \in \mathbb{R}^K$, it is immediate to obtain the moment-generating function of a weighted sum of the variates of $n(\hat{\Bq}_n-\Bq)$. For a weight vector $\tilde{\Bw}_k$, it suffices to replace $\Blambda$ with $\lambda \tilde{\Bw}_k$ in \eqref{M_ell} to obtain the moment-generating function of $n\langle \tilde{\Bw}_k,\hat{\Bq}_n-\Bq \rangle = n(\check{p}_{n,k}-p_k)$, i.e.,
\begin{IEEEeqnarray*}{rCl}
	\mathcal{M}_{k}(\lambda)
	&\triangleq& \mathcal{M}(\lambda \tilde{\Bw}_k) \\
	&=& \left( \sum_{\ell=1}^K q_\ell e^{\lambda \tilde{W}_{\ell,k}} \right)^n e^{-\lambda n \langle \tilde{\Bw}_k,\Bq \rangle}   \IEEEeqnarraynumspace\IEEEyesnumber\label{MGF_k}
\end{IEEEeqnarray*}
with scalar argument $\lambda \in \mathbb{R}$. The $\rho$-th central moment of $n(\check{p}_{n,k}-p_k)$ can now be expressed as the $\rho$-th derivative at $\lambda = 0$ of the corresponding moment-generating function:
\begin{equation}   \label{moments_from_MGF}
	\check{\mu}_{n,k}^{(\rho)}
	= \mathcal{M}_k^{(\rho)}(0).
\end{equation}
The $\rho$-th derivative of $\mathcal{M}_k$ can be expressed as
\begin{IEEEeqnarray*}{rCl}
	\mathcal{M}_k^{(\rho)}(\lambda)
	&=& \sum_{i=0}^\rho {\rho \choose i} f_k^{(i)}(\lambda) g_k^{(\rho-i)}(\lambda) \\
	&=& f_k(\lambda) g_k^{(\rho)}(\lambda) + \sum_{i=1}^\rho {\rho \choose i} f_k^{(i)}(\lambda) g_k^{(\rho-i)}(\lambda)   \IEEEeqnarraynumspace\IEEEyesnumber \label{M_ell_nu}
\end{IEEEeqnarray*}
where the functions $f_k$ and $g$ are defined as
\begin{subequations}
\begin{IEEEeqnarray}{rCl}
	f_k(\lambda)
	&\triangleq& \left( \sum_{\ell=1}^K q_\ell e^{\lambda \tilde{W}_{\ell,k}} \right)^n \\
	g_k(\lambda)
	&\triangleq& e^{-\lambda n \langle \tilde{\Bw}_k,\Bq \rangle}.
\end{IEEEeqnarray}
\end{subequations}
We now seek to derive an explicit expression for $\mathcal{M}_k^{(\rho)}(\lambda)$. The derivative $g_k^{(\rho-i)}(\lambda)$ can be easily evaluated as
\begin{equation}
	g_k^{(\rho-i)}(\lambda)
	= \left(-n \langle \tilde{\Bw}_k,\Bq \rangle\right)^{\rho-i} g_k(\lambda).
\end{equation}
As to the derivative $f_k^{(i)}$, it can be evaluated for $i \geq 1$ using Fa\`a di Bruno's formula for derivatives of concatenated functions:
\begin{IEEEeqnarray*}{rCl}
	f_k^{(i)}(\lambda)
	&=& \frac{\diffd^i}{\diffd \lambda^i}(u \circ v_k)(\lambda) \\
	&=& \widetilde{\sum} \frac{i!}{m_1! \dotso m_i!} u^{(m_1 + \dotso + m_i)}(v_k(\lambda)) \prod_{j=1}^i \biggl( \frac{v_k^{(j)}(\lambda)}{j!} \biggr)^{m_j}
\end{IEEEeqnarray*}
where the summation $\widetilde{\sum}$, denoted with a tilde, is over all tuples of non-negative $(m_1,\dotsc,m_i) \in \mathbb{N}^i$ satisfying $1 \cdot m_1 + \dotso + i \cdot m_i = i$. The functions $u$ and $v_k$ are respectively defined as
\begin{subequations}
\begin{IEEEeqnarray}{rCl}
	u(\lambda)
	&=& \lambda^n \\
	v_k(\lambda)
	&=& \sum_{\ell=1}^K q_\ell e^{\lambda \tilde{W}_{\ell,k}}
\end{IEEEeqnarray}
\end{subequations}
and have respective derivatives
\begin{subequations}
\begin{IEEEeqnarray}{rCl}
	u^{(j)}(\lambda)
	&=&
	\begin{cases}
		\frac{n!}{(n-j)!}\lambda^{n-j} & \text{if $j \leq n$} \\
		0 & \text{if $j > n$}
	\end{cases} \\
	v_k^{(j)}(\lambda)
	&=& \sum_{\ell=1}^K q_\ell \tilde{W}_{\ell,k}^j e^{\lambda \tilde{W}_{\ell,k}}.
\end{IEEEeqnarray}
\end{subequations}
Noticing that $v_k(0)=1$ and assuming that $n$ is sufficiently large so as to ensure that $m_1+\dotsc+m_i \leq n$ for all values of the sum $m_1+\dotsc+m_i$ taken by summation indices of $\widetilde{\sum}$, we can evaluate the derivative $f_k^{(i)}(0)$ as follows:
\begin{equation}
	f_k^{(i)}(0)
	= \widetilde{\sum} \frac{i!}{m_1! \dotso m_i!} \frac{n!}{(n-m_1 - \dotso - m_i)!} \prod_{j=1}^i \frac{\bigl\langle \tilde{\Bw}_k^{\odot j},\Bq \bigr\rangle^{m_j}}{j!^{m_j}}   \label{f_i_0}
\end{equation}
where the superscript notation $\tilde{\Bw}_k^{\odot j} \triangleq (\tilde{W}_{1,k}^j,\dotsc,\tilde{W}_{K,k}^j)$ denotes entrywise exponentiation. Note that the inner product $\langle \tilde{\Bw}_k^{\odot j},\Bq \rangle$ appearing in the last expression is in fact nothing else than [cf.~\eqref{def:nu}]
\begin{IEEEeqnarray*}{rCl}
	\bigl\langle \tilde{\Bw}_k^{\odot j},\Bq \bigr\rangle
	&=& \Bp\BW \bigl( \underbrace{\BW^{-1} \odot \BW^{-1} \odot \dotso \odot \BW^{-1}}_{\text{$j$ factors}} \bigr) \Be_k^\Tr \\
	&=& \nu_{j,k}.   \IEEEyesnumber
\end{IEEEeqnarray*}
Combining \eqref{f_i_0} with \eqref{M_ell_nu} and noticing that $f_k(0)=g_k(0)=1$, we obtain
\begin{IEEEeqnarray*}{rCl}
	\mathcal{M}_k^{(\rho)}(0)
	&=& (-n \nu_{1,k})^\rho + \sum_{i=1}^\rho {\rho \choose i} f_k^{(i)}(0) (-n \nu_{1,k})^{\rho-i} \\
	&=& (-n \nu_{1,k})^\rho + \sum_{i=1}^\rho {\rho \choose i} \widetilde{\sum} \prod_{j=1}^i \frac{\nu_{j,k}^{m_j}}{j!^{m_j}} \frac{i!n! (-n \nu_{1,k})^{\rho-i}}{m_1! \dotso m_i!(n-m_1 - \dotso - m_i)!}.   \IEEEeqnarraynumspace\IEEEyesnumber\label{M_ell_nu_0}
\end{IEEEeqnarray*}
The second, third and fourth moments of the centered variable $n(\check{p}_{n,k}-p_k)$ can be evaluated from~\eqref{moments_from_MGF} and~\eqref{M_ell_nu_0} as
\begin{subequations}
\begin{IEEEeqnarray}{rCl}
	\check{\mu}_{n,k}^{(2)}
	&=& n \left( \nu_{2,k} - \nu_{1,k}^2 \right)   \label{second_moment_evaluation} \\
	\check{\mu}_{n,k}^{(3)}
	&=& n \left( 2 \nu_{1,k}^3 - 3 \nu_{1,k} \nu_{2,k} + \nu_{3,k} \right) \\
	\check{\mu}_{n,k}^{(4)}
	&=& n \bigl( (3n-6) \nu_{1,k}^4 + 3(n-1) \nu_{2,k}^2 + (12-6n) \nu_{1,k}^2 \nu_{2,k} - 4 \nu_{1,k} \nu_{3,k} + 3 \nu_{4,k} \bigr).   \IEEEeqnarraynumspace
\end{IEEEeqnarray}
\end{subequations}
These explicit evaluations may now be inserted in the Taylor expansion~\eqref{Taylor_expansion}. In the last part, what remains to be proven is that the remainder term of said Taylor expansion satisfies
\begin{equation}   \label{vanishing_remainder}
	\lim_{n \to \infty} n^2 R_{n,k} = 0.
\end{equation}
For this purpose we upper-bound the absolute value of $R_{n,k}$ as follows:
\begin{IEEEeqnarray*}{rCl}
	|R_{n,k}|
	&\leq& \Exp\left[ \Bigl| \varrho\Bigl( \Bigl| \tfrac{\hat{p}_{n,k}}{p_k} - 1 \Bigr|^{4+\gamma} \Bigr) \Bigr| \, \middle| \, \Bigl| \tfrac{\hat{p}_{n,k}}{p_k} - 1 \Bigr| < \delta \right] \\
	&& \quad {} + \Exp\left[ \Bigl| \varrho\Bigl( \Bigl| \tfrac{\hat{p}_{n,k}}{p_k} - 1 \Bigr|^{4+\gamma} \Bigr) \Bigr| \, \middle| \, \Bigl| \tfrac{\hat{p}_{n,k}}{p_k} - 1 \Bigr| \geq \delta \right] \Prob\left\{ \Bigl| \tfrac{\hat{p}_{n,k}}{p_k} - 1 \Bigr| \geq \delta \right\}.   \IEEEeqnarraynumspace\IEEEyesnumber\label{remainder_UB}
\end{IEEEeqnarray*}
This bound results from the convexity of the absolute value (Jensen's inequality), from expanding the expectation using the law of total expectation, and upper-bounding $\Prob\left\{ | \hat{p}_{n,k}/p_k - 1 | < \delta \right\}$ by one. The remaining three terms on the right-hand side of~\eqref{remainder_UB} are upper-bounded in the following.

For the first term, recall that $\varrho(x) = O(x)$ in the vicinity of $x=0$, hence there exists a value $\omega > 0$ such that for any sufficiently small $\delta > 0$, we have $|\varrho(\delta)| \leq \omega \delta$. Consequently, there exists a $\delta_0 > 0$ such that, so long as $\delta \in (0;\delta_0)$,
\begin{equation}
	\Exp\left[ \Bigl| \varrho\Bigl( \Bigl| \tfrac{\hat{p}_{n,k}}{p_k} - 1 \Bigr|^{4+\gamma} \Bigr) \Bigr| \, \middle| \, \Bigl| \tfrac{\hat{p}_{n,k}}{p_k} - 1 \Bigr| < \delta \right]
	\leq \omega \delta^{4+\gamma}.   \label{varrho_UB}
\end{equation}
For the second term in~\eqref{remainder_UB}, recall that $f(0)$ is bounded and $f(x)$ is well-defined for $x>0$, and as a consequence, $\varrho(x)$ must be well-defined for $x \geq 0$. In particular, it follows that $\varrho(|x-1|^{4+\gamma})$ is bounded on the closed interval $x \in [0;1/p_k]$. Therefore, the second term in~\eqref{remainder_UB} is upper-bounded by a constant
\begin{equation}
	\Exp\left[ \Bigl| \varrho\Bigl( \Bigl| \tfrac{\hat{p}_{n,k}}{p_k} - 1 \Bigr|^{4+\gamma} \Bigr) \Bigr| \, \middle| \, \Bigl| \tfrac{\hat{p}_{n,k}}{p_k} - 1 \Bigr| \geq \delta \right]
	\leq \max_{x \in [0;1/p_k]} \bigl\{ \bigl| \varrho\bigl( \left| x-1 \right|^{4+\gamma} \bigr) \bigr| \bigr\}
	\triangleq M   \label{varrho_UB_2}
\end{equation}
where $M$ will serve subsequently as an abbreviative notation.
For the third term in~\eqref{remainder_UB}, let us expand it using the law of total expectation:
\begin{IEEEeqnarray*}{rCl}
	\Prob\left\{ \Bigl| \tfrac{\hat{p}_{n,k}}{p_k} - 1 \Bigr| \geq \delta \right\}
	&=& \Prob\left\{ \Bigl| \tfrac{\check{p}_{n,k}}{p_k} - 1 \Bigr| \geq \delta \middle| \check{\Bp}_n \in \mathbb{P} \right\} \Prob\left\{ \check{\Bp}_n \in \mathbb{P} \right\} \\
	&& {} + \Prob\left\{ \Bigl| \tfrac{\hat{p}_{n,k}}{p_k} - 1 \Bigr| \geq \delta \middle| \check{\Bp}_n \notin \mathbb{P} \right\} \Prob\left\{ \check{\Bp}_n \notin \mathbb{P} \right\} \\
	&=& \Prob\left\{ \Bigl| \tfrac{\check{p}_{n,k}}{p_k} - 1 \Bigr| \geq \delta \right\} + \Bigl[ \Prob\left\{ \Bigl| \tfrac{\hat{p}_{n,k}}{p_k} - 1 \Bigr| \geq \delta \middle| \check{\Bp}_n \notin \mathbb{P} \right\} \\
	&& {} - \Prob\left\{ \Bigl| \tfrac{\check{p}_{n,k}}{p_k} - 1 \Bigr| \geq \delta \middle| \check{\Bp}_n \notin \mathbb{P} \right\} \Bigr] \Prob\left\{ \check{\Bp}_n \notin \mathbb{P} \right\} \\
	&\leq& \Prob\left\{ \Bigl| \tfrac{\check{p}_{n,k}}{p_k} - 1 \Bigr| \geq \delta \right\} + \Prob\left\{ \check{\Bp}_n \notin \mathbb{P} \right\} \\
	&\leq& \Prob\left\{ \Bigl| \tfrac{\check{p}_{n,k}}{p_k} - 1 \Bigr| \geq \delta \right\} + e^{-n D(\partial\mathbb{P}\BW \Vert \Bp\BW)}.   \IEEEeqnarraynumspace\IEEEyesnumber\label{tail_UB}
\end{IEEEeqnarray*}
Here, in the first equality, we have exploited the fact conditioned on $\check{\Bp}_n \in \mathbb{P}$, it holds that $\check{\Bp}_n = \hat{\Bp}_n$ (notice $\check{p}_{n,k}$ in the first probability term on the right-hand side). In the last bounding step, we have used Lemma~\ref{lem:concentration_inequality}.

To tightly bound the deviation probability
\begin{equation*}
	\Prob\left\{ \Bigl| \tfrac{\check{p}_{n,k}}{p_k} - 1 \Bigr| \geq \delta \right\}
	= \Prob\Bigl\{ \check{p}_{n,k} \notin \bigl( p_k(1-\delta);p_k(1+\delta) \bigr) \Bigr\}
\end{equation*}
remaining on the right-hand side of~\eqref{tail_UB}, recall that $\check{p}_{n,k}$ is equal to the inner product $\langle \tilde{\Bw}_k, \hat{\Bq}_n \rangle$. The entries of the type vector $\hat{\Bq}_n$ are jointly multinomially distributed, and their joint cumulant-generating function is given by
\begin{equation}
	\Blambda \mapsto \log\Exp\left[e^{\langle\Blambda,\hat{\Bq}_n\rangle}\right]
	= n\log\left( \sum_{\ell=1}^K q_\ell e^{\lambda_\ell/n} \right)
\end{equation}
with argument $\Blambda = (\lambda_1,\dotsc,\lambda_K) \in \mathbb{R}^K$, and therefore the cumulant-generating function of $\check{p}_{n,k} = \langle \tilde{\Bw}_k, \hat{\Bq}_n \rangle$ is given by
\begin{equation}
	\mathcal{K}_{n,k}(\lambda) \triangleq n\log\left( \sum_{\ell=1}^K q_\ell e^{\lambda \tilde{W}_{\ell,k}/n} \right).
\end{equation}
Since the limit
\begin{equation}
	\mathcal{K}_k(\lambda)
	\triangleq \lim_{n\to\infty} \frac{1}{n}\mathcal{K}_{n,k}(n\lambda)
	= \log\left( \sum_{\ell=1}^K q_\ell e^{\lambda \tilde{W}_{\ell,k}} \right)
\end{equation}
is a well-defined strictly convex function of $\lambda \in \mathbb{R}$, we can define its Legendre-Fenchel transform as
\begin{equation}
	\mathcal{K}_k^\star(x)
	\triangleq \sup_{\lambda \in \mathbb{R}} \left\{ \lambda x - \mathcal{K}_k(\lambda) \right\}
\end{equation}
which is a non-negative, quasi-convex function vanishing at $x = \mathcal{K}'(0) = \langle \tilde{\Bw}_{k'},\Bq \rangle = p_k \in (0;1)$, and apply the G\"artner--Ellis Theorem \citet[Theorem~2.3.6]{DeZe98} to obtain an upper bound on the large-deviation exponent:
\begin{IEEEeqnarray*}{rCl}
	\limsup_{n\to\infty} \frac{1}{n}\log\Prob\left\{ \Bigl| \tfrac{\check{p}_{n,k}}{p_k} - 1 \Bigr| \geq \delta \right\}
	&\leq& -\inf_{x \in (-\infty ; p_k(1-\delta) ] \cup [ p_k(1+\delta) ; \infty)} \mathcal{K}_k^\star(x) \\
	&=& -\min\bigl\{ \mathcal{K}_k^\star(p_k(1-\delta)), \mathcal{K}_k^\star(p_k(1+\delta)) \bigr\} \\
	&\leq& - \min\bigl\{ \lambda p_k(1-\delta) - \mathcal{K}_k(\lambda) , \lambda p_k(1+\delta) - \mathcal{K}_k(\lambda) \bigr\} \\
	&=& \mathcal{K}_k(\lambda) - p_k \bigl( \lambda + \delta |\lambda| \bigr)   \IEEEeqnarraynumspace\IEEEyesnumber\label{exponent_bound}
\end{IEEEeqnarray*}
for an arbitrary $\lambda \in \mathbb{R}$. Since we are free to choose $\lambda$, we may as well restrict $\lambda$ to being non-negative, thus rendering the absolute value $|\lambda|$ equal to $\lambda$. This gives us
\begin{equation}
	\limsup_{n\to\infty} \frac{1}{n}\log\Prob\left\{ \Bigl| \tfrac{\check{p}_{n,k}}{p_k} - 1 \Bigr| \geq \delta \right\}
	\leq \mathcal{K}_k(\lambda) - p_k(1+\delta)\lambda   \qquad \text{(for any $\lambda \geq 0$)}.   \label{exponent_bound_2}
\end{equation}
Given that $\mathcal{K}_k(\lambda)$ is infinitely differentiable, we can replace it in~\eqref{exponent_bound_2} with its second-order Taylor expansion around the origin
\begin{IEEEeqnarray*}{rCl}
	\mathcal{K}_k(\lambda)
	&=& \mathcal{K}_k'(0) \lambda + \frac{\mathcal{K}_k''(0)}{2} \lambda^2 + O(\lambda^3) \\
	&=& p_k \lambda + \frac{1}{2} \left( - p_k^2 + \textstyle\sum_{\ell=1}^K q_\ell \tilde{W}_{\ell,k}^2 \right) \lambda^2 + O(\lambda^3)   \IEEEeqnarraynumspace\IEEEyesnumber
\end{IEEEeqnarray*}
and, noting that $\mathcal{K}_k''(0) > 0$ (due to strict convexity of $\mathcal{K}_k$), we can further proceed with upper-bounding the large-deviation exponent as follows:
\begin{equation*}
	\limsup_{n\to\infty} \frac{1}{n}\log\Prob\left\{ \Bigl| \tfrac{\check{p}_{n,k}}{p_k} - 1 \Bigr| \geq \delta \right\}
	\leq \frac{\mathcal{K}_k''(0)}{2} \lambda^2 - p_k \delta \lambda + O(\lambda^3)
\end{equation*}
for arbitrary $\lambda \geq 0$. Setting $\lambda = p_k \delta / \mathcal{K}_k''(0)$ this bound becomes
\begin{equation*}
	\limsup_{n\to\infty} \frac{1}{n}\log\Prob\left\{ \Bigl| \tfrac{\check{p}_{n,k}}{p_k} - 1 \Bigr| \geq \delta \right\}
	\leq -\frac{p_k^2 \delta^2}{2 \mathcal{K}_k''(0)} + O\biggl( \frac{p_k^3 \delta^3}{\mathcal{K}_k''(0)^3} \biggr)
\end{equation*}
The later implies that there exist constants $C>0$ and $\delta_0' >0$ such that for every $\delta \in (0;\delta_0')$,
\begin{equation}
	\Prob\left\{ \Bigl| \tfrac{\check{p}_{n,k}}{p_k} - 1 \Bigr| \geq \delta \right\}
	\leq C \exp\biggl( -\frac{n p_k^2 \delta^2}{2 \mathcal{K}_k''(0)} \biggr).   \label{exponent_bound_3}
\end{equation}
Upon inserting inequalities~\eqref{varrho_UB}, \eqref{varrho_UB_2} and \eqref{exponent_bound_3} into~\eqref{remainder_UB}, we finally obtain that, for any $\delta \in (0;\min\{\delta_0,\delta_0'\})$,
\begin{equation}
	\left| R_{n,k} \right|
	\leq \omega \delta^{4+\gamma} + M C \exp\biggl( -\frac{n p_k^2 \delta^2}{2 \mathcal{K}_k''(0)} \biggr).
\end{equation}
Multiplying either side with $n^2$, and substituting $\delta$ with $n^{-\frac{1}{2}+\frac{\gamma}{4(4+\gamma)}}$, it holds for all $n$ sufficiently large so as to satisfy $n^{-\frac{1}{2}+\frac{\gamma}{4(4+\gamma)}} < \min\{\delta_0,\delta_0'\}$, we have
\begin{equation}
	n^2 \left| R_{n,k} \right|
	\leq \omega n^{-\frac{\gamma}{4}} + n^2 M C \exp\biggl( -\frac{p_k^2 n^{\frac{\gamma}{2(4+\gamma)}}}{2 \mathcal{K}_k''(0)} \biggr).
\end{equation}
Taking the limit as $n \to \infty$, we conclude~\eqref{vanishing_remainder}. This finishes the proof.

\section{Proof of Lemma~\ref{lem:exponential_decay}}   \label{app:proof:exponential_decay}

Using successively the Jensen inequality and the triangle inequality, we obtain the upper bound
\begin{IEEEeqnarray*}{rCl}
	\left| \hat{\mu}_{n,k}^{(\rho)} - \check{\mu}_{n,k}^{(\rho)} \right|
	&\leq& n^\rho \Exp\left[ \left| \bigl( \hat{p}_{n,k} - p_k \bigr)^{\rho} - \bigl( \check{p}_{n,k} - p_k \bigr)^\rho \right| \right] \\
	&=& n^\rho \Exp\Biggl[ \Biggl| \sum_{r=1}^\rho {\rho \choose r} \Bigl( \check{p}_{n,k} - p_k \Bigr)^{\rho-r} \Bigl( \hat{p}_{n,k} - \check{p}_{n,k} \Bigr)^r \Biggr| \Biggr] \\
	&\leq& n^\rho \Exp\Biggl[ \sum_{r=1}^\rho {\rho \choose r} \Bigl| \check{p}_{n,k} - p_k \Bigr|^{\rho-r} \Bigl| \hat{p}_{n,k} - \check{p}_{n,k} \Bigr|^r \Biggr].
\end{IEEEeqnarray*}
Next, by the Cauchy--Schwarz inequality,
\begin{IEEEeqnarray*}{rCl}
	\Bigl| \check{p}_{n,k} - p_k \Bigr|
	&=& \bigl| \bigl\langle \tilde{\Bw}_k , \hat{\Bq}_n - \Bq \bigr\rangle \bigr| \\
	&\leq& \bigl\lVert \tilde{\Bw}_k \bigr\rVert_2 \bigl\lVert \hat{\Bq}_n - \Bq \bigr\rVert_2 \\
	&\leq& \sqrt{K} \bigl\lVert \tilde{\Bw}_k \bigr\rVert_2
\IEEEeqnarraynumspace\IEEEyesnumber
\end{IEEEeqnarray*}
(where the last inequality is due to $\hat{\Bq}_n$ and $\Bq$ being probability vectors)
so we obtain
\begin{IEEEeqnarray*}{rCl}
	\left| \hat{\mu}_{n,k}^{(\rho)} - \check{\mu}_{n,k}^{(\rho)} \right|
	&\leq& n^\rho \sum_{r=1}^\rho {\rho \choose r} \left( \sqrt{K} \bigl\lVert \tilde{\Bw}_k \bigr\rVert_2 \right)^{\rho-r} \Exp\Bigl[ \bigl| \hat{p}_{n,k} - \check{p}_{n,k} \bigr|^r \Bigr]
\end{IEEEeqnarray*}
wherein the remaining expectation may be expressed as
\begin{equation}   \label{total_probability}
	\Exp\left[\Bigl| \hat{p}_{n,k} - \check{p}_{n,k} \Bigr|^{r}\right]
	= \Exp\left[\Bigl| \hat{p}_{n,k} - \check{p}_{n,k} \Bigr|^{r} \middle| \hat{p}_{n,k} \neq \check{p}_{n,k} \right]
	\Prob\left\{ \hat{p}_{n,k} \neq \check{p}_{n,k} \right\}.
\end{equation}
To upper-bound it, notice that the expectation on the right-hand side of \eqref{total_probability} can be upper-bounded by means of
\begin{IEEEeqnarray*}{rCl}
	\Bigl| \hat{p}_{n,k} - \check{p}_{n,k} \Bigr|
	&=& \Bigl| \hat{p}_{n,k} - \langle \hat{\Bq}_n, \tilde{\Bw}_k \rangle \Bigr| \\
	&\leq& \bigl| \hat{p}_{n,k} \bigr| + \bigl| \langle \hat{\Bq}_n, \tilde{\Bw}_k \rangle \bigr| \\
	&\leq& 1 + \bigl\lVert \hat{\Bq}_n \bigr\rVert_2 \bigl\lVert \tilde{\Bw}_k \bigr\rVert_2 \\
	&\leq& 1 + \bigl\lVert \tilde{\Bw}_k \bigr\rVert_2   \IEEEyesnumber\IEEEeqnarraynumspace\label{hat_check_inequality}
\end{IEEEeqnarray*}
to yield
\begin{equation}
	\left| \hat{\mu}_{n,k}^{(\rho)} - \mu_{n,k}^{(\rho)} \right|
	\leq n^\rho \sum_{r=1}^\rho {\rho \choose r} \left( \sqrt{K} \bigl\lVert \tilde{\Bw}_k \bigr\rVert_2 \right)^{\rho-r} \left(1 + \bigl\lVert \tilde{\Bw}_k \bigr\rVert_2 \right)^r \Prob\left\{ \hat{p}_{n,k} \neq \check{p}_{n,k} \right\}.   \label{tail_upper_bound}
\end{equation}
Finally, to upper-bound the probability $\Prob\bigl\{ \hat{p}_{n,k} \neq \check{p}_{n,k} \bigr\}$, notice that by definition of the projector $\mathrm{Proj}_\mathbb{P}(\cdot)$, for the event $\hat{p}_{n,k} \neq \check{p}_{n,k}$ to occur, it is necessary that $\check{\Bp}_n$ be no probability vector. However, the fact that the rows of $\BW^{-1}$ sum to one ensures that the entries of $\check{\Bp}_n$ will sum to one as well. Therefore $\check{\Bp}_n$ fails to be a probability vector whenever some of its entries lie outside of the unit interval. Hence,
\begin{IEEEeqnarray*}{rCl}
	\Prob\left\{ \hat{p}_{n,k} \neq \check{p}_{n,k} \right\}
	&\leq& \Prob\left\{ \hat{\Bp}_n \neq \check{\Bp}_n \right\} \\
	&\leq& e^{-n D(\partial\mathbb{P}\BW \Vert \Bp\BW)}   \IEEEeqnarraynumspace\IEEEyesnumber\label{projection_probability_UB}
\end{IEEEeqnarray*}
by Lemma~\ref{lem:concentration_inequality}.
Combining~\eqref{tail_upper_bound} and \eqref{projection_probability_UB}, we can conclude that
\begin{equation}
	\left| \hat{\mu}_{n,k}^{(\rho)} - \check{\mu}_{n,k}^{(\rho)} \right|
	\leq C n^\rho e^{-n D(\partial\mathbb{P}\BW \Vert \Bp\BW)}.
\end{equation}
for some positive constant $C>0$ which can be bounded for instance as
\begin{IEEEeqnarray*}{rCl}
	C
	&\leq& \sum_{r=1}^\rho {\rho \choose r} \left( \sqrt{K} \bigl\lVert \tilde{\Bw}_k \bigr\rVert_2 \right)^{\rho-r} \left(1 + \bigl\lVert \tilde{\Bw}_k \bigr\rVert_2 \right)^r \\
	&\leq& \left( 1 + \sqrt{K} \bigl( 1 + \bigl\lVert \tilde{\Bw}_k \bigr\rVert_2 \bigr) \right)^\rho.   \IEEEeqnarraynumspace\IEEEyesnumber\label{C_UB}
\end{IEEEeqnarray*}
This finishes the proof.

\section{Proof of Theorem~\ref{thm:MSE}}   \label{app:proof:MSE}

The MSE loss metric can be expressed as
\begin{IEEEeqnarray*}{rCl}
	\mathscr{L}_{\mathrm{MSE}}^{(n)}(\Bp,\BW)
	&=& \Exp\bigl[ \lVert \hat{\Bp}_n - \Bp \rVert_2^2 \bigr] \\
	&=& \Exp\bigl[ \lVert \hat{\Bp}_n - \check{\Bp}_n + \check{\Bp}_n - \Bp \rVert_2^2 \bigr] \\
	&=& \sum_{k \in [K]} \frac{\check{\mu}_{n,k}^{(2)}}{n^2} + \Exp\bigl[ \lVert \hat{\Bp}_n - \check{\Bp}_n \rVert_2^2 \bigr] + 2\Exp\bigl[ \langle \hat{\Bp}_n - \check{\Bp}_n , \check{\Bp}_n - \Bp \rangle \bigr].
\end{IEEEeqnarray*}
We shall prove that, in the last line, the first term [cf.~\eqref{def:mu_check}]
\begin{equation}
	\frac{1}{n^2} \sum_{k \in [K]} \check{\mu}_{n,k}^{(2)}
	= \frac{1}{n} \sum_{k \in [K]} (\nu_{2,k} - \nu_{1,k}^2)
\end{equation}
is the dominant term (as $n \to \infty$), whereas the two other terms decay exponentially in $n$. Recall from the proof of Lemma~\ref{lem:exponential_decay} that [cf.~\eqref{hat_check_inequality}]
\begin{equation}   \label{p_hat_check_gap}
	\bigl| \hat{p}_{n,k} - \check{p}_{n,k} \bigr|
	\leq 1 + \lVert \tilde{\Bw}_k \rVert_2
\end{equation}
where $\tilde{\Bw}_k$ is the $k$-th column of $\BW^{-1}$. Following the exact same steps as in~\eqref{hat_check_inequality} except for replacing $\hat{p}_{n,k}$ with $p_k$, one can show
\begin{equation}   \label{p_check_gap}
	\bigl| p_k - \check{p}_{n,k} \bigr|
	\leq 1 + \lVert \tilde{\Bw}_k \rVert_2.
\end{equation}
Hence, using~\eqref{p_hat_check_gap},
\begin{IEEEeqnarray*}{rCl}
	\Exp\bigl[ \lVert \hat{\Bp}_n - \check{\Bp}_n \rVert_2^2 \bigr]
	&=& \Exp\bigl[ \lVert \hat{\Bp}_n - \check{\Bp}_n \rVert_2^2 \big| \hat{\Bp}_n \neq \check{\Bp}_n \bigr] \Prob\{ \hat{\Bp}_n \neq \check{\Bp}_n \} \\
	&\leq& \sum_{k \in [K]} \bigl( 1 + \lVert \tilde{\Bw}_k \rVert_2 \bigr)^2 \Prob\{ \hat{\Bp}_n \neq \check{\Bp}_n \}.   \IEEEeqnarraynumspace\IEEEyesnumber
\end{IEEEeqnarray*}
Likewise, using~\eqref{p_hat_check_gap} and \eqref{p_check_gap},
\begin{IEEEeqnarray*}{rCl}
	\bigl| \Exp\bigl[ \langle \hat{\Bp}_n - \check{\Bp}_n , \check{\Bp}_n - \Bp \rangle \bigr] \bigr|
	&=& \bigl| \Exp\bigl[ \langle \hat{\Bp}_n - \check{\Bp}_n , \check{\Bp}_n - \Bp \rangle \big| \hat{\Bp}_n \neq \check{\Bp}_n \bigr] \bigr| \cdot \Prob\{ \hat{\Bp}_n \neq \check{\Bp}_n \} \\
	&\leq& \sum_{k \in [K]} \Exp\Bigl[ \bigl| \hat{p}_{n,k} - \check{p}_{n,k} \bigr| \cdot \bigl| \check{p}_{n,k} - p_k \bigr| \Big| \hat{\Bp}_n \neq \check{\Bp}_n \Bigr] \Prob\{ \hat{\Bp}_n \neq \check{\Bp}_n \} \\
	&\leq& \sum_{k \in [K]} \bigl( 1 + \lVert \tilde{\Bw}_k \rVert_2 \bigr)^2 \Prob\{ \hat{\Bp}_n \neq \check{\Bp}_n \}.   \IEEEeqnarraynumspace\IEEEyesnumber
\end{IEEEeqnarray*}
Invoking Lemma~\ref{lem:concentration_inequality}, we can conclude
\begin{equation*}
	\mathscr{L}_{\mathrm{MSE}}^{(n)}(\Bp,\BW)
	= \frac{1}{n} \sum_{k \in [K]} (\nu_{2,k} - \nu_{1,k}^2) + O\bigl(e^{-n D(\partial\mathbb{P}\BW \Vert \Bp\BW)}\bigr).
\end{equation*}

\section{Proof of Theorem~\ref{thm:TV}}   \label{app:proof:TV}

The TV loss metric is close to $\Exp\bigl[ \lVert \check{\Bp}_n - \Bp \rVert_1 \bigr]$, up to an error term which can be bounded by means of Jensen's inequality and the triangle inequality as
\begin{IEEEeqnarray*}{rCl}
	\Bigl| \mathscr{L}_{\mathrm{TV}}^{(n)}(\Bp,\BW) - \Exp\bigl[ \lVert \check{\Bp}_n - \Bp \rVert_1 \bigr] \Bigr|
	&=& \Bigl| \Exp\bigl[ \lVert \hat{\Bp}_n - \Bp \rVert_1 \bigr] - \Exp\bigl[ \lVert \check{\Bp}_n - \Bp \rVert_1 \bigr] \Bigr| \\
	&\leq& \Exp\Bigl[ \bigl| \lVert \hat{\Bp}_n - \Bp \rVert_1 - \lVert \check{\Bp}_n - \Bp \rVert_1 \bigr| \Bigr] \\
	&\leq& \Exp\bigl[ \lVert \hat{\Bp}_n - \check{\Bp}_n \rVert_1 \bigr] \\
	&=& \Exp\bigl[ \lVert \hat{\Bp}_n - \check{\Bp}_n \rVert_1 \bigm| \hat{\Bp}_n \neq \check{\Bp}_n \bigr] \bigr| \cdot \Prob\{ \hat{\Bp}_n \neq \check{\Bp}_n \}.   \IEEEeqnarraynumspace\IEEEyesnumber
\end{IEEEeqnarray*}
Using the bound $\bigl| \hat{p}_{n,k} - \check{p}_{n,k} \bigr| \leq 1 + \lVert \tilde{\Bw}_k \rVert_2$ from the proof of Lemma~\ref{lem:exponential_decay} [cf.~\eqref{hat_check_inequality}], we obtain
\begin{equation}
	\Bigl| \mathscr{L}_{\mathrm{TV}}^{(n)}(\Bp,\BW) - \Exp\bigl[ \lVert \check{\Bp}_n - \Bp \rVert_1 \bigr] \Bigr|
	\leq \left( K + \textstyle\sum_{k \in [K]} \lVert \tilde{\Bw}_k \rVert_2 \right) \Prob\{ \hat{\Bp}_n \neq \check{\Bp}_n \}.
\end{equation}
Finally, by Lemma~\ref{lem:concentration_inequality}, we conclude that
\begin{equation}
	\mathscr{L}_{\mathrm{TV}}^{(n)}(\Bp,\BW)
	= \Exp\bigl[ \lVert \check{\Bp}_n - \Bp \rVert_1 \bigr] + O\bigl(e^{-n D(\partial\mathbb{P}\BW \Vert \Bp\BW)}\bigr)
\end{equation}
meaning that we can focus on expanding $\Exp\bigl[ \lVert \check{\Bp}_n - \Bp \rVert_1 \bigr]$ instead of $\mathscr{L}_{\mathrm{TV}}^{(n)}(\Bp,\BW)$, since both quantities are exponentially close (hence asymptotically equivalent).
Next, let us define the centered unit-variance random variables
\begin{equation}
	Z_{n,k}
	\triangleq \frac{\check{p}_{n,k} - p_k}{\sqrt{\var(\check{p}_{n,k})}}
	= \frac{\sqrt{n}(\check{p}_{n,k} - p_k)}{\sqrt{\nu_{2,k} - \nu_{1,k}^2}}
\end{equation}
where the second equality holds because~[cf.~\eqref{def:mu_check}]
\begin{equation}
	n^2 \var(\check{p}_{n,k})
	= \check{\mu}_{n,k}^{(2)}
	= n \bigl( \nu_{2,k} - \nu_{1,k}^2 \bigr).
\end{equation}
Recall that the moment-generating function of $n(\check{p}_{n,k}-p_k)$ was introduced in~\eqref{MGF_k}, where it is denoted as $\mathcal{M}_k$. Hence, from~\eqref{MGF_k} we infer that the moment-generating function of $Z_{n,k}$ is
\begin{IEEEeqnarray*}{rCl}
	\Exp\bigl[ e^{\lambda Z_{n,k}} \bigr]
	&=& \mathcal{M}_k\left(\frac{\lambda}{\sqrt{n(\nu_{2,k} - \nu_{1,k}^2)}}\right) \\
	&=& \left( \sum_{\ell=1}^K q_\ell \exp\left( \frac{\lambda \bigl( \tilde{W}_{\ell,k} - \langle \tilde{\Bw}_k,\Bq \rangle \bigr)}{\sqrt{n(\nu_{2,k} - \nu_{1,k}^2)}} \right) \right)^n \\
	&=& \left( 1 - \frac{\lambda^2}{2n} + o(\lambda^2) \right)^n   \IEEEeqnarraynumspace\IEEEyesnumber\label{MGF}
\end{IEEEeqnarray*}
where the last equality follows from Taylor-expanding the exponential function to second order and from noticing that the first and second order terms of said expansion simplify due to
\begin{subequations}
\begin{IEEEeqnarray}{rCl}
	\sum_{\ell=1}^K q_\ell \tilde{W}_{\ell,k} - \langle \tilde{\Bw}_k,\Bq \rangle
	&=& 0 \\
	\sum_{\ell=1}^K q_\ell \left( \frac{\tilde{W}_{\ell,k} - \langle \tilde{\Bw}_k,\Bq \rangle}{\sqrt{\nu_{2,k} - \nu_{1,k}^2}} \right)^2
	&=& 1.
\end{IEEEeqnarray}
\end{subequations}
For the latter equality, recall that $\langle \tilde{\Bw}_k,\Bq \rangle = \sum_{\ell} q_\ell \tilde{W}_{\ell,k} = p_k = \nu_{1,k}$ and that [cf.~\eqref{second_moment_evaluation}]
\begin{equation}
	\sum_{\ell=1}^K q_\ell \left( \tilde{W}_{\ell,k} - \langle \tilde{\Bw}_k,\Bq \rangle \right)^2
	= \nu_{2,k} - \nu_{1,k}^2
	= \frac{\check{\mu}_{n,k}^{(2)}}{n}.
\end{equation}
Since $\lim_{n \to \infty} \left(1+\frac{x}{n}+o(n^{-1})\right)^n = e^x$, we have that for any $\lambda \in \mathbb{R}$, the limit of the moment-generating function of $Z_{n,k}$ as given in~\eqref{MGF} is
\begin{equation}
	\lim_{n \to \infty} \Exp\bigl[ e^{\lambda Z_{n,k}} \bigr]
	= e^{-\lambda^2/2}
\end{equation}
which is the moment-generating function of the normal distribution. By L\'evy's continuity theorem for moment-generating functions~\citep[Problem~30.4]{Bi95}, the pointwise convergence of the moment-generating functions of $Z_{n,k}$ to that of the normal distribution as $n \to \infty$ implies that $Z_{n,k}$ converges in law to a normal random variable $Z \sim \mathcal{N}(0,1)$. Therefore,
\begin{IEEEeqnarray*}{rCl}
	\lim_{n \to \infty}	\sqrt{n} \mathscr{L}_{\mathrm{TV}}^{(n)}(\Bp,\BW)
	&=& \lim_{n \to \infty} \sqrt{n} \sum_{k=1}^K \Exp\bigl[ \left| \check{p}_{n,k} - p_k \right| \bigr] \\
	&=& \sum_{k=1}^K \sqrt{\nu_{2,k} - \nu_{1,k}^2} \lim_{n \to \infty} \Exp\bigl[ \left| Z_{n,k} \right| \bigr] \\
	&=& \sum_{k=1}^K \sqrt{\nu_{2,k} - \nu_{1,k}^2} \Exp\bigl[ |Z| \bigr] \\
	&=& \sum_{k=1}^K \sqrt{\nu_{2,k} - \nu_{1,k}^2} \sqrt{\frac{2}{\pi}}.
\end{IEEEeqnarray*}
This concludes the proof.

\section{Proof of Theorem~\ref{thm:general_DPI}}   \label{app:proof:general_DPI}

Let $\mathscr{D}(\cdot,\cdot)$ denote a distance metric between probability vectors, which shall stand either for the $f$-divergence $D_f(\cdot \Vert \cdot)$ or for the MSE metric $\lVert \cdot - \cdot \rVert_2^2$. These metrics are jointly convex and continuous in both arguments. In particular, they are convex in the first argument, i.e.,
\begin{equation*}
	\mathscr{D}(\lambda\Bp+(1-\lambda)\Bp',\Bp'') \leq \lambda \mathscr{D}(\Bp,\Bp'') + (1-\lambda) \mathscr{D}(\Bp',\Bp'').
\end{equation*}
Denoting the probability simplex as $\mathbb{P}$, let us define
\begin{equation}
	\bar{\mathscr{D}}(\Bp)
	\triangleq \sup_{\Bp' \in \mathbb{P}} \mathscr{D}(\Bp',\Bp)
\end{equation}
which is always finite in the case of the MSE metric, and also finite in the case of the $f$-divergence metric as long as $\lim_{x \downarrow 0} x f(x) < +\infty$, which holds due to the assumption that $f(0)$ is finite [cf.~Theorem~\ref{thm:asymptotic_expansion}].

As usual, assume that $\Bx_n \sim \Bp^{\otimes n}$ is a vector of $n$ i.i.d.\ source samples, and let $\By_n \sim (\Bp\BW)^{\otimes n}$ be the corresponding outputs from $n$ copies of the channel $\BW$. In addition, assume that $\By'_n$ are the outputs from passing $\By_n$ through $n$ copies of another channel $\BW'$. Let us define the estimates
\begin{subequations}
\begin{IEEEeqnarray}{rCl}
	\check{\Bp}_n'
	&\triangleq& \Bt(\By'_n)(\BW\BW')^{-1} \\
	\hat{\Bp}_n'
	&\triangleq& \mathrm{Proj}_\mathbb{P}\bigl(\Bt(\By'_n)(\BW\BW')^{-1}\bigr)
\end{IEEEeqnarray}
\end{subequations}
based on the degraded observation $\By_n'$. In addition, let us define the estimates of the output distribution of the first channel $\BW$ as
\begin{subequations}
\begin{IEEEeqnarray}{rCl}
	\check{\Bq}_n
	&\triangleq& \Bt(\By'_n)(\BW')^{-1} \\
	\hat{\Bq}_n
	&\triangleq& \mathrm{Proj}_\mathbb{P}\bigl(\Bt(\By'_n)(\BW')^{-1}\bigr).
\end{IEEEeqnarray}
\end{subequations}
Using the law of total probability,
\begin{IEEEeqnarray*}{rCl}
	\mathscr{L}^{(n)}(\Bp,\BW\BW')
	&=& \Exp\bigl[\mathscr{D}\bigl(\hat{\Bp}_n',\Bp\bigr)\bigr] \\
	&\geq& \Exp\bigl[\mathscr{D}\bigl(\check{\Bp}_n',\Bp\bigr) \big| \bigl(\check{\Bq}_n,\check{\Bp}_n'\bigr) \in \mathbb{P}^2 \bigr] \Prob\bigl\{ \bigl(\check{\Bq}_n,\check{\Bp}_n'\bigr) \in \mathbb{P}^2 \bigr\}
	\IEEEeqnarraynumspace\IEEEyesnumber\label{degraded_LB}
\end{IEEEeqnarray*}
wherein we have exploited the fact that the event $\check{\Bp}_n' \in \mathbb{P}$ implies $\check{\Bp}_n' = \hat{\Bp}_n'$ by idempotence of the projection.
We can bound the probability factor on the right-hand side of~\eqref{degraded_LB} as
\begin{IEEEeqnarray*}{rCl}
	\Prob\bigl\{ \bigl(\check{\Bq}_n,\check{\Bp}_n'\bigr) \in \mathbb{P}^2 \bigr\}
	&=& \Prob\{ \check{\Bp}_n' \in \mathbb{P} \} - \Prob\{ \check{\Bp}_n' \in \mathbb{P} \bigm| \check{\Bq}_n \notin \mathbb{P} \} \Prob\{ \check{\Bp}_n \notin \mathbb{P} \} \\
	&\geq& \Prob\{ \check{\Bp}_n' \in \mathbb{P} \} - \Prob\{ \check{\Bq}_n \notin \mathbb{P} \} \\
	&\geq& 1 - e^{-n \mathscr{D}(\partial\mathbb{P}\BW\BW' \Vert \Bp\BW\BW')} - e^{-n \mathscr{D}(\partial\mathbb{P}\BW' \Vert \Bp\BW\BW')} \\
	&\geq& 1 - 2 e^{-n M}   \IEEEeqnarraynumspace\IEEEyesnumber\label{no_projection}
\end{IEEEeqnarray*}
by twice applying Lemma~\ref{lem:concentration_inequality}. Here, $M$ denotes the constant
where
\begin{equation}
	M
	= \min\bigl\{ \mathscr{D}(\partial\mathbb{P}\BW\BW' \Vert \Bp\BW\BW') , \mathscr{D}(\partial\mathbb{P}\BW' \Vert \Bp\BW\BW') \bigr\}.
\end{equation}
In order to bound the other factor in~\eqref{degraded_LB}, let us first define $\Btau_n$ and $\Btau_n'$ as being the respective types $\Bt(\By_n)$ and $\Bt(\By_n')$, conditionally on the event
\begin{equation}
	\bigl\{ \bigl(\check{\Bq}_n,\check{\Bp}_n'\bigr) \in \mathbb{P}^2 \bigr\}
	= \{ \Bt(\By_n)\BW^{-1} \in \mathbb{P}, \Bt(\By_n')(\BW\BW')^{-1} \in \mathbb{P} \}.
\end{equation}
The pair $(\Btau_n,\Btau_n')$ thus has a probability mass function
\begin{multline}
	\Prob\{ (\Btau_n,\Btau_n') = (\tilde{\Btau}_n,\tilde{\Btau}_n') \}
	= \Prob\{ (\Bt(\By_n),\Bt(\By_n')) = (\tilde{\Btau}_n,\tilde{\Btau}_n') \} \\
	\times \frac{\mathds{1}\bigl\{ \tilde{\Btau}_n'(\BW')^{-1} \in \mathbb{P}, \tilde{\Btau}_n'(\BW\BW')^{-1} \in \mathbb{P} \bigr\}}{\Prob\bigl\{ \Bt(\By_n')(\BW')^{-1} \in \mathbb{P}, \Bt(\By_n')(\BW\BW')^{-1} \in \mathbb{P} \bigr\}}
\end{multline}
where $\mathds{1}\{\cdot\}$ stands for the indicator function.
Hence, the first factor on the right-hand side of~\eqref{degraded_LB} can be lower-bounded as follows:
\begin{IEEEeqnarray*}{rCl}
	\Exp\bigl[\mathscr{D}\bigl(\check{\Bp}_n',\Bp\bigr) \big| \bigl(\check{\Bq}_n,\check{\Bp}_n'\bigr) \in \mathbb{P}^2 \bigr]
	&=& \Exp\bigl[ \mathscr{D}\bigl(\Btau_n'(\BW\BW')^{-1} , \Bp\bigr) \bigr] \\
	&=& \Exp\bigl[ \Exp\bigl[ \mathscr{D}\bigl(\Btau_n'(\BW\BW')^{-1} , \Bp\bigr) \big| \Btau_n \bigr] \bigr] \\
	&>& \Exp\bigl[ \mathscr{D}\bigl( \Exp[\Btau_n'\big|\Btau_n] (\BW\BW')^{-1} , \Bp\bigr) \bigr].
	\IEEEeqnarraynumspace\IEEEyesnumber\label{DPI_Jensen_lower_bound}
\end{IEEEeqnarray*}
The bounding step is due to the convexity of $\mathscr{D}(\cdot,\cdot)$ in the first argument (Jensen's inequality). The inequality is strict because the metrics $\mathscr{D}(\cdot,\cdot)$ are strictly convex in the first argument by assumption, and because $\Btau_n'$ conditioned on any $\Btau_n$ is non-deterministic.

Next, we show that $\Exp[\Btau_n'|\Btau_n](\BW')^{-1}$ is exponentially close to $\Btau_n$ as $n \to \infty$. By the law of total probability, 
\begin{IEEEeqnarray*}{rCl}
	\tilde{\Btau}_n \BW'
	&=& \Exp\bigl[ \Bt(\By_n') \big| \Bt(\By_n) = \tilde{\Btau}_n \bigr] \\
	&=& \Exp\bigl[ \Bt(\By_n') \big| \Bt(\By_n) = \tilde{\Btau}_n, \check{\Bq}_n \in \mathbb{P} \bigr] \Prob\bigl\{ \check{\Bq}_n \in \mathbb{P} \bigr\} + \Exp[\Bt(\By_n') | \Bt(\By_n) = \tilde{\Btau}_n, \check{\Bq}_n \notin \mathbb{P} ] \Prob\{ \check{\Bq}_n \notin \mathbb{P} \}.
\end{IEEEeqnarray*}
Note that the first expectation in the last line is equal to
\begin{equation}
	\Exp\bigl[ \Bt(\By_n') \big| \Bt(\By_n) = \tilde{\Btau}_n, \check{\Bq}_n \in \mathbb{P} \bigr]
	= \Exp[\Btau_n'|\Btau_n=\tilde{\Btau}_n]
\end{equation}
by definition of $(\Btau_n,\Btau_n')$. Since by Lemma~\eqref{lem:concentration_inequality}, we have
\begin{IEEEeqnarray*}{rCl}
	1 - \Prob\{ \check{\Bq}_n \in \mathbb{P} \}
	&=& \Prob\{ \check{\Bq}_n \notin \mathbb{P} \} \\
	&\leq& e^{-n \mathscr{D}(\partial\mathbb{P}\BW' \Vert \Bp\BW\BW')}
\end{IEEEeqnarray*}
it is straightforward to bound the difference between $\tilde{\Btau}_n \BW'$ and $\Exp[\Btau_n'|\Btau_n=\tilde{\Btau}_n]$ entrywise from below and from above as
\begin{subequations}
\begin{IEEEeqnarray*}{rCl}
	\tilde{\Btau}_n \BW' - \Exp[\Btau_n'|\Btau_n=\tilde{\Btau}_n]
	&\geq& - \Exp[\Btau_n'|\Btau_n=\tilde{\Btau}_n] e^{-n \mathscr{D}(\partial\mathbb{P}\BW' \Vert \Bp\BW\BW')} \\
	&\geq& - \myone e^{-n \mathscr{D}(\partial\mathbb{P}\BW' \Vert \Bp\BW\BW')}   \IEEEeqnarraynumspace\IEEEyesnumber\label{type_diff_LB} \\
	\tilde{\Btau}_n \BW' - \Exp[\Btau_n'|\Btau_n=\tilde{\Btau}_n]
	&\leq& \Exp[\Bt(\By_n') | \Bt(\By_n) = \tilde{\Btau}_n, \check{\Bq}_n \notin \mathbb{P} ] e^{-n \mathscr{D}(\partial\mathbb{P}\BW' \Vert \Bp\BW\BW')} \\
	&\leq& \myone e^{-n \mathscr{D}(\partial\mathbb{P}\BW' \Vert \Bp\BW\BW')}.   \IEEEeqnarraynumspace\IEEEyesnumber\label{type_diff_UB}
\end{IEEEeqnarray*}
\end{subequations}
The inequalities in~\eqref{type_diff_LB}--\eqref{type_diff_UB} hold entrywise in the sense that they stand for $K$ lines of simultaneously holding inequalities.
Combining~\eqref{type_diff_LB} and \eqref{type_diff_UB}, we can bound the Euclidean distance
\begin{equation}
	\bigl\lVert \tilde{\Btau}_n \BW' - \Exp[\Btau_n'|\Btau_n=\tilde{\Btau}_n] \bigr\rVert_2
	\leq \sqrt{K} e^{-n \mathscr{D}(\partial\mathbb{P}\BW' \Vert \Bp\BW\BW')}.
\end{equation}
Taking into account that the metric $\mathscr{D}(\cdot,\cdot)$ is continuous in the first argument, the mapping
\begin{equation}
	\Btau \mapsto \mathscr{D}(\Btau (\BW\BW')^{-1} \Vert \Bp)
	\quad \text{($\Btau \in \mathbb{P}$)}
\end{equation}
is continuous. Being defined on a compact set (the probability simplex), this mapping is also uniformly continuous. Hence, for any $\epsilon > 0$ there exists an $N_\epsilon$ such that for all $n \geq N_\epsilon$, 
\begin{equation}   \label{uniform_continuity}
	\Exp\bigl[ \mathscr{D}\bigl( \Exp[\Btau_n'\big|\Btau_n] (\BW\BW')^{-1} , \Bp\bigr) \bigr]
	\geq \Exp\bigl[ \mathscr{D}\bigl( \Btau_n\BW^{-1} , \Bp\bigr) \bigr] - \epsilon.
\end{equation}
On the other hand,
\begin{IEEEeqnarray*}{rCl}
	\mathscr{L}^{(n)}(\Bp,\BW)
	&=& \Exp\bigl[\mathscr{D}\bigl(\hat{\Bp}_n,\Bp\bigr)\bigr] \\
	&=& \Exp\bigl[\mathscr{D}\bigl(\check{\Bp}_n,\Bp\bigr) \big| \bigl(\check{\Bp}_n,\check{\Bp}_n'\bigr) \in \mathbb{P}^2 \bigr] \Prob\bigl\{ \bigl(\check{\Bp}_n,\check{\Bp}_n'\bigr) \in \mathbb{P}^2 \bigr\} \\
	&& \quad { } + \Exp\bigl[\mathscr{D}\bigl(\hat{\Bp}_n,\Bp\bigr) \big| \bigl(\check{\Bp}_n,\check{\Bp}_n'\bigr) \notin \mathbb{P}^2 \bigr] \Prob\bigl\{ \bigl(\check{\Bp}_n,\check{\Bp}_n'\bigr) \notin \mathbb{P}^2 \bigr\} \\
	&\leq& \Exp\bigl[\mathscr{D}\bigl(\check{\Bp}_n,\Bp\bigr) \big| \bigl(\check{\Bp}_n,\check{\Bp}_n'\bigr) \in \mathbb{P}^2 \bigr] \\
	&& \quad { } + \bar{\mathscr{D}}(\Bp) \bigl(e^{-n \mathscr{D}(\partial\mathbb{P}\BW \Vert \Bp\BW)} + e^{-n \mathscr{D}(\partial\mathbb{P}\BW\BW' \Vert \Bp\BW\BW')}\bigr) \\
	&\leq& \Exp\bigl[\mathscr{D}\bigl(\check{\Bp}_n,\Bp\bigr) \big| \bigl(\check{\Bp}_n,\check{\Bp}_n'\bigr) \in \mathbb{P}^2 \bigr] \\
	&& \quad { } + 2 \bar{\mathscr{D}}(\Bp) e^{-n \mathscr{D}(\partial\mathbb{P}\BW\BW' \Vert \Bp\BW\BW')} \\
	&=& \Exp\bigl[\mathscr{D}\bigl(\Btau_n\BW^{-1},\Bp\bigr) \bigr] + 2 \bar{\mathscr{D}}(\Bp) e^{-n \mathscr{D}(\partial\mathbb{P}\BW\BW' \Vert \Bp\BW\BW')} \\
	&\leq& \Exp\bigl[\mathscr{D}\bigl(\Btau_n\BW^{-1},\Bp\bigr) \bigr] + 2 \bar{\mathscr{D}}(\Bp) e^{-n M}
	\IEEEeqnarraynumspace\IEEEyesnumber\label{UB}
\end{IEEEeqnarray*}
where the first bounding step follows from the union of events, and from twice applying Lemma~\ref{lem:concentration_inequality}, while in the second bounding step we have used the data processing inequality $\mathscr{D}(\cdot \BW\BW' \Vert \Bp\BW\BW') \leq \mathscr{D}(\cdot \BW \Vert \Bp\BW)$.
Combining all inequalities~\eqref{degraded_LB}, \eqref{no_projection}, \eqref{DPI_Jensen_lower_bound}, \eqref{uniform_continuity} and~\eqref{UB}, we obtain
\begin{equation}
	\mathscr{L}^{(n)}(\Bp,\BW\BW')
	> \bigl( \mathscr{L}^{(n)}(\Bp,\BW) - 2 \bar{\mathscr{D}}(\Bp) e^{-n M} - \epsilon \bigr) ( 1 - 2 e^{-n M} )
\end{equation}
for arbitrarily small $\epsilon>0$ and sufficiently large $n$.
It follows that $\mathscr{L}^{(n)}(\Bp,\BW\BW') > \mathscr{L}^{(n)}(\Bp,\BW)$ for sufficiently large $n$, which concludes the proof.

\section{Proof of Theorem~\ref{thm:DPI}}   \label{app:proof:DPI}

Using the identity \eqref{alpha_KL_in_full}, the data-processing inequality \eqref{alpha_DIV_DPI} can be written as
\begin{equation}   \label{alpha_DIV_DPI_in_full}
	\sum_{(k,k') \in [K]^2} \frac{p_k}{p_{k'}} (\Phi_{k,k'}(\BW\BW')-\Phi_{k,k'}(\BW))
	\geq 0.
\end{equation}
Since this inequality holds for any probability vector $\Bp$, it holds in particular for the probability vector taking value $\varepsilon^2$ at the $i$-th coordinate, value $1-(K-2)\varepsilon-\varepsilon^2$ at the $j$-th coordinate, and value $\varepsilon$ on all other coordinates, where $\varepsilon > 0$ is assumed to be sufficiently small to ensure $1-(K-2)\varepsilon-\varepsilon^2 \geq 0$. Expanding the sum on the left-hand side of \eqref{alpha_DIV_DPI_in_full} for this probability vector, while abbreviating the matrix $\BPhi(\BW\BW')-\BPhi(\BW)$ as $\BDelta\BPhi$ for notational concision, we obtain the inequality
\begin{IEEEeqnarray*}{rCl}
	\IEEEeqnarraymulticol{3}{l}{
	\frac{\varepsilon^2}{1-(K-2)\varepsilon-\varepsilon^2} \Delta\Phi_{i,j} + \frac{1-(K-2)\varepsilon-\varepsilon^2}{\varepsilon^2} \Delta\Phi_{j,i}
	} \\ \quad
	&& { } + \varepsilon \sum_{k' \in [K]\setminus\{i,j\}} \Delta\Phi_{i,k'} + \frac{1}{\varepsilon} \sum_{k \in [K]\setminus\{i,j\}} \Delta\Phi_{k,i} \\
	&& { } + \frac{\varepsilon}{1-(K-2)\varepsilon-\varepsilon^2} \sum_{k \in [K]\setminus\{i,j\}} \Delta\Phi_{k,j} \\
	&& { } + \frac{1-(K-2)\varepsilon-\varepsilon^2}{\varepsilon} \sum_{k' \in [K]\setminus\{i,j\}} \Delta\Phi_{j,k'} \\
	&& { } + \sum_{(k,k') \in ([K]\setminus\{i,j\})^2 \cup \{i,i\} \cup \{j,j\}} \Delta\Phi_{k,k'} \geq 0.
	\IEEEeqnarraynumspace\IEEEyesnumber
\end{IEEEeqnarray*}
Multiplying both sides of this inequality by $\varepsilon^2$ and taking the limit as $\varepsilon \to 0$, we obtain the desired result $\Delta\Phi_{j,i} \geq 0$ which concludes the proof.

\section{Proof of Theorem~\ref{thm:optimal_mechanism}}   \label{app:proof:optimal_mechanism}

Let $\Bw = [w_1, \dotsc, w_K]$ denote the first row of a circulant mechanism, as defined in \eqref{circulant_mechanism}, which we assume to be full-rank. Owing to its circulant structure, $\BW$ has an eigendecomposition $\BW = \BF \BLambda \BF^\He$ with a symmetric Fourier eigenbasis $[\BF]_{i,j} = \frac{1}{\sqrt{K}}\xi_K^{(i-1)(j-1)}$ where $\xi_K = e^{-2\pi\jmath/K}$, and a diagonal matrix of eigenvalues $\BLambda = \diag(\Blambda)$. The vector $\Blambda = [\lambda_1,\dotsc,\lambda_K]$ of eigenvalues is the discrete Fourier transform of $\Bw$, i.e.,
\begin{equation}
	\Blambda
	= \Bw\sqrt{K}\BF.
\end{equation}
Note that the row-stochasticity of $\BW$ implies $\lambda_1 = 1$. Furthermore, since $\BW$ is real-valued, the remaining eigenvalues $(\lambda_2,\dotsc,\lambda_K)$ are skew-symmetric, in the sense that $\lambda_k^* = \lambda_{K-k+2}$.
Let $\Blambda \ast \Blambda$ denote the cyclic self-convolution of $\Blambda$, i.e.,
\begin{equation}
	[\Blambda \ast \Blambda]_k
	= \sum_{k'=1}^K \lambda_{k'} \bar{\lambda}_{k+1-k'}
\end{equation}
where $\bar{\lambda}$ denotes the $K$-periodic continuation of $(\lambda_1,\dotsc,\lambda_K)$. Then, by the (self-)convolution theorem of the discrete Fourier transform, for any transform pair $(\tilde{\Bw},\tilde{\Blambda})$, i.e., $\tilde{\Blambda} = \tilde{\Bw}\sqrt{K}\BF$, it holds that
\begin{equation}   \label{self_convolution}
	\frac{1}{K} \tilde{\Blambda} \ast \tilde{\Blambda}
	= (\tilde{\Bw} \odot \tilde{\Bw})\sqrt{K}\BF.
\end{equation}
Hence, for circulant $\BW = \BF\BLambda\BF^\He$, the quantity $\varphi(\BW)$ can be evaluated as follows: 
\begin{IEEEeqnarray*}{rCl}
	\varphi(\BW)
	&=& \myone\BF\BLambda\BF^\He(\BF\BLambda^{-1}\BF^\He \odot \BF\BLambda^{-1}\BF^\He)\myone^\Tr \\
	&\stackrel{\text{(a)}}{=}& \frac{1}{K} \myone\BF\BLambda\diag(\Blambda^{-1} \ast \Blambda^{-1})\BF^\He\myone^\Tr \\
	&\stackrel{\text{(b)}}{=}& \Be_1\BLambda\diag(\Blambda^{-1} \ast \Blambda^{-1})\Be_1^\Tr \\
	&\stackrel{\text{(c)}}{=}&  \Be_1\diag(\Blambda^{-1} \ast \Blambda^{-1})\Be_1^\Tr \\
	&=& \sum_{k=1}^K \frac{1}{\lambda_k \bar{\lambda}_{2-k}} \\
	&=& 1 + \sum_{k=2}^K \frac{1}{\lambda_k \lambda_{K-k+2}} \\
	&=& 1 + \sum_{k=2}^K \frac{1}{|\lambda_k|^2}.   \IEEEyesnumber\label{sum_Phi_as_function_of_eigenvalues}
\end{IEEEeqnarray*}
Here, $\text{(a)}$ results from the self-convolution identity~\eqref{self_convolution}; step $\text{(b)}$ is due to $\BF^\He\BF = \myid$ and $\myone\BF = \sqrt{K}\Be_1$; step $\text{(c)}$ follows from $\Be_1\BLambda = \Be_1$. By the Plancherel Theorem,
\begin{equation}
	1 + \sum_{k=2}^K |\lambda_k|^2
	= K \sum_{k=1}^K w_k^2
\end{equation}
and by the harmonic-arithmetic-mean inequality, we can lower-bound the quantity~\eqref{sum_Phi_as_function_of_eigenvalues} as
\begin{equation}   \label{harmonic_arithmetic_mean_inequality}
	\varphi(\BW)
	\geq 1 + \frac{K^2}{-1 + K \sum_{k=1}^K w_k^2}.
\end{equation}
Note that the denominator in the last expression is positive, hence minimizing the fraction (subject to an $\epsilon$-privacy constraint) amounts to maximizing the sum of squares $\sum_{k=1}^K w_k^2$, which by Appendix~\ref{app:lem:sum_of_square}, Lemma~\ref{lem:sum_of_squares}, is achieved (up to permutation) by a vector
\begin{equation}   \label{optimal_profile}
	\Bw_\star
	= \frac{1}{e^\epsilon + K - 1}
	\begin{bmatrix}
		e^\epsilon & 1 & 1 & \hdots & 1
	\end{bmatrix}.
\end{equation}
It now suffices to show that the harmonic-arithmetic-mean inequality~\eqref{harmonic_arithmetic_mean_inequality} is indeed satisfied with equality for a circulant mechanism generated by $\Bw_\star$. Said inequality is tight for $|\lambda_2| = \dotso = |\lambda_K|$, and the choice~\eqref{optimal_profile} yields eigenvalues
\begin{IEEEeqnarray*}{rCl}
	\lambda_k
	&=& \sum_{\ell=1}^K w_\ell \xi_K^{(k-1)(\ell-1)} \\
	&=& \frac{1}{e^\epsilon+K-1} \left( e^\epsilon + \sum_{\ell=2}^K \xi_K^{(k-1)(\ell-1)} \right) \\
	&=& \frac{e^\epsilon-1}{e^\epsilon+K-1},
	\quad (k=2,\dotsc,K)   \IEEEyesnumber\label{optimal_lambda_k}
\end{IEEEeqnarray*}
since for $k=2,\dotsc,K$, we have
\begin{equation}
	\sum_{\ell=1}^K \xi_K^{(k-1)(\ell-1)}
	= 0.
\end{equation}
Given that the right-hand side of \eqref{optimal_lambda_k} does not depend on $k$, we have indeed $|\lambda_2| = \dotso = |\lambda_K|$, which implies that the harmonic-arithmetic-mean inequality is tight, and thus concludes the proof.

\section{Proof of Lemma~\ref{lem:bivariate_convexity}}   \label{app:proof:bivariate_convexity}

It suffices to prove that the function $a \colon \Bp \mapsto \Bp\BPhi(\BW)\Bp^{-\Tr}$ satisfies that the restriction
\begin{equation}
	[0;1] \to \mathbb{R}_+, \quad \lambda \mapsto a(\lambda\Bp + (1-\lambda)\Bp\BPi_{[i,j]},\BW)
\end{equation}
is convex for any $\Bp$, $\{i,j\}$ and $\BW$.
Singling out two arbitrary variables $p_i$ and $p_j$ out of the coordinates of $\Bp$, we can write $a(\Bp)$ in the following way:
\begin{IEEEeqnarray*}{rCl}
	a(\Bp)
	&=& \sum_{(k,\ell) \in [K]^2} \frac{p_k}{p_\ell} \Phi_{k,\ell}(\BW) \\
	&=& \frac{p_i}{p_j} \Phi_{i,j}(\BW) + \frac{p_j}{p_i} \Phi_{j,i}(\BW) \\
	&& { } + \sum_{\ell \in [K]\setminus\{i,j\}} \frac{p_i}{p_\ell} \Phi_{i,\ell}(\BW)
	+ \sum_{k \in [K]\setminus\{i,j\}} \frac{p_k}{p_j} \Phi_{k,j}(\BW) \\
	&& { } + \sum_{\ell \in [K]\setminus\{i,j\}} \frac{p_j}{p_\ell} \Phi_{j,\ell}(\BW)
	+ \sum_{k \in [K]\setminus\{i,j\}} \frac{p_k}{p_i} \Phi_{k,i}(\BW) \\
	&& { } + \sum_{(k,\ell) \in ([K]\setminus\{i,j\})^2} \frac{p_k}{p_\ell} \Phi_{k,\ell}(\BW).
	\IEEEeqnarraynumspace\IEEEyesnumber\label{alpha_KL_expanded}
\end{IEEEeqnarray*}
Then, $a(\lambda\Bp + (1-\lambda)\Bp\BPi_{[i,j]},\BW)$ can be written out similarly, by replacing all occurrences of $p_i$ and $p_j$ on the right-hand side of~\eqref{alpha_KL_expanded} with $\lambda p_i + (1-\lambda) p_j$ and $\lambda p_j + (1-\lambda) p_i$, respectively.
It is then easy to see that $a(\lambda\Bp + (1-\lambda)\Bp\BPi_{[i,j]},\BW)$ is convex in $\lambda$, since all entries of $\BPhi(\BW)$ are non-negative. In fact, every summand on the right-hand side of~\eqref{alpha_KL_expanded} is (weakly or strongly) convex in $\lambda$.

\section{Auxiliary lemma}   \label{app:lem:sum_of_square}

\begin{lemma}   \label{lem:sum_of_squares}
The solution to the maximization problem
\begin{equation}   \label{x_optimization}
	\Bx_\star
	= \argmax_{\mathclap{\substack{\Bx \in \mathbb{R}_+^K \colon \\ \lVert \Bx \rVert_1 = 1 \\ \forall k,k' \colon x_{k}/x_{k'} \leq e^{\epsilon} }}} \ \lVert \Bx \rVert_2^2
\end{equation}
is given (up to an arbitrary permutation) by
\begin{equation}
	\Bx_\star
	= \frac{1}{e^\epsilon+K-1} \begin{bmatrix}
		e^\epsilon & 1 & \dotsc & 1
	\end{bmatrix}.
\end{equation}
\end{lemma}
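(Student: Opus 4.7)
The objective $\lVert\Bx\rVert_2^2$ is strictly convex while the feasible set $\mathcal{F}$ is a compact convex polytope (the intersection of the probability simplex with the ratio-constraint polyhedron $\{\Bx \colon x_k \leq e^\epsilon x_{k'}\ \forall k, k'\}$). Since a strictly convex function on a compact convex set attains its maximum at an extreme point, my plan is to enumerate the vertices of $\mathcal{F}$ and evaluate $\lVert\Bx\rVert_2^2$ at each.

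To characterize the vertices, I would first observe that the condition $x_k \leq e^\epsilon \min_\ell x_\ell$ together with $\sum_k x_k = 1$ forces $\min_\ell x_\ell > 0$, so the non-negativity constraints are never active. A vertex must therefore arise from $K-1$ linearly independent active ratio equalities $x_k = e^\epsilon x_{k'}$. Now, an active ratio can only occur between a globally maximal coordinate $v = \max_k x_k$ and a globally minimal coordinate $u = \min_k x_k$: any intermediate value $w$ would satisfy $v/w < v/u \leq e^\epsilon$ and $w/u < v/u \leq e^\epsilon$, both strictly. Consequently, at any vertex the coordinates take at most two distinct values, $u$ and $v = e^\epsilon u$, and the vertex is labeled by an integer $m \in \{1, \dotsc, K-1\}$ counting the "high" coordinates. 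Solving $m v + (K-m) u = 1$ gives $u = 1/(me^\epsilon + K-m)$, and the squared-norm value at such a vertex is
\begin{equation*}
    f(m) = \frac{m e^{2\epsilon} + K - m}{(m e^\epsilon + K - m)^2}.
\end{equation*}

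It then remains to show $f$ is maximized at $m = 1$, matching the claimed optimizer $\Bx_\star$. My plan is to relax $m$ to a continuous variable and compute $f'(m)$; writing $a = e^\epsilon - 1$ and $b = e^{2\epsilon} - 1 = a(a+2)$, a short calculation gives $f'(m) = a(Ka - bm)/(am + K)^3$, which isolates the unique interior critical point $m^\star = K/(e^\epsilon + 1)$. The main obstacle will be establishing that $f(1) \geq f(m)$ for every integer $m \in \{2, \dotsc, K-1\}$: in the regime $m^\star < 1$ (equivalently $e^\epsilon > K - 1$) one immediately has $f$ decreasing on $[1, K-1]$ and is done, but in general one needs a more delicate discrete-difference argument on $f(m+1) - f(m)$ exploiting the factored form of $f'$. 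This comparison between the integer maximum and the continuous critical point is where the combinatorial structure of the problem really bites, and is the step on which the whole argument hinges.
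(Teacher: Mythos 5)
Your vertex enumeration is essentially the paper's own argument in Appendix~\ref{app:lem:sum_of_square}, reparametrized: the paper shows via quasi-convexity of each $\lambda_k \mapsto f(e^{\lambda_k\epsilon})$ that at the optimum every coordinate equals $1$ or $e^\epsilon$, then defines $\zeta(\kappa')$ with $\kappa'$ the number of low entries, which equals your $f(K-\kappa')$, and both routes isolate the same unimodal function with the same interior critical point $m^\star = K/(e^\epsilon+1)$.

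The step you flag as "where the whole argument hinges"---establishing $f(1)\geq f(m)$ for every integer $m\in\{2,\dotsc,K-1\}$---cannot be filled in, because the inequality is false. Since $f$ increases on $[1,m^\star]$ and decreases afterward, as soon as $m^\star$ sits appreciably above $1$ an interior vertex beats $m=1$. Concretely, take $K=10$, $e^\epsilon=2$, so $m^\star = 10/3$: the two-level vector $\Bx = \tfrac{1}{13}[2,2,2,1,\dotsc,1]$ is feasible and has $\lVert\Bx\rVert_2^2 = f(3) = 19/169 \approx 0.1124$, strictly larger than $f(1) = 13/121 \approx 0.1074$ attained by the claimed $\Bx_\star$; already $K=4$, $e^\epsilon=1.5$ gives $f(2)=0.26 > f(1)\approx 0.2593$. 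The paper's proof trips on the very same rock: after correctly establishing that $\zeta$ is quasi-\emph{concave} on $[0;K]$ with an interior maximum, it asserts that the discrete maximum over $\{1,\dotsc,K-1\}$ is attained at $\kappa'=1$ or $\kappa'=K-1$, a conclusion that would require quasi-\emph{convexity}; the endpoint comparison $\zeta(K-1)\geq\zeta(1)$ it then carries out is true but does not dominate the interior values. As stated, Lemma~\ref{lem:sum_of_squares} is guaranteed only when $m^\star\leq 1$ (i.e., $e^\epsilon\geq K-1$) or for $K\leq 3$, where $\{1,K-1\}$ already exhausts the vertex labels; for larger $K$ and smaller $\epsilon$ the true maximizer is a balanced two-level vector with roughly $m^\star$ high coordinates, not the step profile. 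So the gap you isolate is real, and no discrete-difference argument will close it without changing the statement of the lemma.
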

\begin{proof}
Let us first ascertain that the optimization domain is convex. Consider any two vectors $\Bx^{(1)}$ and $\Bx^{(2)}$ that belong to said domain. Any convex combination $\lambda \Bx^{(1)} + (1-\lambda) \Bx^{(2)}$ will also belong to this domain, because
\begin{equation}
	\frac{\lambda x_k^{(1)} + (1-\lambda) x_k^{(2)}}{\lambda x_{k'}^{(1)} + (1-\lambda) x_{k'}^{(2)}}
	\leq \max \left\{ \frac{x_k^{(1)}}{x_{k'}^{(1)}} , \frac{x_k^{(2)}}{x_{k'}^{(2)}} \right\}
	\leq e^\epsilon.
\end{equation}
Since problem \eqref{x_optimization} is the maximization of a convex objective over a convex optimization domain, we infer that the maximizer lies on the domain boundary, that is, the privacy constraint is satisfied with equality. This means that we can restrict the privacy constraint to retain only those vectors satisfying
\begin{equation}
	\max_{k,k'} \frac{x_k}{x_{k'}} = e^\epsilon.
\end{equation}
Given this equality constraint and the symmetry (permutation invariance) of the objective function and optimization constraints, we can now conveniently parametrize the optimization domain as follows, without loss of generality:
\begin{equation}   \label{parametrized_X}
	\Bx
	= \frac{\begin{bmatrix} e^\epsilon & 1 & e^{\lambda_3\epsilon} & \dotsc & e^{\lambda_K\epsilon} \end{bmatrix}}{e^{\epsilon} + 1 + e^{\lambda_3\epsilon} + \dotsc + e^{\lambda_K\epsilon}}.
\end{equation}
where $\lambda_3,\dotsc,\lambda_K \in [0;1]^{K-2}$ are the parameters left to optimize. Let us consider any single one of them, with index $k \in \{3,\dotsc,K\}$, and study the maximum of the squared Euclidean norm of \eqref{parametrized_X} as a function of $\lambda_k$, which we define as a function\footnote{The function $f$ depends on the other parameters $\lambda_{k'}$, but we omit this in notation}
\begin{IEEEeqnarray*}{rCl}
	f(e^{\lambda_k\epsilon})
	&\triangleq& \frac{e^{2\epsilon} + 1 + e^{2\lambda_3\epsilon} + \dotsc + e^{2\lambda_K\epsilon}}{\left(e^{\epsilon} + 1 + e^{\lambda_3\epsilon} + \dotsc + e^{\lambda_K\epsilon}\right)^2} \\
	&=& \frac{A + e^{2\lambda_k\epsilon}}{\left(B + e^{\lambda_k\epsilon}\right)^2}   \IEEEyesnumber
\end{IEEEeqnarray*}
where the constants $A$ and $B$ are defined as
\begin{subequations}
\begin{IEEEeqnarray}{rCl}
	A &\triangleq& e^{2\epsilon} + 1 + \sum_{\substack{k'=3\\k'\neq k}}^K e^{2\lambda_{k'}\epsilon} \\
	B &\triangleq& e^{\epsilon} + 1 + \sum_{\substack{k'=3\\k'\neq k}}^K e^{\lambda_{k'}\epsilon}
\end{IEEEeqnarray}
\end{subequations}
and have a ratio $A/B$ upper and lower-bounded as
\begin{equation}
	1
	\leq \frac{A}{B}
	\leq e^{\epsilon}.
\end{equation}
Note that we also have $A \leq B^2$.
The function $f$ is differentiable on $\mathbb{R}^+$ and has a derivative
\begin{equation}
	f'(x)
	= \frac{\diffd}{\diffd x} \left\{ \frac{A + x^2}{(B + x)^2} \right\}
	= \frac{2(Bx-A)}{(B+x)^3}
\end{equation}
which is negative for $x < A/B$ and positive for $x > A/B$. It follows that $f$ is quasi-convex on $\mathbb{R}_+$, and thus in particular on $[1;e^{\epsilon}]$, so its maximum is attained at either one of the boundary points, i.e.,
\begin{equation}
	\max_{1 \leq x \leq e^{\epsilon}} f(x) = \max\{f(1),f(e^\epsilon)\}.
\end{equation}
It follows that the maximizer $\Bx_\star$ of \eqref{x_optimization} can be represented in the parametric form \eqref{parametrized_X} in which some number $\kappa$ of parameters $\lambda_k$ are set to zero, while the remaining $K-2-\kappa$ parameters are set to one. That is, up to a permutation, the optimal $\Bx_\star$ has the form
\begin{equation}
	\Bx_\star(\kappa)
	= \frac{\begin{bmatrix} e^\epsilon & e^\epsilon & \dotsc & e^\epsilon & 1 & 1 & \dotsc & 1 \end{bmatrix}}{(K-\kappa-1) e^{\epsilon} + \kappa + 1}
\end{equation}
where the vector in the numerator contains $K-\kappa-1$ entries equal to $e^\epsilon$ and $\kappa+1$ entries equal to one, and the optimal value of $\kappa \in \{0,\dotsc,K-2\}$ is yet to be determined. Hence, the optimum of~\eqref{x_optimization} is given by
\begin{IEEEeqnarray*}{rCl}
	\quad
	\smash[b]{\max_{\mathclap{\substack{\Bx \in \mathbb{R}_+^K \colon \\ \lVert \Bx \rVert_1 = 1 \\ \forall k,k' \colon \frac{x_k}{x_{k'}} \leq e^{\epsilon}}}}} \ \lVert \Bx \rVert_2^2
	&=& \max_{\kappa \in \{0,\dotsc,K-2\}} \lVert \Bx_\star(\kappa) \rVert_2^2 \\
	&=& \max_{\kappa \in \{0,\dotsc,K-2\}} \frac{\kappa + 1 + (K-\kappa-1) e^{2\epsilon}}{(\kappa + 1 + (K-\kappa-1) e^{\epsilon})^2} \\
	&=& \max_{\kappa' \in \{1,\dotsc,K-1\}} \zeta(\kappa').   \IEEEyesnumber\label{max_zeta}
\end{IEEEeqnarray*}
where the function $\zeta$ is defined as
\begin{equation}
	\zeta(\kappa')
	= \frac{\kappa' + (K-\kappa') e^{2\epsilon}}{(\kappa' + (K-\kappa') e^{\epsilon})^2}.
\end{equation}
By considering the continuous extension of $\zeta$ to the interval $[0;K]$ and studying the sign of its derivative
\begin{equation}
	\frac{\diffd \zeta}{\diffd \kappa'}
	= \frac{(e^\epsilon-1)^2 ((K-\kappa')e^\epsilon - \kappa')}{(\kappa' + (K-\kappa')e^\epsilon)^3}
\end{equation}
we conclude that $\zeta$ is quasi-concave on the interval $[0;K]$ with a maximum located at
\begin{equation}
	\frac{e^\epsilon K}{e^\epsilon + 1} \in (0;K).
\end{equation}
Consequently, the maximum~\eqref{max_zeta} is attained at either $\kappa'=1$ or $\kappa'=K-1$, i.e.,
\begin{IEEEeqnarray*}{rCl}
	\qquad
	\smash[b]{\max_{\mathclap{\substack{\Bx \in \mathbb{R}_+^K \colon \\ \lVert \Bx \rVert_1 = 1 \\ \forall k,k' \colon \frac{x_k}{x_{k'}} \leq e^{\epsilon}}}}} \; \lVert \Bx \rVert_2^2
	&=& \max\left\{ \zeta(1), \zeta(K-1) \right\} \\[2ex]
	&=& \max\left\{ \frac{1 + (K-1) e^{2\epsilon}}{(1 + (K-1) e^{\epsilon})^2}, \frac{K-1 + e^{2\epsilon}}{(K-1 + e^{\epsilon})^2} \right\}.
\end{IEEEeqnarray*}
To compute this maximum of two fractions, consider the cross-product of numerators and denominators, which can be factorized as follows:
\begin{multline}
	\left( K-1 + e^{2\epsilon} \right) \left( 1 + (K-1) e^{\epsilon} \right)^2
	{} - \left( 1 + (K-1) e^{2\epsilon} \right) \left( K-1 + e^{\epsilon} \right)^2 \\
	= (K-1)(K-2)(e^{2\epsilon}-1)(e^\epsilon-1)^2.
\end{multline}
From the right-hand side of the last equality, it appears that this expression is non-negative, hence we conclude that
\begin{IEEEeqnarray*}{rCl}
	\max_{\mathclap{\substack{\Bx \in \mathbb{R}_+^K \colon \\ \lVert \Bx \rVert_1 = 1 \\ \forall k,k' \colon \frac{x_k}{x_{k'}} \leq e^{\epsilon}}}} \ \lVert \Bx \rVert_2^2
	&=& \frac{K-1 + e^{2\epsilon}}{(K-1 + e^{\epsilon})^2}
	= \zeta(K-1)
\end{IEEEeqnarray*}
which, up to an arbitrary permutation, is attained by
\begin{equation}
	\Bx^\star
	= \Bx^\star(K-2)
	= \frac{\begin{bmatrix} e^\epsilon & 1 & 1 & \dotsc & 1 \end{bmatrix}}{e^{\epsilon}+K-1}.
\end{equation}
This concludes the proof.
\end{proof}

\section{Proof of Lemma~\ref{lem:Phi_weighted_sum}}   \label{app:proof:Phi_weighted_sum}

Let the columns of $\BW$ be denoted as (row vectors) $\Bw_1,\dotsc,\Bw_K$ and the rows of its inverse $\BW^{-1}$ be denoted as $\tilde{\Bw}_1,\dotsc,\tilde{\Bw}_K$. By definition of the inverse, we have
\begin{equation}   \label{orthogonality_relations}
	\bigl\langle \Bw_{k'} , \tilde{\Bw}_k \bigr\rangle
	=
	\begin{cases}
		0, & \text{if $k \neq k'$} \\
		1, & \text{if $k = k'$.}
	\end{cases}
\end{equation}
Let
\begin{equation}
	\cos(\Ba,\Bb) = \frac{\langle \Ba , \Bb \rangle}{\lVert \Ba \rVert_2 \lVert \Bb \rVert_2}
\end{equation}
denote the cosine of the angle enclosed by vectors $\Ba$ and $\Bb$. For any orthogonal basis $(\Bb_1,\dotsc,\Bb_K)$ of $\mathbb{R}^K$ and an arbitrary vector $\Ba$, we have the relationship
\begin{equation}
	\sum_{k=1}^K \cos(\Ba,\Bb_k)^2 = 1.
\end{equation}
For any $k \neq k'$, since $\Bw_{k'}$ and $\tilde{\Bw}_k$ are orthogonal by \eqref{orthogonality_relations}, it follows that
\begin{equation}   \label{cosine_inequality}
	\cos(\Bw_k,\Bw_{k'})^2 + \cos(\Bw_k,\tilde{\Bw}_k)^2 \leq 1.
\end{equation}
The first squared cosine in \eqref{cosine_inequality} can be lower-bounded as follows: Due to the privacy constraint $\BW \in \mathcal{W}_{\epsilon}$ and the row-stochasticity of $\BW$, the vectors $\Bw_k$ can be represented in parametric form as
\begin{equation}   \label{parametric_W}
	\Bw_k
	= \frac{\lVert \Bw_k \rVert_2}{\sqrt{\sum_{\ell=1}^K e^{2\epsilon\lambda_{\ell,k}}}}
	\begin{bmatrix}
		e^{\epsilon \lambda_{1,k}} &
		\hdots &
		e^{\epsilon \lambda_{K,k}}
	\end{bmatrix}
\end{equation}
where the coefficients $\lambda_{\ell,k}$ belong to the unit interval $[0;1]$. Any two distinct column vectors $\Bw_k$ and $\Bw_{k'}$ span an angle of cosine at least
\begin{IEEEeqnarray*}{rCl}
	\cos(\Bw_k,\Bw_{k'})
	&=& \frac{\langle \Bw_k,\Bw_{k'} \rangle}{\lVert \Bw_k \rVert_2 \lVert \Bw_{k'} \rVert_2} \\
	&=& \frac{\sum_{\ell=1}^K e^{\epsilon(\lambda_{\ell,k}+\lambda_{\ell,k'})}}{\sqrt{\sum_{\ell=1}^K e^{2\epsilon\lambda_{\ell,k}} \sum_{\ell'=1}^K e^{2\epsilon\lambda_{\ell',k'}}}} \\
	&\geq& e^{-2\epsilon}.   \IEEEeqnarraynumspace\IEEEyesnumber\label{cosine_LB}
\end{IEEEeqnarray*}
Combining \eqref{cosine_inequality} and \eqref{cosine_LB}, we obtain
\begin{IEEEeqnarray*}{rCl}
	\cos\bigl(\Bw_k , \tilde{\Bw}_k\bigr)^2
	&=& \frac{1}{\lVert \Bw_k \rVert_2^2 \lVert \tilde{\Bw}_k \rVert_2^2} \\
	&\leq& 1 - e^{-4\epsilon}.
\end{IEEEeqnarray*}
This inequality allows us to establish the following lower bound:
\begin{IEEEeqnarray*}{rCl}
	\sum_{(k,\ell) \in [K]^2} \Phi_{k,\ell}(\BW)
	&=& \sum_{k=1}^K \sum_{\ell=1}^K \sum_{m=1}^K W_{k,m} \tilde{W}_{m,\ell}^2 \\
	&=& \sum_{m=1}^K \sum_{k=1}^K W_{k,m} \sum_{\ell=1}^K \tilde{W}_{m,\ell}^2 \\
	&=& \sum_{m=1}^K \rVert \Bw_m \rVert_1 \lVert \tilde{\Bw}_m \rVert_2^2 \\
	&\geq& \frac{1}{1-e^{-4\epsilon}} \sum_{m=1}^K \frac{\rVert \Bw_m \rVert_1}{\lVert \Bw_m \rVert_2^2}.   \IEEEeqnarraynumspace\IEEEyesnumber\label{converse_bound_1}
\end{IEEEeqnarray*}
As a final step, when minimizing the right-hand side of \eqref{converse_bound_1} over privatization channels $\BW \in \mathcal{W}_\epsilon$, we will show in the following that the minimum is attained for a step-circulant mechanism
\begin{equation}
	\frac{1}{e^\epsilon + K - 1}
	\begin{bmatrix}
		e^\epsilon & 1 & \hdots & 1 \\
		1 & e^\epsilon & \ddots & \vdots \\
		\vdots & \ddots & \ddots & 1 \\
		1 & \hdots & 1 & e^\epsilon
	\end{bmatrix}.
\end{equation}
Hence,
\begin{equation}
	\min_{\BW \in \mathcal{W}_\epsilon} \sum_{m=1}^K \frac{\rVert \Bw_m \rVert_1}{\lVert \Bw_m \rVert_2^2}
	= K \frac{(e^{\epsilon} + K-1)^2}{e^{2\epsilon} + K-1}   \label{quod_est}
\end{equation}
which upon recombining with \eqref{converse_bound_1} yields the desired statement of Lemma~\ref{lem:Phi_weighted_sum}.

The statement \eqref{quod_est} will be proven in what follows. To begin with, consider the relaxation of problem \eqref{quod_est} that one obtains when minimizing over a superset $\overline{\mathcal{W}}_{\epsilon} \supset \mathcal{W}_\epsilon$ defined as
\begin{equation}
	\overline{\mathcal{W}}_\epsilon
	\triangleq
	\Bigl\{ \BW \in \mathbb{R}_+^{K \times K} \colon
	\textstyle\sum_{k,\ell} W_{k,\ell} = K \ \text{and} \ \forall i,j \colon W_{i,j,j'} \leq e^{\epsilon} W_{i,j'} \Bigr\}.
\end{equation}
In other words, we relax the row-stochasticity constraint while keeping the sum of all entries equal to $K$.
This relaxed problem can be rewritten as follows:
\begin{equation}   \label{min_problem_decomposed}
	\min_{\BW \in \overline{\mathcal{W}}_\epsilon} \sum_{m=1}^K \frac{\lVert \Bw_m \rVert_1}{\lVert \Bw_m \rVert_2^2}
	= \min_{\substack{(N_1,\dotsc,N_K) \in \mathbb{R}_+^K \colon \\ N_1 + \dotsc + N_K = K}} \sum_{m=1}^K \min_{\substack{\Bw_m \in \mathbb{R}_+^K \colon \\ \lVert \Bw_m \rVert_2 = N_m \\ \mathclap{ \forall k,k' \colon W_{k,m}/W_{k',m} \leq e^{\epsilon} }}} \frac{N_m}{\lVert \Bw_m \rVert_2^2}.
\end{equation}
We now need to solve the inner minimization problem (for any given $m$) on the right-hand side of \eqref{min_problem_decomposed}, whose minimizer can be equivalently expressed as the maximizer
\begin{equation}
	\Bw_\star
	= \argmax_{\mathclap{\substack{\Bw \in \mathbb{R}_+^K \colon \\ \lVert \Bw \rVert_1 = N_m \\ \forall k,k' \colon w_{k}/w_{k'} \leq e^{\epsilon} }}} \ \lVert \Bw \rVert_2^2
\end{equation}
which according to Lemma~\ref{lem:sum_of_squares} in Appendix~\ref{app:lem:sum_of_square} admits the solution (up to a permutation)
\begin{equation}
	\Bw_\star
	= N_m \frac{\begin{bmatrix} e^\epsilon & 1 & 1 & \dotsc & 1 \end{bmatrix}}{e^{\epsilon}+K-1}.
\end{equation}
The reciprocal of the squared Euclidean norm of $\Bw_\star$ equals
\begin{equation}
	\frac{1}{\lVert\Bw_\star\rVert_2^2}
	= \frac{1}{N_m^2} \frac{(e^{\epsilon}+K-1)^2}{e^{2\epsilon}+K-1}.
\end{equation}
Hence, the relaxed optimization problem considered further above can be written as
\begin{IEEEeqnarray*}{rCl}
	\min_{\BW \in \overline{\mathcal{W}}_\epsilon} \sum_{m=1}^K \frac{1}{\lVert \Bw_m \rVert_2}
	&=& \min_{\substack{(N_1,\dotsc,N_K) \in \mathbb{R}_+^K \colon \\ N_1 + \dotsc + N_K = K}} \sum_{m=1}^K \frac{1}{N_m} \frac{K-1 + e^{\epsilon}}{\sqrt{K-1 + e^{2\epsilon}}} \\
	&=& K \frac{K-1 + e^{\epsilon}}{\sqrt{K-1 + e^{2\epsilon}}}.
\end{IEEEeqnarray*}
The latter expression is also the minimum of the original optimization problem (before relaxation) and can be achieved by picking $\Bw_m$ to be the rows of the step-circulant matrix
\begin{equation}
	\BW_{\epsilon,\star} = \frac{1}{e^\epsilon+K-1}
	\begin{bmatrix}
		e^\epsilon & 1 & \hdots & 1 \\
		1 & e^\epsilon & \ddots & \vdots \\
		\vdots & \ddots & \ddots & 1 \\
		1 & \hdots & 1 & e^\epsilon
	\end{bmatrix}.
\end{equation}
as defined in \eqref{W_star}, which establishes \eqref{quod_est} and thus finalizes the proof of Lemma~\ref{lem:Phi_weighted_sum}.

\vskip 1.2in
\bibliographystyle{sample}
\bibliography{references}

\end{document}